\documentclass{article}

\usepackage[preprint]{neurips_2026}

\usepackage[utf8]{inputenc} % allow utf-8 input
\usepackage[T1]{fontenc}    % use 8-bit T1 fonts
\usepackage[colorlinks = True, linkcolor = violet, citecolor = blue!50!white]{hyperref}
\usepackage{url}            % simple URL typesetting
\usepackage{booktabs}       % professional-quality tables
\usepackage{amsfonts}       % blackboard math symbols
\usepackage{nicefrac}       % compact symbols for 1/2, etc.
\usepackage{microtype}      % microtypography
\usepackage{xcolor}         % colors
\usepackage{algorithm}
\usepackage{algorithmic}
\usepackage{wrapfig}
\usepackage{caption}
\usepackage{comment}

\usepackage{amsmath}
\usepackage{amssymb}
\usepackage{mathtools}
\usepackage{amsthm}
\usepackage{thm-restate}
\usepackage{pifont}
\usepackage{bbm}
\usepackage[capitalize,noabbrev]{cleveref}

\usepackage{colortbl}
\usepackage{tabularx}
\usepackage{tcolorbox}
\tcbuselibrary{skins}

\usepackage{tikz}
\usetikzlibrary{positioning} % add "quotes" only if needed

\tcbset{
  tab2/.style={
    enhanced,
    fonttitle=\bfseries,
    colback=white,
    colframe=gray,
    colbacktitle=white,
    coltitle=black,
    center title
  }
}

\newtcolorbox{mybox}[1]{colback=lb!5!white,colframe=lb!70!black,fonttitle=\bfseries,title=#1}
\newtcolorbox{resbox}{standard jigsaw,opacityback=0,boxrule=1pt,top=0.9ex,bottom=0.9ex,
  colbacktitle=teal!10!white,colframe=teal!70!white}

\usepackage{empheq}
\newtcbox{\mymath}[1][]{nobeforeafter, tcbox raise base, colframe=teal!60!black,
  colback=teal!10, boxrule=1pt, #1}

\theoremstyle{plain}
\newtheorem{theorem}{Theorem}[section]

\newtheorem{lemma}[theorem]{Lemma}
\newtheorem{corollary}[theorem]{Corollary}
\theoremstyle{definition}

\newtheorem{assumption}[theorem]{Assumption}

\theoremstyle{remark}

\newcommand\x{{\bf{x}}}
\newcommand\y{{\bf{y}}}
\newcommand\z{{\bf{z}}}
\newcommand\vv{{\bf{v}}}
\newcommand\kk{{\bf{k}}}

\newcommand\w{{\bf{w}}}

\newcommand\X{{\bf{X}}}
\newcommand\K{{\bf{K}}}
\newcommand\LL{{\bf{L}}}

\newcommand\I{{\bf{I}}}

\newcommand\blambda{\boldsymbol{\lambda}}

\newcommand\btheta{{\boldsymbol{\theta}}}

\newcommand\bTheta{{\boldsymbol{\Theta}}}
\newcommand\bseta{{\boldsymbol{\eta}}}

\newcommand\indicator{\mathbbm{1}}

\newcommand\normal{{\mathcal{N}}}
\newcommand\dataset{{\mathcal{D}}}
\newcommand\hilbert{{\mathcal{H}}}
\newcommand\calE{{\mathcal{E}}}
\newcommand\calS{{\mathcal{S}}}

\newcommand\calR{{\mathcal{R}}}
\newcommand\calL{{\mathcal{L}}}
\newcommand\calW{{\mathcal{W}}}
\newcommand\calX{{\mathcal{X}}}
\newcommand\calF{{\mathcal{F}}}
\newcommand\calY{{\mathcal{Y}}}

\newcommand\bigo{{\mathcal{O}}}

\newcommand\real{{\mathbb{R}}}

\newcommand\expect{{\mathbb{E}}}
\newcommand\probability{{\mathbb{P}}}

\newcommand\argmax{{\mathrm{argmax}}}
\newcommand\sharpness{{\mathrm{s}}}
\newcommand\calibration{{\mathrm{c}}}

\newcommand{\algcomment}[1]{%
  \item[]\textcolor{gray}{\# #1}%
}
\newcommand\noisevar{{\sigma^2}}

\newcommand{\inner}[2]{\langle #1, #2 \rangle}
\newcommand{\ignore}[1]{}

\definecolor{violetframe}{HTML}{7B3FD2}

\newcolumntype{Y}[1]{>{\hsize=#1\hsize\linewidth=\hsize\raggedright\arraybackslash}X}

\newcommand{\overviewref}[1]{{\color{violetframe}\bfseries\mbox{#1}}}

\newcommand{\overviewitem}[2]{%
  \noindent\overviewref{#2:}\enspace #1\par\vspace{-0.18ex}%
}

\newcommand{\overviewblock}[2]{%
  {\bfseries #1}\par
  \vspace{-0.05ex}
  {\color{violetframe}\rule{\linewidth}{0.42pt}}\par
  \vspace{0.02ex}
  \raggedright #2%
}

\usepackage[textsize=tiny]{todonotes}

\title{Online Sharp-Calibrated Bayesian Optimization}

\author{%
  Marshal Sinaga$^{1, 2}$ \thanks{Equal contribution.}
  \quad
  Julien Martinelli$^{1, 2}$ \footnotemark[1]
  \quad
  Teemu Turpeinen$^{2}$
  \quad  
  Samuel Kaski$^{1, 2, 3}$
  \\[0.5em]
$^{1}$ELLIS Institute Finland
\quad
$^{2}$Aalto University
\quad
$^{3}$University of Manchester
\\[0.5em]
\texttt{\{marshal.sinaga, julien.martinelli, teemu.e.turpeinen, samuel.kaski\}@aalto.fi}
    }
  
\begin{document}

\maketitle

\begin{abstract}
  Bayesian optimization (BO) is a widely used framework for optimizing expensive black-box functions, commonly based on Gaussian process (GP) surrogate models. Its effectiveness relies on uncertainty quantification that is both \emph{sharp} (informative) and \emph{well-calibrated} along the BO trajectory. In practice, GP kernel hyperparameters are unknown and are refit online from sequentially collected (non-i.i.d.) data, which can yield miscalibrated or overly conservative uncertainty and lies outside the fixed-kernel assumptions of standard BO regret theory. We propose \emph{Online Sharp-Calibrated Bayesian Optimization} (OSCBO),
a BO algorithm that adaptively balances GP sharpness and calibration by casting hyperparameter selection as a constrained online-learning problem. We also show that OSCBO preserves sublinear regret bounds by leveraging the theoretical guarantees of the underlying online learning algorithm. Empirically, OSCBO performs competitively across synthetic and real-world benchmarks, ranking among the strongest methods in final simple regret while maintaining robust cumulative-regret behavior.
\end{abstract}

\section{Introduction}\label{sec:introduction}

Many modern design problems reduce to optimizing an expensive black-box function under a tight evaluation budget.
Bayesian optimization (BO) addresses this setting by maintaining a probabilistic surrogate model, typically a Gaussian process (GP) with a chosen kernel, whose posterior mean and uncertainty are combined by an acquisition function to select new evaluations \citep{garnett2023bayesian}.
This uncertainty-driven sampling has enabled efficient optimization across diverse domains, including materials, biology, and robotics~\cite{chitturi2024targeted, stanton2022accelerating}.

The central theoretical lens in BO is regret, which measures how quickly the algorithm approaches the global optimum as the evaluation budget grows.
For GP-based BO, regret analyses combine high-probability GP uncertainty bounds with the exploration–exploitation behavior of policies like GP-UCB \citep{srinivas2009gaussian}.
Yet, these results almost always assume a fixed GP kernel throughout the BO loop, or equivalently, a fixed Reproducing Kernel Hilbert Space (RKHS) induced by kernel hyperparameter.
In practice, the hyperparameter is refit online (e.g., \emph{via} marginal likelihood maximization). This changes both the uncertainty estimates and the implied function class, creating a mismatch between standard BO practice and existing regret theory.

The widespread practice of refitting the GP hyperparameter at every BO iteration highlights that hyperparameter selection is a core part of the optimization loop. Since the hyperparameter shapes the posterior uncertainty, it directly determines acquisition values and thus the next evaluations.
Building on the view that calibration is central in sequential decision making~\citep{deshpande2024calibratedregressionadversaryregret}, we recast the classical exploration–exploitation principle as a tradeoff between \emph{calibration} and \emph{sharpness} of predictive uncertainty. Calibration asks whether GP confidence intervals attain their nominal coverage along the adaptive query sequence, quantified through a coverage gap, while sharpness measures confidence-interval width. Overly conservative uncertainty can remain calibrated but slow down exploitation; overly confident uncertainty is sharp yet miscalibrated and can trigger premature exploitation.
Consequently, explicitly balancing calibration and sharpness during online hyperparameter adaptation provides a principled way to avoid both miscalibration and excessive conservatism when experiments are costly.
%Consequently, explicitly balancing calibration and sharpness during online hyperparameter adaptation is essential, as both miscalibration and excessive conservatism can waste budget when experiments are costly \citep{stanton2023bayesian}.

A key complication is that BO data are non-i.i.d.: query points are chosen adaptively from past observations and the current model.
Under i.i.d. sampling, marginal-likelihood hyperparameter fitting can be interpreted as optimizing a proper scoring rule, aligning with the calibration–sharpness paradigm for probabilistic forecasts \citep{gneiting2007probabilistic}.
Under adaptive querying, this alignment can fail: the sampling distribution shifts with the algorithm, so neither standard hyperparameter fitting nor i.i.d.-based recalibration guarantees reliable uncertainty along the optimization trajectory \citep{deshpande2024calibratedregressionadversaryregret}. This motivates treating hyperparameter selection as part of the sequential decision problem, with explicit control of calibration and sharpness under adaptive data collection.

\textbf{Contributions.}
We propose Online Sharp-Calibrated Bayesian Optimization (OSCBO),
a method for adapting GP kernel hyperparameters online by trading off sharpness and calibration under adaptive, non-i.i.d. data collection.
Building on online learning with long-term constraints~\citep{castiglioni2022unifying}, OSCBO frames hyperparameter selection as a constrained online problem that promotes narrow confidence intervals while controlling cumulative coverage violations. When paired with the upper confidence bound (UCB) strategy, OSCBO retains a sublinear regret guarantee.
Our contributions are:
\begin{itemize}
    \item \textbf{Conceptual}: We reinterpret exploration--exploitation and the resulting regret guarantees through a sharpness--calibration lens.
    
    \item \textbf{Methodological}: We propose OSCBO, a principled online hyperparameter selection scheme. It actively promotes sharp uncertainty quantification while ensuring that long-run miscalibration does not exceed a specified threshold under non-i.i.d. sampling.
    
    \item \textbf{Theoretical}: We show that OSCBO achieves a sublinear regret bound by leveraging the regret guarantee of the underlying online algorithm.
    
    \item \textbf{Empirical}: We evaluate OSCBO on synthetic and real-world benchmarks, showing that it ranks among the best methods in final simple regret and remains robust in cumulative regret.
\end{itemize}
%
% \Cref{fig:gist} provides intuition for online hyperparameter refitting in BO and an overview of our theoretical contributions.

% --- colors for the roadmap box ---
\definecolor{violetframe}{RGB}{120,70,200}   % frame + rules
\definecolor{violetlight}{RGB}{155,120,220}  % right-column references

% optional: a tiny helper for right-column refs
\newcommand{\thref}[1]{\textcolor{violetlight}{#1}}

\begin{figure*}[h]
\centering
  \includegraphics[width=1\linewidth]{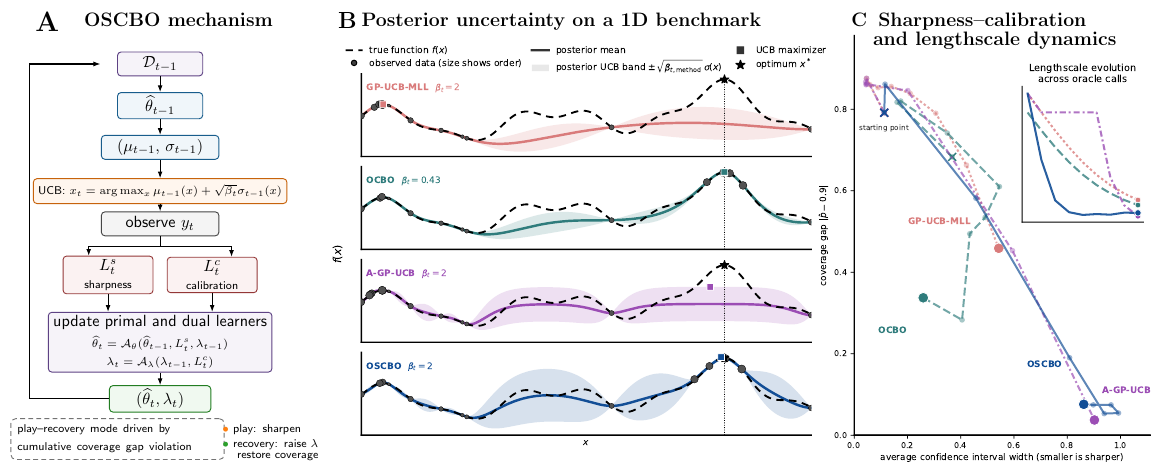}

\begin{tcolorbox}[
  enhanced,
  width=\linewidth,
  fontupper=\scriptsize,
  colback=white,
  colframe=violetframe,
  boxrule=0.8pt,
  arc=1pt,
  outer arc=1pt,
  left=0.45ex,
  right=0.45ex,
  top=0.18ex,
  bottom=0.01ex,
  boxsep=0.02ex,
]
\setlength{\tabcolsep}{0.45em}
\renewcommand{\arraystretch}{0.72}

\begin{tabularx}{\linewidth}{@{}
Y{0.95}
!{\color{violetframe}\vrule width 0.38pt}
Y{1.05}
!{\color{violetframe}\vrule width 0.38pt}
Y{1.00}
@{}}

\overviewblock{Sharpness--calibration lens on GP-UCB}{
  \overviewitem{Regret controlled }{Cor.~\ref{cor:sharpness}}
  by confidence width
  
  \overviewitem{Function coverage analogue}{Cor.~\ref{corr:calibration}}
}
&
\overviewblock{OSCBO method}{
  \overviewitem{Sharpness term}{Sec.~\ref{subsec:oscbo_losses}}
   and calibration constraint
   
  \overviewitem{Online primal--dual}{Sec.~\ref{subsec:oscbo_algo}}
   hyperparameters update
   
  \overviewitem{BO coupling and OSCBO procedure}{Alg.~\ref{alg:oscbo}}
}
&
\overviewblock{Theoretical guarantees}{
  \overviewitem{Uniform confidence}
  {Thm.~\ref{lemma:slater-condition}}
   and feasibility margin
  \overviewitem{Online-learning gap}{Lem.~\ref{lemma:convergence}}
   and violation bounds
   
  \overviewitem{OSCBO regret with GP-UCB}{Thm.~\ref{theorem:regret-bound}}
}

\end{tabularx}
\vspace{-1ex}
\end{tcolorbox}
\vspace{-1ex}
\captionsetup{skip=1pt}
% \caption{
% \textbf{Overview of OSCBO and main theoretical contributions.} (A) OSCBO updates GP hyperparameters online through a sharpness--calibration primal--dual loop coupled to GP-UCB. (B) On a 1D example, GP-UCB-MLL can be overconfident and OCBO conservative, while OSCBO maintains calibrated, sharper uncertainty for query selection. (C) OSCBO moves toward the sharpness--calibration frontier, balancing narrow intervals with empirical coverage.
% }
\caption{
\textbf{Overview of OSCBO and main theoretical contributions.}
\textbf{(A)} OSCBO updates GP hyperparameters online through a sharpness--calibration primal--dual loop coupled to GP-UCB.
\textbf{(B)} Posterior uncertainty and UCB query selection under different baselines, characterized by specific hyperparameter-adaptation rules or UCB bands.
\textbf{(C)} Different methods explore different regions of the sharpness--calibration tradeoff, measured by average confidence-interval width and empirical coverage gap, and induce distinct lengthscale evolution.
\textbf{In this example, OSCBO yields wider but better covered confidence bands than GP-UCB-MLL, is less post-hoc quantile-driven than OCBO, and avoids the highly schedule-driven lengthscale collapse of A-GP-UCB.}
}
\label{fig:gist}
\end{figure*}

\section{Related Work}

\textbf{BO with unknown hyperparameter.}
A growing line of work studies BO when GP hyperparameter are unknown and must be learned online, relaxing the fixed-kernel/RKHS assumption behind classical regret bounds.
A canonical example is the kernel \emph{lengthscale}, which governs correlation range and thus effective smoothness.
\cite{berkenkamp2019no} (A-GP-UCB) and \cite{ziomek2024bayesian} (LB-GP-UCB) propose principled lengthscale adaptation schemes with regret guarantees, designed to ensure that the induced RKHS is eventually rich enough to contain the objective.
Related results cover Matérn kernels with fixed smoothness \citep{liu2023adaptation}, and complementary approaches use meta-learning to initialize hyperparameter from prior tasks \citep{wang2018regret, feurer2015initializing}.
Our work also targets regret under iterative refitting, but via a different principle: selecting hyperparameter to manage the calibration--sharpness trade-off along the BO trajectory.

\textbf{Calibration and recalibration under adaptive BO.}
Recent work studies calibrated uncertainty for BO and, more broadly, for sequential prediction under shift and adversarially chosen data.
In BO, \cite{stanton2023bayesian} combine GP posteriors with conformal prediction to produce calibrated prediction sets under covariate shift, and conformal methods more broadly extend coverage beyond the i.i.d.\ setting to distribution shifts and adaptive data collection \citep{tibshirani2019conformal, gibbs2021adaptive, barber2023conformal, pmlr-v235-angelopoulos24a}.
Complementarily, \cite{deshpande2024online} propose an online \emph{post-hoc} recalibration rule based on a modified pinball loss, and \cite{deshpande2024calibratedregressionadversaryregret} (OCBO) provide calibration guarantees under arbitrary adversaries, including sequential data.
Sharp-calibrated Gaussian processes (SCGP) \citep{capone2023sharp} also learn GP uncertainty parameters with a sharpness--calibration objective, but are designed for calibrated regression rather than adaptive BO: they rely on a train/calibration split and target predictive-set calibration at a fixed confidence level, rather than optimizing a BO acquisition along a trajectory.
These approaches primarily target calibrated predictive uncertainty or quantile/CDF recalibration, and do not directly address the online calibration--sharpness trade-off for GP hyperparameter selection inside arbitrary BO acquisitions.
We instead adapt GP hyperparameters online to optimize sharpness while keeping long-run miscoverage under control along the BO trajectory.

\textbf{Online learning with constraints and dynamic environments.}
Hyperparameter adaptation in BO can be viewed as online learning with long-term constraints, where the learner optimizes an objective while controlling cumulative calibration violation.
The relevant guarantees are therefore optimality gaps and constraint-violation bounds, distinct from BO regret.
\cite{castiglioni2022unifying} provide best-of-both-worlds guarantees for stochastic and adversarial rewards/constraints with nonconvex black-box subroutines and a play--recovery mechanism.
Relatedly, \cite{bernasconi2024} study bandits with long-term constraints using optimistic feasibility estimates and time-varying feasible sets.

\vspace{-.2cm}
\section{Preliminaries}
\vspace{-.2cm}

Let $\calX \subset \real^d$ be a compact input space and $\calY \subset \real$ the output space. Let $f:\calX \to \calY$ be an expensive black-box function, with maximizer(s) $\x^\ast \in  \argmax_{\x \in \calX} f(\x)$.

\vspace{-.15cm}
\subsection{Bayesian Optimization with GP Surrogates}\label{sub:BO}
\vspace{-.15cm}

BO is a sequential strategy for black-box optimization that uses a probabilistic surrogate to select new evaluations \citep{garnett2023bayesian}.
At round $t \in \mathbb{N}$, we have observations
$\mathcal{D}_{t-1}=\{(\mathbf{x}_i,y_i)\}_{i=1}^{t-1}$ with
$y_i=f(\mathbf{x}_i)+\epsilon_i$. We assume that the noise process is
conditionally $R$-sub-Gaussian with respect to the history filtration
$(\mathcal{F}_t)_{t\ge0}$: $\mathbb{E}\!\left[\exp(\eta\epsilon_t)\mid\mathcal{F}_{t-1}\right]
\le \exp(\eta^2R^2/2), \forall t\ge 1,\ \forall \eta\in\mathbb{R}$.

% BO is a sequential strategy for black-box optimization that uses a probabilistic surrogate to select new evaluations \citep{garnett2023bayesian}.
%  At round $t \in \mathbb{N}$, we have observations $\dataset_{t - 1} = \{(\x_i, y_i)\}_{i=1}^{t - 1}$ with noisy evaluations $y_i = f(\x_i) + \epsilon_i$. We assume that the sequence $\{ \epsilon_i \}_{i = 1}^\infty$ is conditionally $R$-sub-Gaussian, i.e., $\expect[\exp(\eta \, \epsilon_i) \vert \calF_{t - 1}] \leq \exp(\eta^2 R^2 / 2), \, \forall i \geq 1, \forall \eta \in \real$, where $\calF_{t - 1}$ denotes a filtration generated by the random sequence $\{\x_i, \epsilon_i \}_{i = 1}^{t - 1}$.

Given $\dataset_{t - 1}$, a surrogate induces an acquisition function $\alpha:\calX \to \real$, and the next query is chosen as $\x_{t} = \argmax_{\x \in \calX}~ \alpha(\x; f_{t - 1})$. After observing $y_{t}$, we update $\dataset_{t}=\dataset_{t - 1}\cup\{(\x_{t},y_{t})\}$ and refit the surrogate.
The performance is frequently measured by the instantaneous regret $r_t = f(\x^\ast)  - f(\x_t)$ and the cumulative regret $R_T = \sum_{t=1}^T \big(f(\x^\ast)-f(\x_{t})\big)$, where $T$ denotes the total rounds.

The surrogate is chosen as a GP, i.e., a collection of random variables whose finite subsets are jointly Gaussian \citep{Rasmussen2006}. We place a zero-mean GP prior $f(\x) \sim \normal\!\big(0,\, k_{\hat{\btheta}}(\x,\x')\big)$, where the kernel $k_{\hat{\btheta}}\hspace{-.1cm}: \hspace{-.1cm}\calX \hspace{-.1cm} \times \hspace{-.1cm} \calX \hspace{-.1cm}\to \hspace{-.1cm}\real$ has hyperparameters $\hat{\btheta}\in\bTheta$. Here, $\btheta$ refers to kernel lengthscales, covering both isotropic (scalar) and Automatic Relevance Determination (ARD) cases. We assume $k_{\hat{\btheta}}(\x,\x') \le 1$ for each $\x,\x'\in\calX$ and $\hat{\btheta} \in \bTheta$. Given $\dataset_{t - 1}$ and a test point $\x$, GP predictive mean and variance are
\begin{align}\label{eq:predictive-dist}
&\mu_{t - 1}(\x;\hat{\btheta}) = \kk_{\hat{\btheta}}(\x)^\top (\K_{\hat{\btheta}}^{t - 1} + \sigma^2 \I)^{-1} \y_{t - 1}, \\
&\sigma_{t -1}^2(\x;\hat{\btheta}) = k_{\hat{\btheta}}(\x,\x) - \kk_{\hat{\btheta}}(\x)^\top (\K_{\hat{\btheta}}^{t - 1} + \sigma^2 \I)^{-1} \kk_{\hat{\btheta}}(\x),
\end{align}
where $\y_{t - 1}= [y_1, \dots y_{t - 1}]$, $\K^{t - 1}_{\hat{\btheta}}=[k_{\hat{\btheta}}(\x_i,\x_j)]_{i,j=1}^{t - 1}\in\real^{(t - 1) \times (t  - 1)}$, and $\kk_{\hat{\btheta}}(\x)=[k_{\hat{\btheta}}(\x_i,\x)]_{i=1}^{t - 1}\in\real^{t - 1}$, Here, the parameter $\sigma^2$ refers to the noise model.
In practice, $\hat{\btheta}$ is typically re-estimated online (e.g., by marginal likelihood maximization), so the posterior quantities $\mu_{t-1}(\cdot;\hat{\btheta})$ and $\sigma^2_{t-1}(\cdot;\hat{\btheta})$ depend both on the accumulated data and on the current hyperparameter estimate.
The acquisition function relies on these to query the next data point $\x_t$. A common acquisition strategy is the upper confidence bound (UCB), which selects $\x_{t}=\argmax_{\x \in \calX}\,\mu_{t - 1}(\x;\hat{\btheta})+\sqrt{\beta_t(\delta)}\,\sigma_{t - 1}(\x; \hat{\btheta})$, where $\beta_t(\delta)>0$ is an exploration parameter.

To state the later confidence and regret guarantees under online hyperparameter adaptation, we require a mild structural assumption on the admissible kernel family.
%In particular, we assume that the search set $\bTheta$ is compact and that the objective admits a uniform RKHS norm bound over this family.
%
\begin{assumption}\label{assumption:rkhs}
The hyperparameter space $\bTheta$ is compact, strictly bounded away from the origin, and has bounded diameter $D=\sup_{\btheta,\hat{\btheta}\in\bTheta}\|\btheta-\hat{\btheta}\|_\infty$.
Moreover, there exists $\btheta\in\bTheta$ such that the objective function $f$ lies in the RKHS $\hilbert_{k_{\btheta}}$, and the admissible family is chosen so that
$\sup_{\hat{\btheta}\in\bTheta}\|f\|_{k_{\hat{\btheta}}}\le B$ for some constant $B > 0$.
\end{assumption}

In order to guarantee the validity of $\Vert f \Vert_{k_{\hat{\btheta}}}$, for any $\hat{\btheta} \in \bTheta$, we consider the $\nu-$Matérn kernel family. It is known the RKHS under this family kernel is norm equivalent, i.e., given $\bTheta \subset [\btheta_{\min}, \btheta_{\max}]^d$, there exists a constant $C_\nu > 0$ depending on $\nu, \btheta_{\min}$, and $\btheta_{\max}$ such that for any $\hat{\btheta} \in \bTheta$, $\Vert f \Vert_{k_{\hat{\btheta}}} \leq C_\nu \Vert f \Vert_{k_{\btheta_{\min}}}$. \citep{kanagawa2018gaussian}. In addition, the assumption on $\bTheta$ is required to ensure the optimism principle of UCB and the regret bound of the underlying online learning scheme, both of which are essential for the regret analysis of OSCBO.

\subsection{Calibration and Sharpness under Adaptive Setting}\label{sub:prelimcalib}
\emph{Calibration} is measured through long-run coverage via conformal prediction sets. Since the inputs $\x_t$ may be selected adaptively from past observations, as in BO, the data are generally non-i.i.d. At each round $t$, the surrogate outputs a predictive distribution and constructs a prediction set using a score function $s_t:\calX\times\calY\to[0,L], C_t(\x_t)=\{y\in\calY:\; s_t(\x_t,y)\le \tau_t\}$
%where $\tau_t$ is a possibly time-varying threshold.
Define the long-run coverage
\vspace{-.1cm}
\begin{equation}
    \hat p_T := \frac{1}{T}\sum_{t=1}^T \mathbbm{1}\!\left[y_t\in C_t(\x_t)\right].
\end{equation}
The surrogate is said to be \emph{marginally calibrated} at miscoverage level $\delta\in[0,1]$ if $\hat p_T \approx 1-\delta$.

This metric alone does not guarantee informative uncertainty sets \citep{gneiting2007probabilistic}. \emph{Sharpness} is also considered, quantified by prediction-set volume. For GPs, this is determined by the posterior predictive variance, which we summarize by the cumulative log scaled predictive variance along the query sequence:
\vspace{-.1cm}
\begin{equation}
\sum_{t=1}^T \log (1 + \sigma^{-2} \sigma^2_{t-1}(\x_t; \hat{\btheta}_{t - 1})).
\end{equation}
\vspace{-.5cm}
\subsection{Online Optimization under Adaptive Setting}\label{subsec:constrained_online_learning}
Consider an online optimization problem with long-term constraints in an adaptive environment~\citep{castiglioni2022unifying}. At each round $t\in[T]$, a learner selects an action $\btheta_t\in\bTheta$, after which the environment reveals a loss $L_t^s:\bTheta\to\mathbb{R}$ and constraint violation $L_t^c:\bTheta\to\mathbb{R}$. 
Both $(L_t^s, L_t^c)$ depend on the interaction history, reflecting non-i.i.d. data collection. 

This problem assumes the existence of \emph{strategy mixtures} $\Xi$, a set of probability measures over $\bTheta$. Define the time-averaged loss $\bar L^s(\hat\btheta)=\frac{1}{T}\sum_{t=1}^T L_t^s(\hat\btheta)$, and constraints $\bar L^c(\hat\btheta)=\frac{1}{T}\sum_{t=1}^T L_t^c(\hat\btheta)$, we benchmark the performance against the constrained optimum
\begin{equation}\label{eq:opt}
\small
\mathrm{OPT}_{\bar L^s,\bar L^c}
=\inf_{\xi\in\Xi}\ \mathbb{E}_{\hat\btheta\sim\xi}\!\left[\bar L^s(\hat\btheta)\right]
\text{ s.t. }
\mathbb{E}_{\hat\btheta\sim\xi}\!\left[\bar L^c(\hat\btheta)\right]\le 0.
\end{equation}
A strict feasibility condition is assumed: there exists $\xi\in\Xi$ such that $\mathbb{E}_{\hat\btheta\sim\xi}[\bar L^c(\hat\btheta)]<0$, which ensures the constraint is satisfiable and the dual domain can be bounded~\citep{castiglioni2022unifying}.
Then, the associated Lagrangian relaxation for Equation~\ref{eq:opt} is $\mathcal{L}(\xi,\lambda)
= \mathbb{E}_{\hat\btheta\sim\xi}\![\bar L^s(\hat\btheta)] + \lambda \mathbb{E}_{\hat\btheta\sim\xi}\![\bar L^c(\hat\btheta)]$, where $\lambda\in\mathbb{R}_{+}$ is a multiplier.
This induces primal updates over $\Xi$ (or directly over $\bTheta$) and dual updates over $\lambda$.
In practice, our work is restricted to Dirac measures on $\bTheta$ and uses black-box primal and dual \emph{regret minimizers} (RMs), which guarantee a sublinear best-in-hindsight optimality gap~\citep{castiglioni2022unifying}.

% In our setting, we restrict to point-mass strategies, i.e., Dirac measures on $\bTheta$, and use black-box primal and dual \emph{regret minimizers} (RMs) to solve the resulting online problem. Their guarantees imply a sublinear best-in-hindsight optimality gap~\citep{castiglioni2022unifying}.

\section{A Sharpness--Calibration Lens on GP-UCB}
\vspace{-.2cm}
To motivate our approach, first consider the classical GP-UCB setting in which a single kernel $k_{\btheta}$ is fixed throughout the BO trajectory. In this setting, the GP-UCB confidence interval scales with $\sqrt{\beta_t(\delta)}\,\sigma_{t-1}(\x;\btheta)$. This quantity helps to bound the cumulative regret in terms of GPs' sharpness with a high probability. It also yields a long-run coverage along the queried trajectory with a high probability, which can be viewed as a probabilistic analogue of the calibration notion from \cref{sub:prelimcalib}.

\textbf{GP sharpness bounds the regret.}
A standard consequence of the GP-UCB analysis is:
\begin{restatable}{corollary}{corollarysharpness}\label{cor:sharpness}
Consider GP-UCB with fixed kernel $k_{\btheta}$. Fix $\delta\in(0,1)$ and let $\beta_t:(0,1)\to\real$ be non-decreasing. Then, with probability at least~$1-\delta$,
$R_T \le \sqrt{T\!\sum_{t=1}^T 4 \beta_t(\delta) \, \sigma^2 \, C \, \log(1 + \sigma^{-2} \, \sigma_{t-1}^2(\x_t;\btheta))}$, where $C = \sigma^{-2} / \log(1 + \sigma^{-2})$.
\end{restatable}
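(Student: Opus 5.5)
This follows the standard GP-UCB argument of \citet{srinivas2009gaussian}, in the RKHS version of Chowdhury and Gopalan. The plan has four steps: obtain a uniform confidence bound, bound each instantaneous regret by a confidence width, convert squared widths into the log-variance form using the constant $C$, and finish with Cauchy--Schwarz.

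\textbf{Step 1: uniform confidence bound.} Under Assumption~\ref{assumption:rkhs} with fixed $\btheta$, we have $\|f\|_{k_{\btheta}}\le B$, and the noise is conditionally $R$-sub-Gaussian. For a suitable non-decreasing $\beta_t(\delta)$ (of the self-normalized form $(B+R\sqrt{2(\gamma_{t-1}+1+\log(1/\delta))})^2$), the following event holds with probability at least $1-\delta$:
\[
|f(\x)-\mu_{t-1}(\x;\btheta)|\le\sqrt{\beta_t(\delta)}\,\sigma_{t-1}(\x;\btheta)\quad\text{for all } t\ge 1,\ \x\in\calX .
\]
I would take this as the defining property of $\beta_t$, since the corollary is stated for a generic non-decreasing $\beta_t$. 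All remaining steps are deterministic on this event.

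\textbf{Step 2: per-round regret.} Since $\x_t$ maximizes the UCB, optimism gives $f(\x^\ast)\le \mu_{t-1}(\x^\ast)+\sqrt{\beta_t}\sigma_{t-1}(\x^\ast)\le \mu_{t-1}(\x_t)+\sqrt{\beta_t}\sigma_{t-1}(\x_t)$. The lower confidence bound gives $f(\x_t)\ge \mu_{t-1}(\x_t)-\sqrt{\beta_t}\sigma_{t-1}(\x_t)$. Subtracting, $r_t\le 2\sqrt{\beta_t(\delta)}\,\sigma_{t-1}(\x_t;\btheta)$, so $r_t^2\le 4\beta_t(\delta)\sigma_{t-1}^2(\x_t;\btheta)$.

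\textbf{Step 3: variance-to-log conversion.} Because $k_{\btheta}\le 1$, we have $s:=\sigma^{-2}\sigma_{t-1}^2(\x_t)\in[0,\sigma^{-2}]$. The map $s\mapsto s/\log(1+s)$ is increasing, so $s\le \frac{\sigma^{-2}}{\log(1+\sigma^{-2})}\log(1+s)=C\log(1+s)$. Multiplying by $\sigma^2$ gives $\sigma_{t-1}^2(\x_t)\le \sigma^2 C\log(1+\sigma^{-2}\sigma_{t-1}^2(\x_t))$, and hence
\[
r_t^2\le 4\beta_t(\delta)\,\sigma^2 C\log\!\big(1+\sigma^{-2}\sigma_{t-1}^2(\x_t;\btheta)\big).
\]

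\textbf{Step 4: Cauchy--Schwarz.} We have $R_T=\sum_t r_t\le\sqrt{T\sum_t r_t^2}$. Substituting the bound from Step 3 yields exactly the stated inequality. The monotonicity of $\beta_t$ is not needed at this point; in the standard form one would use it to pull out $\beta_T$.

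\textbf{Main obstacle.} The only nontrivial ingredient is the uniform-in-time confidence bound of Step 1. It requires the self-normalized martingale concentration for adaptively chosen $\x_t$. I would import it directly from Chowdhury and Gopalan (2017, Thm.~2) rather than reprove it. The rest is elementary, and the only care needed is checking the monotonicity of $s/\log(1+s)$ and the range $s\le\sigma^{-2}$ implied by $k\le1$.
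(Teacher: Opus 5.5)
Your proposal is correct and follows the same route as the paper: the confidence event gives $r_t\le 2\sqrt{\beta_t(\delta)}\,\sigma_{t-1}(\x_t;\btheta)$, then the conversion $s\le C\log(1+s)$ (the step inside Srinivas et al.'s Lemma~5.4, which the paper cites), then Cauchy--Schwarz. The differences are only cosmetic: you take the uniform confidence bound from Chowdhury--Gopalan (Theorem~2), which fits the sub-Gaussian RKHS setting more directly than the paper's citation of Srinivas et al.'s Lemma~5.2/Theorem~6, and you keep $\beta_t$ inside the sum, matching the statement exactly, whereas the paper's final display switches to $\beta_T$.
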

The proof follows from the confidence bound in \cite[Lemma~5.2 \& Theorem~6]{srinivas2009gaussian}, together with Cauchy--Schwarz (\cref{proof:sharpness}). The corollary tells that under a fixed kernel, the cumulative regret $R_T$ is controlled by the same posterior uncertainty that determines the confidence interval.

\textbf{Confidence interval bounds as a probabilistic long-run coverage.}
To connect with \cref{sub:prelimcalib}, define the score and prediction set at the function level, with the associated long-run coverage:
\vspace{-.1cm}
\[
s_t^f(\x,z)\!=\!\frac{|z-\mu_{t-1}(\x;\btheta)|}{\sqrt{\beta_t(\delta)}\,\sigma_{t-1}(\x;\btheta)},
\;
C_t^f(\x)\!=\!\{z\in\real:\ s_t^f(\x,z)\le 1\},
\;
\hat p_T^f\!=\!\frac{1}{T}\sum_{t=1}^T \indicator\!\left[f(\x_t)\in C_t^f(\x_t)\right].
\]
\vspace{-.05cm}
This is the function-level analogue ($f(\x_t)$) of the observation-level $(y_t)$ notion in \cref{sub:prelimcalib}.% it concerns the function value $f(\x_t)$ rather than the noisy observations $y_t$.
%Unlike \cref{sub:prelimcalib}, this is not observation-level calibration: it concerns the latent value $f(\x_t)$ rather than the noisy observation $y_t$.
%
\begin{restatable}{corollary}{corollarycalibration}\label{corr:calibration}
For any $\delta\in(0,1)$ and $\beta_t(\delta)$ as in \cref{cor:sharpness}, running GP-UCB with fixed kernel for $T$ rounds satisfies $\probability\!\left(\hat p_T^f \ge 1-\delta\right)\ge 1-\delta.$
\end{restatable}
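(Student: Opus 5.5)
The plan is to reduce the statement to the uniform-in-time confidence bound that already underlies \cref{cor:sharpness}, namely \cite[Theorem~6]{srinivas2009gaussian}. With $\beta_t(\delta)$ chosen as in \cref{cor:sharpness} (the RKHS choice $\beta_t = 2B^2 + 300\gamma_t\log^3(t/\delta)$, or the self-normalized variant), that result gives a single event $\mathcal{E}_\delta$ with $\probability(\mathcal{E}_\delta)\ge 1-\delta$ on which
\[
|f(\x)-\mu_{t-1}(\x;\btheta)|\le \sqrt{\beta_t(\delta)}\,\sigma_{t-1}(\x;\btheta)\quad\text{for all }\x\in\calX,\ t\ge 1 .
\]
The key point is that this event is simultaneous over all $t$ and all $\x$. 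It therefore holds in particular along the adaptively chosen trajectory $\x_1,\x_2,\dots$, even though each $\x_t$ is $\calF_{t-1}$-measurable and depends on past noise. This is exactly why the non-i.i.d.\ querying causes no difficulty here.

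Next I would translate the event into the score notation. On $\mathcal{E}_\delta$, for every $t\le T$ we have $s_t^f(\x_t,f(\x_t))\le 1$, i.e.\ $f(\x_t)\in C_t^f(\x_t)$. When $\sigma_{t-1}(\x_t;\btheta)=0$ the score is ill-defined, so I would adopt the convention $0/0=0$. This is consistent, because on $\mathcal{E}_\delta$ zero variance forces $f(\x_t)=\mu_{t-1}(\x_t;\btheta)$.

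Summing the indicators gives $\hat p_T^f=1$ on $\mathcal{E}_\delta$, and hence $\hat p_T^f\ge 1-\delta$. Then $\probability(\hat p_T^f\ge 1-\delta)\ge\probability(\mathcal{E}_\delta)\ge 1-\delta$. The result is actually stronger than stated: full coverage holds with probability at least $1-\delta$.

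The only real obstacle is making sure the cited confidence bound is uniform in both $t$ and $\x$ and valid for adaptively chosen inputs under conditionally $R$-sub-Gaussian noise. I would check this against the RKHS/martingale version (Theorem~6 of Srinivas et al., or Chowdhury--Gopalan's self-normalized bound), which is proved for arbitrary predictable query sequences. If a version uniform only over a finite discretization were used instead, I would either restrict to the queried points, which suffices here, or take a union bound over $t$ with $\delta_t=6\delta/(\pi^2t^2)$. That union bound is already absorbed into $\beta_t(\delta)$ in the standard statement.
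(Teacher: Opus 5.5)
Your proposal is correct and follows essentially the same route as the paper: both invoke the confidence event of \cite[Theorem~6]{srinivas2009gaussian}, which holds simultaneously over all $t$ and $\x$ with probability at least $1-\delta$. On that event every indicator $\indicator[f(\x_t)\in C_t^f(\x_t)]$ equals one, so $\hat p_T^f=1\ge 1-\delta$, and the claim follows by monotonicity of probability. Your extra remarks on adaptively chosen queries and the zero-variance convention are harmless refinements that the paper's proof does not spell out.
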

%
% This follows directly from the uniform confidence event in GP-UCB~\cite[Lemma~5.2]{srinivas2009gaussian}. Hence, in the fixed-kernel setting, the same GP intervals both encode sharpness through their width and provide a conservative latent-function coverage guarantee along the queried trajectory.
This follows directly from the uniform confidence interval event in GP-UCB~\cite[Lemma~5.2]{srinivas2009gaussian}, which implies $f(\x_t)\in C_t^f(\x_t)$ for all queried points simultaneously with probability at least $1-\delta$. Of note, the same event also holds at $\x^\ast$, yielding the optimism step used in the GP-UCB regret analysis.

Taken together, \cref{cor:sharpness,corr:calibration} conveys that, under a fixed kernel, the GP-UCB confidence interval links probabilistic sharpness, calibration, and cumulative regret bound, motivating OSCBO. 
When hyperparameters are selected online, however, the BO trajectory is no longer governed by a single fixed-kernel confidence event, and this link is no longer automatic. The next section introduces explicit sharpness--calibration control during hyperparameter selection.

\vspace{-.2cm}
\section{Online Sharp-Calibrated BO}\label{sec:oscbo}
\vspace{-.2cm}
% We now present \emph{Online Sharp-Calibrated Bayesian Optimization} (OSCBO), which treats the kernel hyperparameter selection as a trade-off between the \emph{sharpness} and long-term \emph{calibration} violations. For each BO round, two pieces of information are extracted from the newly queried point: a prediction set that embodies sharpness and a miscoverage gap that contributes to the long-term calibration violation. \Cref{subsec:oscbo_losses} frames sharpness as a reward and miscoverage gap as a constraint on the GP kernel hyperparameter. \Cref{subsec:oscbo_algo} describes an online-learning framework used to update the hyperparameter sequentially, minimizing sharpness while maintaining the long-term calibration violation not exceeding the threshold. Lastly, \cref{subsec:oscbo_bo} integrates the learned hyperparameter into a BO loop, yielding OSCBO.

We present \emph{Online Sharp-Calibrated Bayesian Optimization} (OSCBO), which treats GP hyperparameter selection as a constrained online optimization problem, balancing sharpness against long-run calibration violations. \Cref{subsec:oscbo_losses} defines the per-round sharpness loss and calibration constraint extracted from each BO query. \Cref{subsec:oscbo_algo} describes the resulting primal--dual online update and the play--recovery mechanism. \Cref{subsec:oscbo_bo} couples this update with the BO loop, yielding OSCBO.

\vspace{-.2cm}
\subsection{Sharpness Loss and Calibration Constraint}\label{subsec:oscbo_losses}
\vspace{-.2cm}

At round $t-1$, OSCBO chooses $\hat\btheta_{t-1}\in\bTheta$. After selecting $\x_t$ and observing $y_t$, it extracts a sharpness loss $L_t^s(\hat\btheta)$ and a calibration constraint $L_t^c(\hat\btheta)$:
% At round $t - 1$, OSCBO selects GP hyperparameter $\hat\btheta_{t - 1} \in\bTheta$ before observing the next outcome. After choosing the next evaluation point $\x_{t}$, the observation $y_{t}$ becomes available. This query pair provides two pieces of information: a \emph{sharpness reward} $L_t^s(\hat\btheta)\in[0,1]$, and a \emph{calibration constraint} $L_t^c(\hat\btheta)\in[-1,1]$,
% defined as
%
\begin{align}
L_t^{s}(\hat\btheta_{t - 1}) 
&= 1 /  \log(1 + \sigma^{-2})   \log(1 + \sigma^{-2} \sigma^2_{t - 1}(\x_{t}; \hat{\btheta}_{t - 1})), \label{eq:sharpness-reward}
\\
L_t^{c}(\hat\btheta_{t - 1}) 
&= |y_{t}-\mu_{t - 1}\!(\x_{t};\hat\btheta_{t-1})|^p
     \, / \left( \beta^{1/2}_t(\delta)\, \sqrt{\sigma^2_{t - 1}\!(\x_{t};\hat\btheta_{t - 1}) + \sigma^{2} } \right)^p - 1\label{eq:calibration-constraint},
\end{align}
where $p \in \{1, 2\}$.
%denotes the polynomial degree.
We quantify the sharpness using the logarithm of scaled predictive variance to facilitate the regret analysis later. The constraint $L_t^{c}(\hat\btheta_{t - 1})\le 0$ is equivalent to the event that $y_{t}$ falls inside the GP confidence interval at $\x_{t}$ under $\hat\btheta_{t - 1}$, implying a calibrated GP model (\cref{sub:prelimcalib}).

\subsection{Online Learning Framework for Hyperparameter Selection}\label{subsec:oscbo_algo}

\textbf{Objective and performance criteria.}
Given the per-round loss and constraint feedback in \cref{subsec:oscbo_losses}, the goal is to select hyperparameters $\hat\btheta_1,\dots,\hat\btheta_T$ that achieve a minimum cumulative loss and simultaneously keeping the long-term constraint violation small. While \cite{castiglioni2022unifying} measures the performance against $\mathrm{OPT}_{\bar L^{s},\bar L^{c}}$ defined in \cref{eq:opt}, our interests lies in the optimality gap and long-term constraint violations w.r.t. the true hyperparameter $\btheta$:
\vspace{-.15cm}
\begin{equation}
\hat G_T
=
% \sum_{t=1}^T L_t^s(\hat\btheta_{t-1})
% -
% \sum_{t=1}^T L_t^s(\btheta),
\sum_{t=1}^T \left(L_t^s(\hat\btheta_{t-1})
- L_t^s(\btheta)\right),
\qquad
\hat V_T
=
\sum_{t=1}^T L_t^c(\hat\btheta_{t-1}).
\end{equation}
\vspace{-.05cm}
% By definition of $\hat G_T$, the cumulative reward achieved by OSCBO is directly related to that of the true kernel. This relation is the starting point of the regret analysis in Section~\ref{sec:theory}.
% $\hat{G}_T$ controls how much larger OSCBO's cumulative sharpness score can be than that of the true kernel, 
$\hat{G}_T$ bounds OSCBO's excess cumulative sharpness relative to the true kernel.
This relation is used in Section~\ref{subsec:theory_bo} to reduce the BO regret analysis to the maximum information gain of the true kernel.
% Assuming $G_T$ is bounded, we can express the cumulative reward $\sum_{t = 1}^T L_t^s(\hat{\btheta}_{t - 1})$ achieved by OSCBO in terms of the cumulative reward $\sum_{t = 1}^T L_t^s(\btheta)$ yielded by the true kernel. Furthermore, by the definition of $L_t^s$, we later show that $\sum_{t = 1}^T L_t^s(\btheta)$ can be bounded by the maximum information gain, providing a clear path to a sublinear regret bound for OSCBO.
% % Since $L_t^s$ is defined from the predictive variance, the cumulative reward under $\btheta$ can in turn be bounded by the maximum information gain, leading to the regret bound for OSCBO.

\textbf{Lagrangian game and play--recovery.}
Following the best-of-both-worlds constrained online-learning framework of \cite{castiglioni2022unifying}, OSCBO casts hyperparameter selection as an online game with per-round loss $L_t^s$ and constraint $L_t^c$. At each round, a \emph{primal} RM selects a hyperparameter, while a \emph{dual} RM selects a multiplier $\lambda_{t-1}\in\mathbb{R}_+$ that penalizes calibration violations through Lagrangian relaxation (\cref{subsec:constrained_online_learning}). To control long-run calibration violations, OSCBO alternates between play and recovery phases. In the \emph{play phase}, the primal RM optimizes sharpness while accounting for violations through $\lambda_{t-1}$. Once the cumulative calibration violation exceeds a threshold, OSCBO switches to the \emph{recovery phase}, prioritizing the miscoverage term $L_t^c$ to offset past violations.

\textbf{Feasibility margin and exploration scale.}
The play--recovery mechanism relies on a feasibility margin quantifying how strongly the calibration constraint can be satisfied on average:
\vspace{-.1cm}
\begin{equation}
\rho := \sup_{\hat{\btheta}\in\bTheta}\ \frac{1}{T}\sum_{t=1}^T -L_t^c(\hat\btheta).
\label{eq:margin}
\end{equation}
\vspace{-.1cm}
A positive $\rho$ is the strict-feasibility condition that permits a bounded dual domain in constrained online learning (Section~\ref{subsec:constrained_online_learning}). Since $\rho$ depends on the realized trajectory and is generally unknown, OSCBO is run with a supplied lower bound $\hat\rho\in(0,\rho]$. This lower bound determines the truncated dual domain and the play--recovery threshold in Algorithm~\ref{alg:oscbo} through $\tilde\rho=\max\{\hat\rho/2,\,T^{-1/4}\}$. The exploration scale $\beta_t$ appears both in the UCB acquisition and in the calibration constraint. Valid choices of $\hat\rho$ and $\beta_t$ are deferred to the theoretical analysis (Section~\ref{sec:theory}).
\begin{wrapfigure}[18]{r}{0.52\textwidth}
\vspace{-1.1\baselineskip}
\footnotesize
\begin{minipage}{0.52\textwidth}
\hrule
\vspace{2pt}
\refstepcounter{algorithm}
\noindent\textbf{Algorithm~\thealgorithm} \texttt{OSCBO}\label{alg:oscbo}
\vspace{2pt}
\hrule
\vspace{4pt}

\begin{algorithmic}[1]
\STATE {\bfseries Input:} $\dataset_0$, $T$, $\delta$, $\hat\rho$
\STATE $\tilde\rho \leftarrow \max\{\hat\rho/2,\, T^{-1/4}\}$,\quad $\eta \leftarrow \delta/3$,\quad $t \leftarrow 1$
\STATE $V_t \leftarrow 0$
\algcomment{--- Play phase initialization ---}
\STATE $\mathcal{R}^P \leftarrow \texttt{INIT}^P(\bTheta,\ [-1/\tilde\rho,\ 1+1/\tilde\rho],\ \eta)$
\STATE $\mathcal{R}^D \leftarrow \texttt{INIT}^D(\calS_{\tilde\rho},\ [-1/\tilde\rho,\ 1/\tilde\rho],\ 0)$
\STATE $v \leftarrow 1$
\WHILE{$t \le T$}
    \IF{$V_t > (T-t)\tilde\rho + M_{\tilde\rho}-1$}
        \algcomment{--- Switch to recovery phase ---}
        \STATE $\mathcal{R}^P \leftarrow \texttt{INIT}^P(\bTheta,\ [-1,1],\ \eta)$
        \STATE $\mathcal{R}^D \leftarrow \texttt{INIT}^D(\Delta,\ [-1,1],\ 0)$
        %\STATE $v \leftarrow 0$
    \ENDIF
    \STATE $\hat\btheta_{t-1} \leftarrow \mathcal{R}^P.\texttt{NEXTELEMENT}()$
    \STATE $\blambda_{t-1} \leftarrow \mathcal{R}^D.\texttt{NEXTELEMENT}()$
    \STATE $\x_t \leftarrow \argmax_{\x\in\calX}\alpha(\x;\hat\btheta_{t-1})$
    \STATE Query $y_t$ and set $\dataset_t \leftarrow \dataset_{t-1}\cup\{(\x_t,y_t)\}$
    \STATE $\texttt{RECALIBRATE}(\mathcal{R}^P,\mathcal{R}^D)$ %,v)$
    \STATE $V_{t+1} \leftarrow V_t + L_t^c(\hat\btheta_{t-1})$
    \STATE $t \leftarrow t + 1$
\ENDWHILE
\end{algorithmic}

\vspace{2pt}
\hrule
\end{minipage}
\vspace{-0.8\baselineskip}
\end{wrapfigure}

\textbf{Regret minimizers.}
OSCBO treats primal and dual learners as black-box RMs (\cref{subsec:constrained_online_learning}, \cref{app:rm_interface}), instantiated via \texttt{INIT} and alternating \texttt{NEXTELEMENT}/\texttt{OBSERVEUTILITY} updates.
Our implementation uses online mirror descent with a negative-entropy regularizer for dual RM; primal RM uses FTPL to handle non-convex $\bTheta$ (\cref{app:ftpl,app:omd}).

\subsection{BO Coupling and Feedback Loop}\label{subsec:oscbo_bo}
\cref{alg:oscbo} gives the full OSCBO procedure and \cref{alg:recalibrate} the corresponding Lagrangian-game update. At round $t-1$, the primal RM outputs $\hat\btheta_{t-1}$, which together with $\dataset_{t-1}$ defines the GP posterior $(\mu_{t-1}(\cdot;\hat\btheta_{t-1}),\sigma_{t-1}(\cdot;\hat\btheta_{t-1}))$ used by the acquisition to select $\x_t$. After observing $y_t$ and updating $\dataset_t$, OSCBO evaluates the realized loss and constraint feedback $L_t^s(\hat\btheta_{t-1})$ and $L_t^c(\hat\btheta_{t-1})$ and passes them to the primal and dual RMs via \texttt{RECALIBRATE}. 
The play–recovery threshold depends on the play-phase violation budget $M_{\tilde\rho}$, its precise definition is deferred to Appendix~\ref{app:rm_interface}.
Our theory specializes $\alpha$ to UCB, but OSCBO is compatible with any GP-based acquisition.

\vspace{-.2cm}
\section{Theoretical Analysis}\label{sec:theory}
\vspace{-.1cm}

This section establishes the guarantees of OSCBO in three steps. First, it justifies the two analytical inputs used by the algorithm: the exploration scale $\beta_t$ and the lower bound $\hat\rho$ on the feasibility margin $\rho$. Second, it bounds the online-learning optimality gap and cumulative calibration violation. Third, it combines these ingredients with the GP-UCB proof template to obtain a BO regret bound~\cite{srinivas2009gaussian}.

\subsection{Uniform Confidence and Feasibility Margin}\label{subsec:theory_confidence}

Let $d(\btheta,\hat{\btheta}) \hspace{-.05cm}:=\hspace{-.05cm}
\sup_{\x,\x'}\bigl|k_{\btheta}(\x,\x')-k_{\hat{\btheta}}(\x,\x')\bigr|$, and let $\mathcal N_\varepsilon(\bTheta,d)$ denote the $\varepsilon$-covering number of $\bTheta$ under $d$. For a fixed $\hat\btheta\in\bTheta$, let $\gamma_t(\hat\btheta)$ be the maximum information gain after $t$ evaluations under $k_{\hat\btheta}$:
\vspace{-.2cm}
\begin{equation}
\gamma_t(\hat\btheta)
:=
\max_{A\subset\calX:\, |A|=t}
\frac{1}{2}\log\det\!\left(\I+\sigma^{-2}\K_A^{\hat\btheta}\right),
\label{eq:gamma}
\end{equation}
and write $\Gamma_{t-1}\hspace{-.05cm}:=\hspace{-.05cm}\max_{\hat{\btheta}\in\bTheta}\gamma_{t-1}(\hat{\btheta})$.
The next theorem specifies $\beta_t$ so that the GP confidence bound holds uniformly over $\bTheta$ and justifies a valid lower bound $\hat\rho$ on the feasibility margin (Equation~\ref{eq:margin}).

\begin{restatable}{theorem}{lemmaslater}\label{lemma:slater-condition}
Assume \cref{assumption:rkhs} and let $\rho$ be the feasibility margin  (\cref{subsec:oscbo_algo}). Let the continuous domain be discretized by a grid of size $\varepsilon_T = T^{-4}$ and define
$\beta_t^{1/2}(\delta)
=
B
+
R\sqrt{2\!\left(\Gamma_{t-1}+1+\log\! \left(\frac{6\mathcal N_{\varepsilon_T}(\bTheta,d)}{\delta} \right) \right)}$.
Then, with a probability of at least $1 - \delta/2$, the following safe environment conditions hold simultaneously:
\vspace{-.1cm}
\begin{itemize}
\item The feasibility margin satisfies  $\rho \geq \hat{\rho} > 0$.
\item For $\forall t\in[T],\ \forall \x\in\calX,\ \forall \hat{\btheta}\in\bTheta,$ it holds that \[
|f(\x)-\mu_{t-1}(\x;\hat{\btheta})|
\le
\beta_t^{1/2}(\delta)\,\sigma_{t-1}(\x;\hat{\btheta})
+
\sqrt{L_{\sigma^2}\varepsilon_T} + L_{\mu_{t-1}}\varepsilon_T,
\]
\end{itemize}
\end{restatable}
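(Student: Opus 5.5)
Both bullets come from one high-probability event, so I would proceed in three steps: (i) a uniform-in-$\bTheta$ confidence bound on a finite net, (ii) a Lipschitz extension to all of $\bTheta$ and $\calX$, and (iii) a derivation of the feasibility margin from the same event plus a martingale bound on the noise. The probability budget is split as $\delta/6$ for each of three events: the confidence events on the net, the noise concentration used for $\rho$, and one spare used in (iii). This accounts for the factor $6\mathcal N_{\varepsilon_T}(\bTheta,d)/\delta$ in $\beta_t$ and gives total failure probability $\delta/2$ by a union bound.

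\textbf{Step 1 (net over $\bTheta$).} Let $\bTheta_{\varepsilon_T}$ be an $\varepsilon_T$-cover of $\bTheta$ under $d$, of size $\mathcal N_{\varepsilon_T}(\bTheta,d)$. Fix $\hat\btheta$ in the net. By \cref{assumption:rkhs}, $\|f\|_{k_{\hat\btheta}}\le B$. The self-normalized RKHS concentration bound (Chowdhury--Gopalan / Abbasi-Yadkori) gives, with probability at least $1-\delta'$ and simultaneously for all $t$ and $\x$,
\[
|f(\x)-\mu_{t-1}(\x;\hat\btheta)|\le\Bigl(B+R\sqrt{2(\gamma_{t-1}(\hat\btheta)+1+\log(1/\delta'))}\Bigr)\sigma_{t-1}(\x;\hat\btheta).
\]
This bound is valid for adaptively chosen $\x_t$ because it only requires the noise to be conditionally sub-Gaussian with respect to $\calF_t$. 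The queries may depend on $\hat\btheta_{t-1}$, but the bound is applied separately for each fixed net element. Taking $\delta'=\delta/(6\mathcal N_{\varepsilon_T})$, bounding $\gamma_{t-1}(\hat\btheta)\le\Gamma_{t-1}$, and applying a union bound over the net yields the stated $\beta_t^{1/2}(\delta)$ uniformly over the net.

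\textbf{Step 2 (extension to all of $\bTheta$).} For arbitrary $\hat\btheta$, choose a net point $\btheta'$ with $d(\hat\btheta,\btheta')\le\varepsilon_T$. Because $\mu_{t-1}$ and $\sigma^2_{t-1}$ are built from finitely many kernel evaluations, they are Lipschitz in the kernel entries under $d$, with constants $L_{\mu_{t-1}}$ and $L_{\sigma^2}$. This uses that $(\K+\sigma^2\I)^{-1}$ has operator norm at most $\sigma^{-2}$. Then $|\sigma_{t-1}(\x;\hat\btheta)-\sigma_{t-1}(\x;\btheta')|\le\sqrt{|\sigma^2-\sigma'^2|}\le\sqrt{L_{\sigma^2}\varepsilon_T}$. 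Inserting this and the mean perturbation into the triangle inequality gives the second bullet; the factor $\beta_t^{1/2}$ multiplying the $\sigma$-perturbation can be absorbed into the constant $L_{\sigma^2}$. Here "grid over the continuous domain" is read as a cover in the $d$ metric. The $\x$-uniformity is already supplied by the RKHS bound, so no discretization of $\calX$ is needed.

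\textbf{Step 3 (feasibility margin).} I would evaluate \cref{eq:margin} at the true $\btheta$. Write $y_t-\mu_{t-1}=(f(\x_t)-\mu_{t-1})+\epsilon_t$. For $p=2$, by Step 1 and $(a+b)^2\le 2a^2+2b^2$,
\[
L_t^c(\btheta)\le\frac{2\beta_t\sigma_{t-1}^2+2\epsilon_t^2}{\beta_t(\sigma_{t-1}^2+\sigma^2)}-1 .
\]
Averaging over $t$, a sub-Gaussian martingale/Bernstein bound gives $\frac1T\sum_t\epsilon_t^2\le R^2+O(\sqrt{\log(1/\delta)/T})$ with probability $1-\delta/6$. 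Since $\beta_t$ grows like $\log(\mathcal N/\delta)$ and $\mathcal N_{T^{-4}}$ is polynomial in $T$, the term $\epsilon_t^2/\beta_t$ becomes small. The $\sigma_{t-1}^2$ term is handled through the ratio $\sigma^2_{t-1}/(\sigma^2_{t-1}+\sigma^2)$. Together these give $\frac1T\sum_t -L_t^c(\btheta)\ge c>0$ for an explicit $c$ depending on $B,R,\sigma$, and one sets $\hat\rho:=c$, so $\rho\ge\hat\rho$ because $\rho$ is a supremum over $\bTheta$.

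\textbf{Main obstacle.} I expect Step 3 to be the hardest. The factor 2 in $2\beta_t\sigma^2_{t-1}/(\beta_t(\sigma^2_{t-1}+\sigma^2))$ can exceed 1 when $\sigma_{t-1}^2\gg\sigma^2$, so the naive inequality does not give a negative average. I would first try to avoid $(a+b)^2\le 2a^2+2b^2$ by using the $p=1$ form together with $|a+b|\le|a|+|b|$ and $\sqrt{u}+\sqrt{v}\le\sqrt{2(u+v)}$. Failing that, I would exploit that $\sum_t\sigma^2_{t-1}(\x_t;\btheta)=O(\Gamma_T)$ is sublinear, so the $\sigma_{t-1}^2$ contributions vanish on average and only the noise part $R^2/\beta_t$ remains, which is $\ll1$.
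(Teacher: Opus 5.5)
Your Steps 1 and 2 follow the paper's argument: Chowdhury--Gopalan on a $d$-cover of $\bTheta$ at level $\delta/(6\mathcal N_{\varepsilon_T})$, then a Lipschitz extension via $\|(\K+\sigma^2\I)^{-1}\|_2\le\sigma^{-2}$ and $|\sqrt a-\sqrt b|\le\sqrt{|a-b|}$. The paper's proof also produces $\beta_t^{1/2}\sqrt{L_{\sigma^2}\varepsilon_T}$, so your absorption remark matches it.

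\textbf{The gap is in Step 3.} As you note, the pointwise bound from $(a+b)^2\le 2a^2+2b^2$ only gives $L_t^c(\btheta)\lesssim 1$ on rounds with $\sigma_{t-1}^2\gg\sigma^2$. Your first fallback does not repair this. The inequality $|\Delta_t|+|\epsilon_t|\le\sqrt{2(\Delta_t^2+\epsilon_t^2)}$ is just the square root of the same bound, so the $p=1$ ratio tends to $\sqrt2$ on those rounds. Even the bare triangle inequality leaves a ratio of about $1+|\epsilon_t|/(\beta_t^{1/2}\sigma_{t-1})$.

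The idea the paper uses is to take the conditional expectation \emph{before} bounding. Since $\Delta_t=f(\x_t)-\mu_{t-1}(\x_t;\hat\btheta)$ is predictable, the cross term vanishes:
$\expect_t[(\Delta_t+\epsilon_t)^2]=\Delta_t^2+\expect_t[\epsilon_t^2]\le\beta_t\sigma_{t-1}^2+R^2$ on the confidence event. With $\sigma^2=R^2$ and $\beta_t\ge2$, this gives a per-round drift $\expect_t[L_t^c]\le-(\beta_t-1)\sigma^2/(\beta_t(\sigma_{t-1}^2+\sigma^2))<0$ up to discretization error, uniformly in $\sigma_{t-1}$; for $p=1$ one uses Jensen. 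The paper then concentrates $L_t^c-\expect_t[L_t^c]$, including the $2\Delta_t\epsilon_t$ term, with a stopped-martingale Azuma/Bernstein bound.

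Note also that you apply Step 1 at the true $\btheta$, which need not be a net point. Either add it to the net, or go through Step 2. In the latter case $L_{\mu_{t-1}}\propto\|\y_{t-1}\|_2$ is unbounded, and you need an event like the paper's $E_{\mathrm{Y}}$ ($|\epsilon_t|\le R\sqrt{2\log(12T/\delta)}$ for all $t$). That is where the paper spends your ``spare'' $\delta/6$.

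\textbf{Your second fallback can be completed into a correct proof by a genuinely different route.} On the confidence event at $\btheta$,
$2\Delta_t^2/(\beta_t(\sigma_{t-1}^2+\sigma^2))\le 2\sigma_{t-1}^2(\x_t;\btheta)/\sigma^2$.
Moreover, $\sum_t\sigma_{t-1}^2(\x_t;\btheta)\le 2\gamma_T(\btheta)/\log(1+\sigma^{-2})$ holds for any query sequence, by Lemma 5.4 of \cite{srinivas2009gaussian} applied with the fixed kernel $k_{\btheta}$. So that part of the average is $O(\gamma_T(\btheta)/(\sigma^2T))$.

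The noise part, however, is $\frac{2}{T}\sum_t\epsilon_t^2/(\beta_t(\sigma_{t-1}^2+\sigma^2))\lesssim 2R^2/(\beta_1\sigma^2)$, not $R^2/\beta_t$. It falls below $1-\hat\rho$ only if $\beta_1\sigma^2\gg R^2$. Under the paper's convention $\sigma^2=R^2$ this holds, since the term is then at most $1/(1+\log(6\mathcal N_{\varepsilon_T}/\delta))$.

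Compared with the paper, this route:
\begin{itemize}
\item needs $\gamma_T(\btheta)=o(T)$ and large $T$;
\item yields only an averaged margin rather than a per-round drift;
\item requires concentration of $\sum_t\epsilon_t^2$ alone, with no cross term and no stopping time;
\item yields a margin near $1$ for large $T$, since most rounds have small $\sigma_{t-1}$, whereas the worst-case per-round drift is only of order $\sigma^2/(1+\sigma^2)$.
\end{itemize}
Commit to this version, with the corrected noise term, or to the paper's conditional-expectation step, and drop the first fallback.
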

\vspace{-.2cm}
\textit{Sketch of proof.}
Cover $\bTheta$ by an $\varepsilon$-net, apply the kernelized bandit confidence bound of \cite[Theorem~2]{chowdhury2017kernelized} on the net points, and extend it to all $\hat{\btheta}\in\bTheta$ using Lipschitz continuity of the posterior mean and variance under $d$. The lower bound $\hat\rho\le \rho$ then follows by applying this confidence event to the constraint process and controlling the resulting martingale fluctuations.  Details in \cref{app:slater-condition}.

\vspace{-.15cm}
\subsection{Online-Learning Guarantees for Sharpness and Calibration}\label{subsec:theory_online}
\vspace{-.1cm}

Assuming the high-probability event of \cref{lemma:slater-condition} holds, the constrained online-learning analysis yields sublinear bounds on the sharpness optimality gap and cumulative calibration violation.
\begin{restatable}{lemma}{lemmaconvergence}\label{lemma:convergence}
Let $\btheta$ and $\hat{V}_T$ be the hyperparameter of the true kernel $k_\btheta$ and cumulative calibration-constraint violation induced by the per-round functions $L_t^s$ and $L_t^c$ (\cref{subsec:oscbo_losses}). Let the primal and dual RMs be realized by the FTPL and OMD algorithm, respectively.
On the safe environment events from \cref{lemma:slater-condition}, \cref{alg:oscbo} guarantees that with probability at least $1-\tfrac{1}{2}\delta$,
\begin{align}
\small
G_T \le \tilde{\bigo}(\sqrt{T}) + R_T^P + R_T^D, & & \hat{V}_T \leq \rho\left(  \sum_{t = 1}^T L_t^s(\btheta)  + \tilde{\bigo}(\sqrt{T})  + R_T^P + R_T^D \right),
\end{align}
\vspace{-.1cm}
where $R_T^P = \tilde{\bigo}{\sqrt{T}}$ and $R_T^D = \tilde{\bigo}{\sqrt{T}}$ denote the regret bound of FTPL and OMD algorithm.
\end{restatable}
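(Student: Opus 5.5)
We plan to reduce the statement to the best-of-both-worlds guarantee of \cite{castiglioni2022unifying} for the play--recovery primal--dual scheme, applied to the per-round pairs $(L_t^s, L_t^c)$ from \cref{subsec:oscbo_losses}. We work on the safe environment event of \cref{lemma:slater-condition}. There the margin satisfies $\rho \ge \hat\rho > 0$, so $\tilde\rho=\max\{\hat\rho/2,T^{-1/4}\}$ is a valid (possibly conservative) strict-feasibility parameter. The dual domain $\calS_{\tilde\rho}$ is then bounded by $1/\tilde\rho$, which matches the ranges passed to $\texttt{INIT}^P$ and $\texttt{INIT}^D$ in \cref{alg:oscbo}. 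The benchmark of \cite{castiglioni2022unifying} is $\mathrm{OPT}_{\bar L^s,\bar L^c}$ over mixtures. We restrict to the Dirac measure at the true $\btheta$, which is feasible on the safe event up to the discretization slack, since the uniform confidence bound gives $L_t^c(\btheta)\le 0$ modulo a martingale term. This gives $T\,\mathrm{OPT}\le \sum_t L_t^s(\btheta) + \tilde\bigo(\sqrt T)$, so the gap measured against OPT converts into $G_T=\sum_t (L_t^s(\hat\btheta_{t-1})-L_t^s(\btheta))$.

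The key steps, in order, are as follows.

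\emph{Step (i): play phase.} We write the Lagrangian regret decomposition: the primal regret of FTPL on the losses $L_t^s+\lambda_{t-1}L_t^c$ is at most $R_T^P$, and the dual regret of OMD on $-\lambda L_t^c$ is at most $R_T^D$. Summing the two and comparing against $\btheta$ gives
\[
\sum_t L_t^s(\hat\btheta_{t-1}) \le \sum_t L_t^s(\btheta) + \sum_t \lambda_{t-1}L_t^c(\btheta) + R_T^P + R_T^D .
\]
The middle term is nonpositive up to $\tilde\bigo(\sqrt T)$ by the confidence event together with an Azuma bound on the noise in $y_t$.

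\emph{Step (ii): recovery phase.} The switching rule $V_t > (T-t)\tilde\rho + M_{\tilde\rho}-1$ together with the margin ensures that the recovery RMs can drive the constraint down at rate $\tilde\rho$ per round. Its contribution to the sharpness gap is at most the recovery length times the bounded range of the losses. Following \cite{castiglioni2022unifying}, this is absorbed into the same $\tilde\bigo(\sqrt T)+R^P_T+R^D_T$ terms.

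\emph{Step (iii): violation.} Using the dual regret with the comparator $\lambda = 1/\tilde\rho$, we get $\hat V_T/\tilde\rho \le \sum_t L_t^s(\btheta) - \sum_t L_t^s(\hat\btheta_{t-1}) + R^P_T+R^D_T+\tilde\bigo(\sqrt T)$. Since $L_t^s\ge 0$, we may drop the OSCBO sharpness term and multiply through by the margin, giving the stated form $\hat V_T\le \rho(\sum_t L_t^s(\btheta)+\dots)$. The probability $1-\delta/2$ comes from the FTPL high-probability guarantee with confidence parameter $\eta=\delta/3$, combined with the martingale concentration, via a union bound.

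Finally, we instantiate $R^P_T=\tilde\bigo(\sqrt T)$ from the FTPL regret bound on a bounded non-convex domain (using diameter $D$ and Lipschitzness of $L_t^s$ and $L_t^c$ in $\hat\btheta$ under $d$), and $R^D_T=\tilde\bigo(\sqrt T)$ from OMD with negative entropy over an interval of length $\bigo(1/\tilde\rho)\le \bigo(T^{1/4})$.

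The main obstacle is that $L_t^c$ is unbounded as written, because it involves $|y_t-\mu|^p$ and sub-Gaussian noise, while the RMs require bounded utilities in $[-1/\tilde\rho, 1+1/\tilde\rho]$. We would first clip $L_t^c$ on the safe event: the confidence bound plus a sub-Gaussian maximal inequality bound $|\epsilon_t|$ by $\bigo(\sqrt{\log(T/\delta)})$. We then rescale, accepting logarithmic factors hidden in $\tilde\bigo$. A second delicate point is the $1/\tilde\rho$ blow-up in $R^D_T$, which we control using $\tilde\rho\ge T^{-1/4}$.
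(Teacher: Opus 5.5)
Your Lagrangian core is the same as the paper's. You take the master inequality from the primal and dual regrets and evaluate it at $(\btheta,0)$ for $G_T$ and at $(\btheta,\lambda_{\max})$ for $\hat V_T$. You control $\sum_t\lambda_{t-1}L_t^c(\btheta)$ by a nonpositive conditional mean plus a martingale term.

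The genuine gap is how you get $R_T^P=\tilde{\bigo}(\sqrt T)$. You invoke the continuous-domain FTPL bound, using the diameter $D$ and Lipschitzness of $u_t^P$. That bound grows linearly in the Lipschitz constant, and here the constant is not $\bigo(1)$. By the paper's Lipschitz lemma, $L_{\mu_{t-1}}$ is of order $\|\y_{t-1}\|_2\,t^{3/2}$ and $L_{\sigma^2_{t-1}}=(1+(t-1)/\sigma^2)^2$. So even on the bounded-noise event $K=\bigo(T^2\sqrt{\log T})$, and your route gives a polynomially large $R_T^P$. The unboundedness of $L_t^c$, which you call the main obstacle, is handled by the noise event; the Lipschitz growth is the real obstacle.

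The missing idea, which the paper uses, is to run FTPL only over the finite $\varepsilon_T$-cover $\bTheta_{\varepsilon_T}$, with $\varepsilon_T=T^{-4}$ and $N=\bigo(T^{4d})$ points. There the finite-action bound $\bigo(M_P\sqrt{T\log N})=\tilde{\bigo}(\sqrt T)$ depends only on the utility range $M_P$, which is $\bigo(1)$ on the safe event, and not on $K$. Rounding the comparator to the grid costs only $\bigo(TK\varepsilon_T)=\tilde{\bigo}(T^{-1})$, which is exactly why $\varepsilon_T=T^{-4}$ is chosen. The high-probability version then follows from Azuma ($p=1$) or Bernstein ($p=2$) on the martingale stopped at the first confidence failure.

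Two smaller repairs are also needed. First, in Step (ii), ``recovery length times the range'' is not sublinear, because once the switch fires the recovery phase lasts until round $T$; the paper's proof never distinguishes phases at all. The clean fix is to show that the switch never fires on the good event, and here this is direct. As in the paper's feasibility computation, the uniform confidence event gives $\expect_t[L_t^c(\hat\btheta)]<0$ for every $\hat\btheta$, in particular for $\hat\btheta_{t-1}$. So $V_t$ is only a martingale fluctuation of order $\tilde{\bigo}(\sqrt t)$, which the threshold $M_{\tilde\rho}-1$ is designed to dominate.

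Second, bounding $1/\tilde\rho$ by $T^{1/4}$ makes both $R_T^D$ and $\sum_t\lambda_{t-1}\bigl(L_t^c(\btheta)-\expect_t[L_t^c(\btheta)]\bigr)$ of order $T^{3/4}$, not $\tilde{\bigo}(\sqrt T)$. You need $1/\tilde\rho\le 2/\hat\rho=\bigo(1)$, which is how the paper treats the dual range.
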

\textit{Proof sketch.} Our approach leverages the Azuma–Hoeffding concentration inequality to convert the expected regret bounds of the FTPL and OMD algorithms into high-probability guarantees.
% Next, we incorporate the resulting regret bounds into the Lagrangian formulation of our problem to derive bounds on both the cumulative reward and the long-term constraint violations.
% Finally, we apply the union bound to combine these results into a single high-probability event.
 Substituting these into the Lagrangian formulation of our problem yields bounds on the cumulative loss and long-term constraint violations, and a union bound combines the events.

Thus, the online-learning layer incurs only sublinear degradation relative to the true kernel. It remains to relate $\sum_{t=1}^T L_t^s(\btheta)$ to the maximum information gain, yielding a GP-UCB-style regret bound.

% Lemma~\ref{lemma:convergence} establishes bounds on the sharpness–reward optimality gap $\hat G_T$ and the cumulative calibration violation $\hat V_T$ in terms of the primal and dual regret guarantees. In contrast to \cite{castiglioni2022unifying}, our analysis does not require partitioning each phase into separate play and recovery stages, since our Bayesian optimization (BO) regret analysis relies solely on bounds for the optimality gap $G_T$. Nevertheless, we note that the OSCBO algorithm exhibits similar behavior to the meta-algorithm proposed in \cite{castiglioni2022unifying}. In the next subsection, we demonstrate that the cumulative reward $\sum_{t = 1}^T L_t^s(\btheta)$ can be bounded using the maximum information gain, which in turn enables us to derive a regret bound in the classical GP-UCB form.
\vspace{-.15cm}
\subsection{BO Regret Bounds for OSCBO with GP-UCB}\label{subsec:theory_bo}
\vspace{-.1cm}

We now translate the online-learning guarantees into a BO regret bound for OSCBO with UCB acquisition. The choice of $\beta_t$ in \cref{lemma:slater-condition} serves two purposes: it yields the uniform confidence event over $\bTheta$ used above, and it is also large enough to support the UCB optimism step in the regret proof below. Combined with \cref{lemma:convergence}, this reduces the BO analysis to controlling the cumulative sharpness loss under the reference kernel, which is done through the maximum information gain.

\begin{restatable}{theorem}{theoremregretbound}\label{theorem:regret-bound}
Let $\delta\in(0,1)$, and set \(\varepsilon_T:=T^{-4}\). Assume $f\in\mathcal{H}_{k_\btheta}$ with $\|f\|_{k_\btheta}\le B$, and that the noise is conditionally $R$-sub-Gaussian.
Run \texttt{OSCBO} for $T$ rounds with UCB acquisition and exploration scale $\beta_t$ defined as in \cref{lemma:slater-condition}.
Then, with probability at least $1-\delta$,
\begin{equation}\label{eq:regret_opt_version}
R_T \leq \tilde{\bigo} \left(  \left(B + \sqrt{\Gamma_{T - 1} +  \log(6 \normal_{\varepsilon_T}(\bTheta, d) / \delta)} \right) \left( T^{3/4} + \sqrt{T \gamma_{T}(\btheta)} \right) \right)
\end{equation}
where $\gamma_{T}(\boldsymbol{\theta})$ denotes the maximum information gain associated with the true kernel $k_{\boldsymbol{\theta}}$,
$\Gamma_{T - 1} := \max_{\hat{\boldsymbol{\theta}} \in \boldsymbol{\Theta}} \gamma_{T}(\hat{\boldsymbol{\theta}})$,
and $\mathcal{N}_{\varepsilon_T}(\boldsymbol{\Theta}, d)$ denotes the covering number of $\boldsymbol{\Theta}$ with respect to the metric $d$.
\end{restatable}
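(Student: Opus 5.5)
We follow the GP-UCB template of \cite{srinivas2009gaussian}, with \cref{lemma:slater-condition} supplying the uniform confidence event and \cref{lemma:convergence} transferring sharpness from the played hyperparameters to the true kernel. We work on the intersection of the event of \cref{lemma:slater-condition} (probability at least $1-\delta/2$) and the event of \cref{lemma:convergence} (probability at least $1-\delta/2$ conditional on the former). By a union bound, the intersection has probability at least $1-\delta$. Write $e_t:=\sqrt{L_{\sigma^2}\varepsilon_T}+L_{\mu_{t-1}}\varepsilon_T$ for the discretization slack. Since $\varepsilon_T=T^{-4}$, we expect $\sum_{t\le T} e_t=\bigo(1)$, or at worst $o(\sqrt T)$. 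This requires checking that $L_{\sigma^2}$ and $L_{\mu_{t-1}}$ grow at most polynomially in $t$ (for $\mu_{t-1}$, via $\|\y_{t-1}\|$ and $\|(\K+\sigma^2\I)^{-1}\|\le\sigma^{-2}$).

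\textbf{Step 1 (instantaneous regret).} Fix $t$ and let $\hat\btheta=\hat\btheta_{t-1}$ be the hyperparameter played by \cref{alg:oscbo}. The confidence bound holds uniformly over $\bTheta$, so it holds at the data-dependent $\hat\btheta_{t-1}$ even though that choice depends on the history. Applying it at $\x^\ast$ and at $\x_t$, and using that $\x_t$ maximizes the UCB under $\hat\btheta_{t-1}$, gives
\[
r_t\le 2\beta_t^{1/2}(\delta)\,\sigma_{t-1}(\x_t;\hat\btheta_{t-1})+2e_t .
\]
This uniformity is exactly why the covering term $\log\mathcal N_{\varepsilon_T}$ enters $\beta_t$. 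Since $\beta_t$ is nondecreasing, Cauchy--Schwarz and $r_t\le 2B$ then yield
\[
R_T\le 2\beta_T^{1/2}\sqrt{T\sum_t\sigma_{t-1}^2(\x_t;\hat\btheta_{t-1})}+\bigo(1).
\]

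\textbf{Step 2 (variance to sharpness loss).} The posterior variance satisfies $\sigma^{-2}\sigma_{t-1}^2\le\sigma^{-2}$. Because $s\mapsto s/\log(1+s)$ is increasing, we have $s\le \frac{\sigma^{-2}}{\log(1+\sigma^{-2})}\log(1+s)$ on $[0,\sigma^{-2}]$, as in \cref{cor:sharpness}. Hence $\sigma_{t-1}^2(\x_t;\hat\btheta_{t-1})\le \sigma^2C\log(1+\sigma^{-2})\,L_t^s(\hat\btheta_{t-1})$, and therefore $\sum_t\sigma_{t-1}^2\le c_\sigma\sum_tL_t^s(\hat\btheta_{t-1})$ for a constant $c_\sigma$.

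\textbf{Step 3 (transfer to the true kernel).} By the definition of $\hat G_T$ and \cref{lemma:convergence},
\[
\sum_tL_t^s(\hat\btheta_{t-1})=\sum_tL_t^s(\btheta)+\hat G_T\le\sum_tL_t^s(\btheta)+\tilde\bigo(\sqrt T)+R_T^P+R_T^D=\sum_tL_t^s(\btheta)+\tilde\bigo(\sqrt T).
\]
The quantity $\sum_tL_t^s(\btheta)$ is the sum of $\log(1+\sigma^{-2}\sigma^2_{t-1}(\x_t;\btheta))$ along the realized sequence $\x_1,\dots,\x_T$, scaled by $1/\log(1+\sigma^{-2})$. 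By the chain rule for the log-determinant \cite[Lemma~5.3]{srinivas2009gaussian}, this equals $2I(\y_T;f)$ under $k_\btheta$ for these points, which is at most $2\gamma_T(\btheta)$. This holds regardless of how the points were chosen.

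\textbf{Step 4 (assemble).} Combining the previous steps gives
\[
R_T\le\bigo\!\left(\beta_T^{1/2}\sqrt{T(\gamma_T(\btheta)+\tilde\bigo(\sqrt T))}\right).
\]
Using $\sqrt{a+b}\le\sqrt a+\sqrt b$, we have $\sqrt{T\cdot\sqrt T}=T^{3/4}$. Substituting $\beta_T^{1/2}=B+R\sqrt{2(\Gamma_{T-1}+1+\log(6\mathcal N_{\varepsilon_T}/\delta))}$ yields \eqref{eq:regret_opt_version}.

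\textbf{Main obstacle.} The main obstacle is Step 3, together with the probability bookkeeping around it. \cref{lemma:convergence} controls $\hat G_T$ only on the safe event and only in high probability, and the FTPL/OMD regrets must be $\tilde\bigo(\sqrt T)$ with the loss ranges scaled by $1/\tilde\rho\le T^{1/4}$. If that scaling leaks into the bound, the $\hat G_T$ term could become $\tilde\bigo(T^{3/4})$ rather than $\tilde\bigo(\sqrt T)$. I would first check whether the lemma's $\tilde\bigo(\sqrt T)$ already absorbs the $1/\tilde\rho$ factor, using $\tilde\rho\ge\hat\rho/2$ constant on the safe event. If it does not, the stated $T^{3/4}$ rate still leaves room for an $\hat G_T=\bigo(T^{1/2}\cdot T^{1/4})$ loss only inside the square root, since $\sqrt{T\cdot T^{3/4}}=T^{7/8}$ is worse. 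So the constant-$\tilde\rho$ route is needed and must be verified.
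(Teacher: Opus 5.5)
Your proposal is correct and follows the paper's proof essentially step by step: the uniform confidence bound at $\x^\ast$ and $\x_t$ with UCB optimism, the $s\le C\log(1+s)$ conversion to $L_t^s$, the sharpness transfer via \cref{lemma:convergence}, the information-gain bound under $k_{\btheta}$, and Cauchy--Schwarz, with the only cosmetic difference that you sum the discretization slack linearly rather than squaring $r_t$ as the paper does. Your flagged obstacle resolves as you suspected: $\tilde\rho=\max\{\hat\rho/2,T^{-1/4}\}\ge\hat\rho/2$ holds deterministically, and the paper's lemma treats $\hat\rho$ as a constant, so the primal and dual regrets are $\tilde\bigo(\sqrt T)$ and no $T^{1/4}$ factor leaks in.
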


\textit{Proof sketch.} We start by applying \cref{lemma:slater-condition} for each round to bound $r_t = f(\x^\ast) - f(\x_t)$. The optimism principle of UCB acquisition allows us to bound $\sum_{t = 1}^T r_t$ in terms of the cumulative loss $\sum_{t = 1}^T L_t^s(\x_t; \hat{\btheta}_{t - 1})$. Then, we apply \cref{lemma:convergence} to obtain an upper-bound in terms of the cumulative predictive variance corresponding to the true kernel $k_\btheta$. By applying \cite[Lemma 5.4]{srinivas2009gaussian}, we can obtain the OSCBO regret bound in terms of the maximum information gain $\gamma_{T}(\btheta)$.

The bound is looser than classical GP-UCB with known kernel hyperparameters because online hyperparameter adaptation introduces primal/dual regret terms and a covering-number inflation of $\beta_t$. Since these costs remain sublinear, OSCBO still achieves sublinear regret. By contrast, generic online-learning approaches to BO may incur a per-round learning cost and hence linear regret~\citep{hoffman2011portfolio}.

% As expected, the regret bound yielded by OSCBO is less tight than GP-UCB with known kernel hyperparameters. We observe that the primal and dual RMs' regret bound, as well as the extended $\beta_t$ value due to the covering number, contribute significantly to the additional term in our regret bound. Fortunately, both primal and dual RMs' regret bounds grow sublinearly, resulting in a sublinear OSCBO regret bound. Notably, OSCBO differs from standard online learning approaches used in Bayesian optimization (BO), which typically bound the per-round learning cost and consequently yield linear regret \citep{hoffman2011portfolio}.

\vspace{-.25cm}
\section{Experiments} 
\vspace{-.25cm}

\begin{figure}[t]
    \centering
    \includegraphics[width=.95\linewidth]{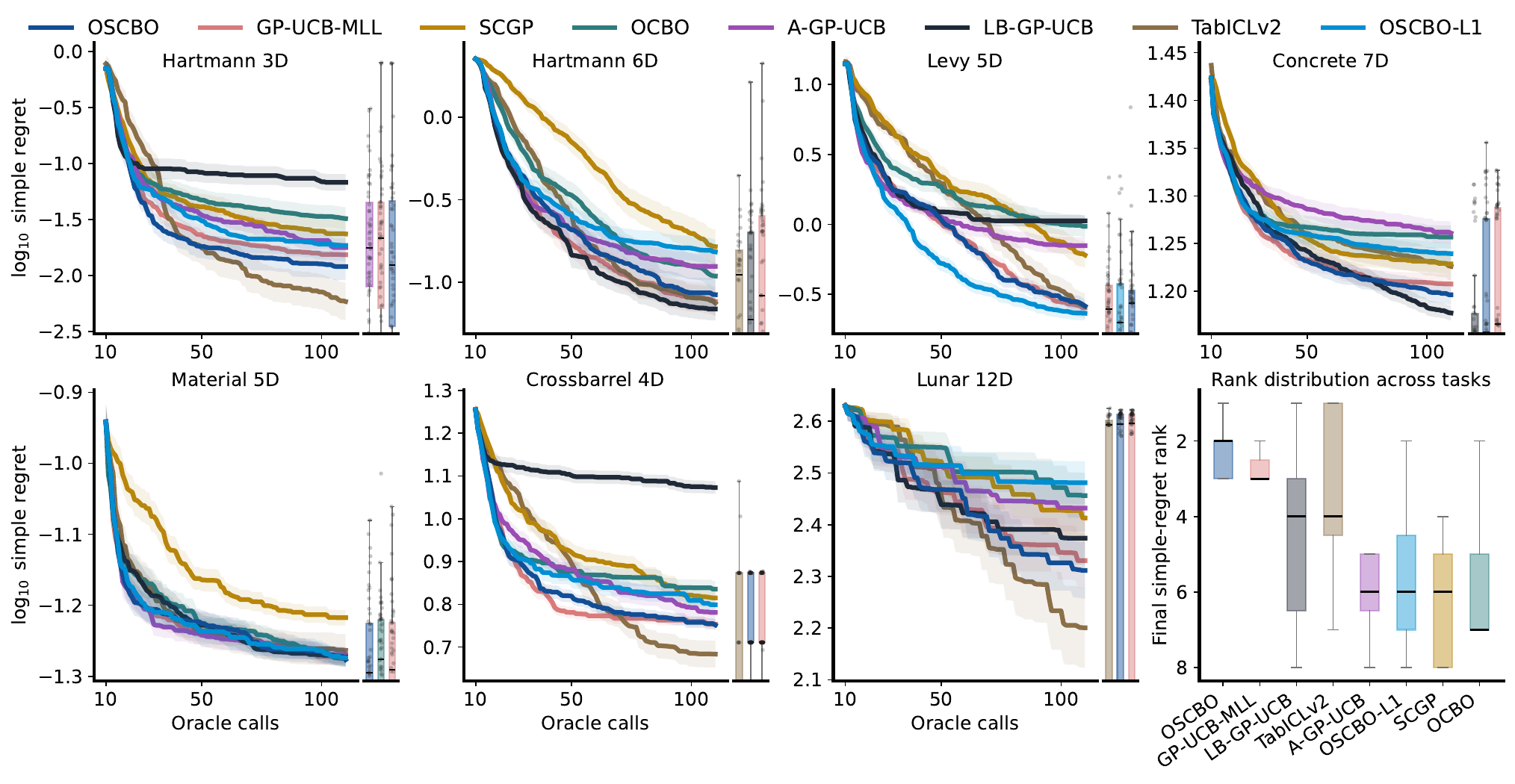}
    \vspace{-0.28cm}
\caption{\textbf{Synthetic and real-world benchmarks.}
Each plot displays simple regret across tasks and final simple-regret distribution for the top three methods,
with mean $\pm$ standard error over 20 seeds.
%Small right insets show the final simple-regret distribution for the top three methods.
Lower-right: final simple-regret rank distribution across tasks (lower is better).
\textbf{OSCBO consistently attains competitive final simple regret, ranking among the strongest methods overall.}}
 %highlighting stable exploration–exploitation behavior over the full budget}.}
    \label{fig:results}
\end{figure}

% \begin{figure*}
%     \centering
%     \includegraphics[width=1\linewidth]{Manuscript/fig/plot_regrets.pdf}
%     \caption{\textbf{Synthetic and real-world benchmarks:} simple regret (top) and cumulative regret (bottom), mean $\pm \tfrac{1}{2}$ std over 20 seeds.
%     Lower-right: final rank trade-off across tasks/seeds (lower is better; error bars: $\pm$1 std).
% \textbf{OSCBO is near the bottom-left trade-off frontier, matching or improving upon the best baseline in simple regret while remaining competitive in cumulative regret.}} %highlighting stable exploration–exploitation behavior over the full budget}.}
%     \label{fig:results}
% \end{figure*}

\begin{comment}
\begin{figure}
    \centering
    \includegraphics[width=1\linewidth]{Manuscript/fig/overall_normalized_infinitewidth_and_ablations.pdf}
\caption{
\textbf{Ablation study.}
We report normalized simple regret averaged over 4 tasks: Hartmann3D, Hartmann6D, Styblinskitang2D, and Levy5D. Mean $\pm \tfrac{1}{2}$ std reported across 20 seeds.
(\textbf{a}) Comparing OSCBO to OCBO when optimizing the Infinite-Width BNN kernel.
(\textbf{b}) Controlling the sharpness ($v$), calibration ($\lambda$), and play/recovery scheduling in OSCBO.
}
\label{fig:ablation}
\vspace{-.65cm}
\end{figure}
\end{comment}

\textbf{Baselines.}
We report two variants of our method: OSCBO ($p=2$ in
calibration penalty Equation~\ref{eq:calibration-constraint}), and OSCBO-L1, which uses $p=1$, GP-UCB-MLL~\cite{srinivas2009gaussian}, Online Calibration BO (OCBO)~\cite{deshpande2024calibratedregressionadversaryregret}, Adaptive GP-UCB (A-GP-UCB)~\cite{berkenkamp2019no}, Lengthscale-Balancing GP-UCB (LB-GP-UCB)~\cite{ziomek2024bayesian}, Sharp Calibrated GP (SCGP)~\cite{capone2023sharp}, and TabICLv2 for BO~\cite{muller2023pfns4bo,qu2026tabiclv2,yu2026gitbo}. Details are available in Appendix~\ref{sec:baseline-details}.
% These baselines cover MLL-driven hyperparameter refitting, online calibration of confidence widths, adaptive lengthscale schedules, sharp calibrated uncertainty, and tabular foundation-model surrogates.
% Performance is reported via simple regret and cumulative regret.

\textbf{Implementation details.}
OSCBO and the GP-based baselines are implemented in BoTorch~\citep{balandat2020botorch}; TabICLv2 is used through a custom BO wrapper. Unless specified otherwise, we use an isotropic Matérn kernel with $\nu=2.5$, fixed output scale, and fixed observation noise, estimating only the lengthscale. We evaluate on synthetic benchmarks and real-world tasks, including Lunar 12D~\cite{eriksson2019scalable}, Material 5D~\cite{mekki2021two,ziomek2024bayesian}, Concrete 7D~\cite{yuan2026unleashing}, and Crossbarrel 4D~\cite{gongora2020bayesian}.
 For each task, we report mean $\pm$ standard error over 20 seeds. Further details on tasks and hyperparameters are given in~\cref{sec:expdetails}.

\vspace{-.2cm}
\subsection{Results}
\label{sec:results}
\vspace{-.2cm}

\cref{fig:results} reports simple regret across synthetic and real-world benchmarks. 
OSCBO is consistently among the strongest methods: it matches or improves upon GP-UCB-MLL on several tasks, while avoiding the weaker behavior of calibration-only or schedule-based alternatives. 
OSCBO-L1 is slightly less favorable than the default OSCBO variant.
Together, these results suggest that the
sharpness--calibration update provides an effective way to adapt the lengthscale,
with the choice of calibration penalty controlling the trade-off between final
simple regret and trajectory-level performance.
TabICLv2's competitive but non-dominant performance suggests that bypassing GP lengthscale optimization does not remove the value of explicit lengthscale adaptation.
%Interestingly, TabICLv2 ranks
%third overall, highlighting the promise of tabular foundation-model surrogates
%that bypass GP lengthscale optimization altogether.

The cumulative-regret analogue in \cref{fig:supresults} shows a complementary
picture. OSCBO-L1 achieves the best aggregate cumulative-regret rank, whereas
the default OSCBO lies in the main GP-baseline cluster. Together with the
simple-regret results, this points to a trade-off controlled by the calibration
exponent: $p=2$ favors final-solution quality, while $p=1$ improves
budget-efficient performance along the trajectory. We return to this analysis in the next section.

\begin{figure}[t]
    \centering
    \includegraphics[width=1\linewidth]{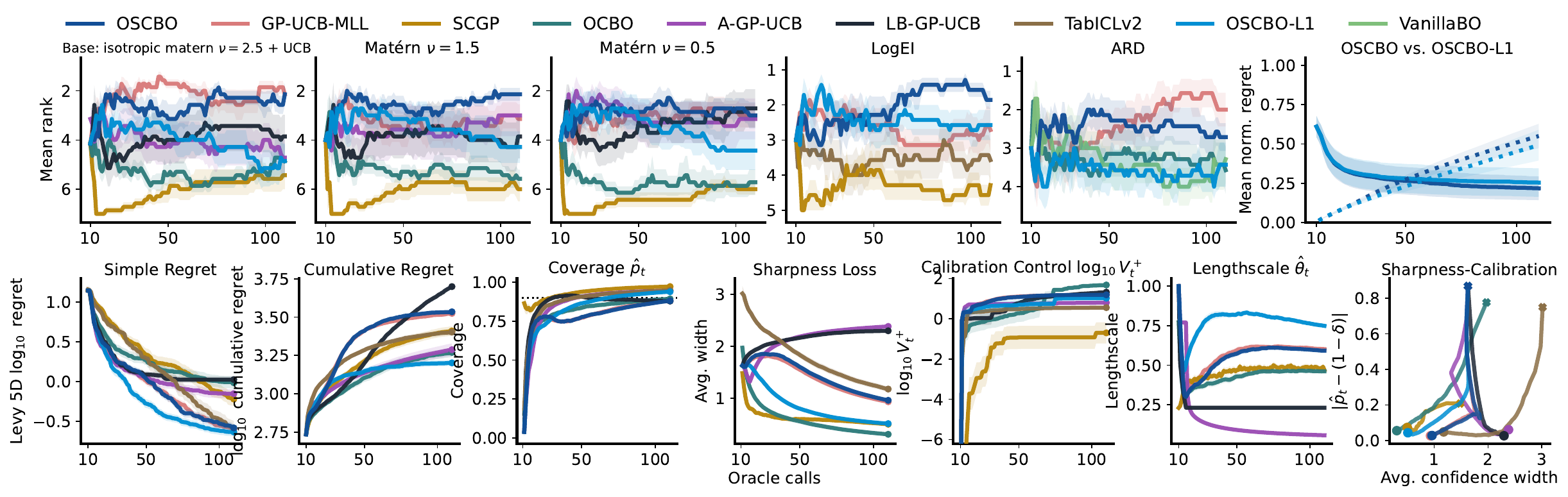}
        \vspace{-0.7cm}
\caption{\textbf{Sensitivity analysis and diagnostics.}
\textbf{Top:} Mean rank $\pm$ standard error across tasks. Regret is first averaged over 20 seeds for each task and baseline; baselines are then ranked within each task, and ranks are averaged across tasks.
For the last panel, mean normalized regrets are shown. \textbf{OSCBO demonstrates consistent performance across these different regimes, highlighting its robustness. Bottom:} Levy 5D diagnostics, \textbf{showing that OSCBO
aligns closely with GP-UCB-MLL, while the $p=1$ OSCBO-L1 penalty induces
distinct regret and uncertainty dynamics.}}
    \label{fig:ablation}
\end{figure}
\vspace{-.3cm}
% \begin{figure}[t]
%     \centering
%     \includegraphics[width=1\linewidth]{NeurIPS_2026/figures_neurips/levy_sharpness_calibration.pdf}
%     \caption{\textbf{Diagnostics on Levy 5D}. Curves show mean performance and uncertainty-quality diagnostics over 20 seeds $\pm$ standard error.
% }
%     \label{fig:levy_sharp}
% \end{figure}

\subsection{Sensitivity and Ablation Analysis}

We perform ablatios on the BO loop using the configuration from our main experiments: an isotropic Mat\'ern-$5/2$ GP surrogate, UCB acquisition, and the default $p=2$ primal calibration update.  Each panel includes only baselines for which the corresponding modification is meaningful and available.

\textbf{Robustness to kernel and acquisition choices.}
Figure~\ref{fig:ablation} shows that OSCBO is not tied to a single kernel-acquisition pair. (task-averaged rank are shown; see Figure~\ref{fig:ablation_grid} for raw regret trajectories).
%provides the corresponding raw regret trajectories before rank aggregation. 
Varying the Matérn smoothness consistently keeps OSCBO among the top-performing methods across trials. Furthermore, replacing UCB with LogEI maintains OSCBO's competitiveness, suggesting that the sharpness--calibration update is effective beyond UCB.
%Of note, TabICLv2 struggles under LogEI, despite strong performance with UCB, suggesting that its performance is more sensitive to the choice of acquisition function.
When using ARD lengthscales, OSCBO performs slightly worse than GP-UCB-MLL, primarily due to the Lunar task (\cref{fig:ablation_grid}).
Since ARD
introduces one lengthscale per dimension, we also include VanillaBO~\citep{pmlr-v235-hvarfner24a}; however, its behavior is not consistently stronger, possibly because the benchmarks remain moderate in dimension. % dimensional.
% possibly because the benchmarks remain only moderately dimensional.
% Finally, let us observe that with ARD lengthscales, OSCBO incurs a small loss versus GP-UCB-MLL, driven by poor performance on Lunar (Figure~\ref{fig:ablation_grid}).

\textbf{Effect of the calibration penalty and diagnostics.}
Figure~\ref{fig:ablation} (top-right panel) summarizes the simple--cumulative regret trade-off between OSCBO and OSCBO-L1, while the diagnostic panels (Figure~\ref{fig:ablation}, bottom row; Figure~\ref{fig:diagnostics_grid}) illustrate the underlying dynamics on Levy 5D. The two variants differ in the exponent $p$ of the calibration constraint (Equation~\ref{eq:calibration-constraint}), which enters both the FTPL and OMD objectives (Equations~\ref{eq:ftpl_objective} and~\ref{eq:omd-update}). With $p=2$, the quadratic calibration penalty resembles the data-fit geometry of the GP marginal likelihood, and OSCBO closely tracks GP-UCB-MLL in regret, sharpness--calibration, and lengthscale trajectories. With $p=1$, large residuals are penalized only linearly, reducing the influence of rare large prediction errors and leading to fewer lengthscale corrections. This shifts the sharpness--calibration trade-off: OSCBO-L1 improves cumulative regret, but at the cost of slightly weaker final simple regret across tasks.

\vspace{-.3cm}
\section{Discussion}
\vspace{-.3cm}

We introduced OSCBO after a core observation: under adaptive querying, the
exploration--exploitation trade-off can be viewed as a trade-off between
\emph{sharp} and \emph{calibrated} uncertainty. Treating GP hyperparameter
refitting as an online decision problem lets us formalize and control this
tension along the BO trajectory. In particular, the default quadratic OSCBO
update mirrors the log-determinant and quadratic data-fit structure of GP
marginal likelihood, which explains its close empirical alignment with
GP-UCB-MLL.
However, unlike black-box MLL refitting, OSCBO places this update
inside a constrained online-learning procedure, yielding a sublinear regret
analysis for the resulting GP-UCB rule.  More broadly, our results show that the
refitting objective itself is a design choice: changing the primal calibration
penalty alters the lengthscale dynamics, with OSCBO-L1 providing a more
budget-efficient trajectory at a small cost in final simple regret.
A particularly promising avenue is to transfer OSCBO-style sharpness/calibration control to surrogates with more challenging uncertainty, like Bayesian neural networks~\cite{li2024a} or scalable approximate GPs. Lastly, our theory covers only UCB; extending it to strategies like Expected Improvement or Thompson Sampling is left for future work.
%Finally, while we focus on UCB, the framework extends to other acquisitions (e.g., EI, TS).

\bibliographystyle{plain}
\bibliography{references}

%%%%%%%%%%%%%%%%%%%%%%%%%%%%%%%%%%%%%%%%%%%%%%%%%%%%%%%%%%%%

%%%%%%%%%%%%%%%%%%%%%%%%%%%%%%%%%%%%%%%%%%%%%%%%%%%%%%%%%%%%%%%%%%%%%%%%%%%%%%%
%%%%%%%%%%%%%%%%%%%%%%%%%%%%%%%%%%%%%%%%%%%%%%%%%%%%%%%%%%%%%%%%%%%%%%%%%%%%%%%
% APPENDIX
%%%%%%%%%%%%%%%%%%%%%%%%%%%%%%%%%%%%%%%%%%%%%%%%%%%%%%%%%%%%%%%%%%%%%%%%%%%%%%%
%%%%%%%%%%%%%%%%%%%%%%%%%%%%%%%%%%%%%%%%%%%%%%%%%%%%%%%%%%%%%%%%%%%%%%%%%%%%%%%
\newpage
\appendix

\setcounter{figure}{0}
\setcounter{table}{0}
\setcounter{equation}{0} % optional
\renewcommand{\thefigure}{S\arabic{figure}}
\renewcommand{\thetable}{S\arabic{table}}
\renewcommand{\theequation}{S\arabic{equation}}

\onecolumn

\begin{center}
\bf{\Large{Supplementary Materials}}
\end{center}

The appendix is organized as follows:
\begin{itemize}
    \item \cref{sec:addres} reports complementary experimental results referenced in the main text, including:
    \begin{itemize}
        \item the cumulative-regret analogue of the main benchmark results (Figure~\ref{fig:supresults});
        \item per-task raw regret trajectories for the robustness and ablation study (Figure~\ref{fig:ablation_grid});
        \item per-task diagnostic trajectories for regret, coverage, confidence width, calibration violation, sharpness--calibration dynamics, and learned lengthscales (Figure~\ref{fig:diagnostics_grid}).
        \item computation time (Figure~\ref{fig:time})
    \end{itemize}
    
    \item \cref{sec:detailsalgo} provides implementation-level details of OSCBO, including:
    \begin{itemize}
        \item the \texttt{RECALIBRATE} primal--dual update;
        \item the regret-minimizer interface (\cref{app:rm_interface});
        \item the primal regret minimizer based on Follow-the-Perturbed-Leader (FTPL; \cref{app:ftpl});
        \item the dual regret minimizer based on Online Mirror Descent (OMD; \cref{app:omd}).
    \end{itemize}
    
    \item \cref{app:proofs} contains the full proofs of the theoretical results stated in the main text, including the sharpness and calibration corollaries, the feasibility argument, the online-learning guarantees, and the final GP-UCB-style regret bound.
    
    \item \cref{sec:expdetails} describes the experimental setup, including:
    \begin{itemize}
        \item the BO baselines and their implementation details (\cref{sec:baseline-details});
        \item the hyperparameter settings used in all experiments (\cref{sec:hyperparameter-details});
        \item the GP kernels used in the benchmark and robustness experiments (\cref{app:ibnn});
        \item the compute resources used for the experiments (\cref{sec:hardware-details}).
    \end{itemize}

    \item \cref{sec:testfunc} details the benchmark functions and oracle construction, including:
    \begin{itemize}
        \item input/output normalization and preprocessing;
        \item the synthetic test functions (\cref{sec:sb});
        \item the Lunar Lander benchmark (\cref{sec:rwb});
        \item the tabular real-world benchmarks (\cref{sec:rwtb}).
    \end{itemize}
\end{itemize}

\section{Additional results}\label{sec:addres}

\begin{figure}[H]
    \centering
    \includegraphics[width=1\linewidth]{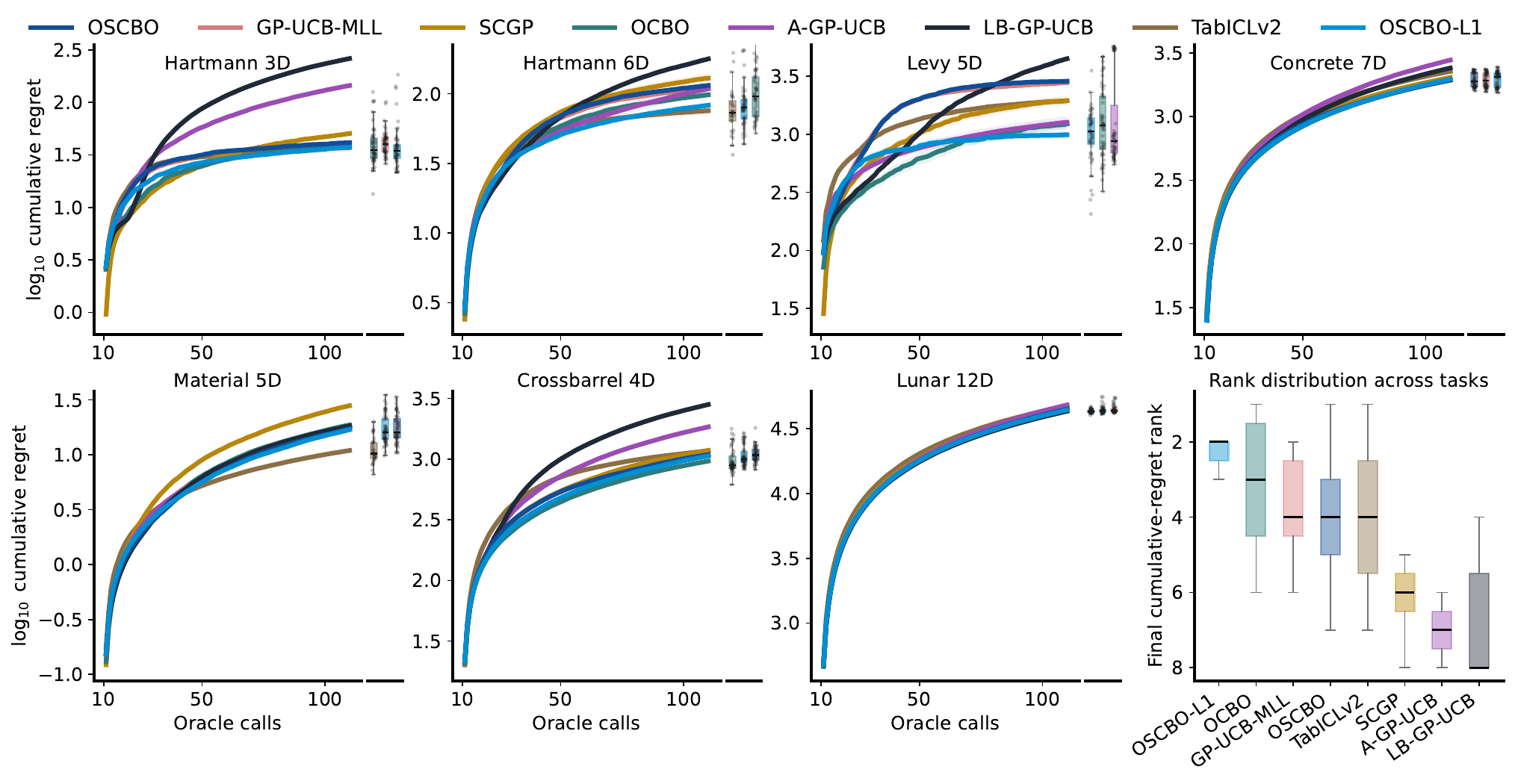}
\caption{\textbf{Cumulative regret analogue of Figure~\ref{fig:results}, synthetic and real-world benchmarks.}
Each plot displays cumulative regret across tasks and final cumulative-regret distribution for the top three methods,
with mean $\pm$ standard error over 20 seeds.
%Small right insets show the final simple-regret distribution for the top three methods.
Lower-right: final cumulative-regret rank distribution across tasks (lower is better).
\textbf{OSCBO-L1 achieves the best aggregate cumulative-regret rank, while OSCBO lies in the main GP-baseline cluster, suggesting that the $L_1$ ($p=1$) improves accumulated regret by reducing sensitivity to large residuals.}}
 %highlighting stable exploration–exploitation behavior over the full budget}.}
    \label{fig:supresults}
\end{figure}

\begin{figure}[H]
    \centering
    \includegraphics[width=1\linewidth]{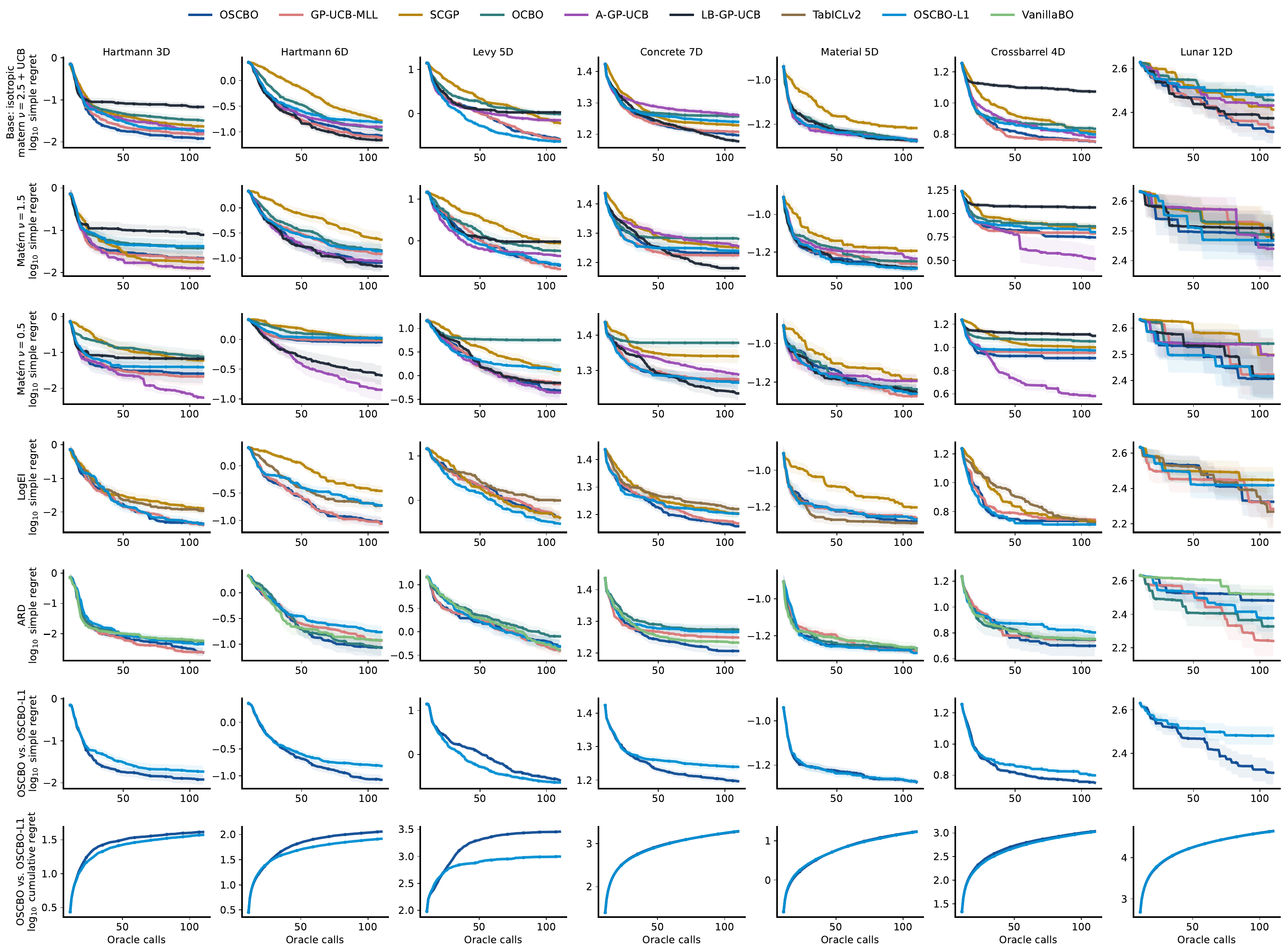}
    \caption{\textbf{Per-task ablation results}. Raw regret trajectories corresponding to the aggregated rank results in Figure~\ref{fig:ablation}. All panels report $\log_{10}$ simple regret, except the last row, which reports $\log_{10}$ cumulative regret for OSCBO vs. OSCBO-L1. Mean $\pm$ standard error over 20 seeds. Lower is better.}
    \label{fig:ablation_grid}
\end{figure}

\begin{figure}[H]
    \centering
    \includegraphics[width=1\linewidth]{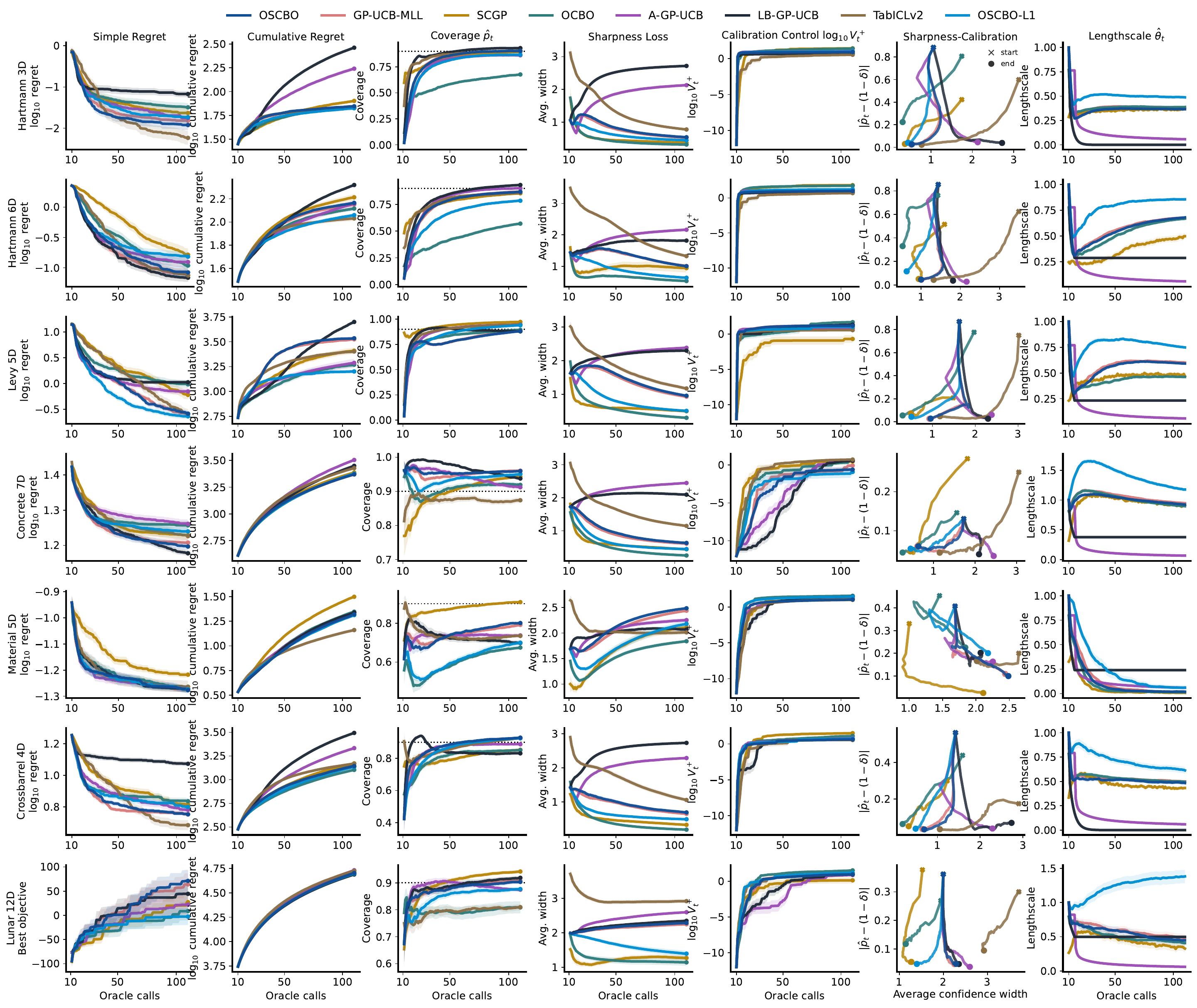}
\caption{
\textbf{Per-task diagnostics.}
Rows correspond to benchmark tasks and columns report simple regret, cumulative
regret, empirical coverage $\hat p_t$, average confidence width, cumulative
positive calibration violation $\log_{10} V_t^+$, the sharpness--calibration
trajectory, and the learned lengthscale $\hat\theta_t$. Mean
$\pm$ standard error over 20 seeds. \textbf{OSCBO often
tracks GP-UCB-MLL closely, while OSCBO-L1 induces distinct lengthscale and
uncertainty dynamics, consistent with its improved cumulative-regret behavior.
}}
\label{fig:diagnostics_grid}
\end{figure}

\begin{figure}[H]
    \centering
    \includegraphics[width=1\linewidth]{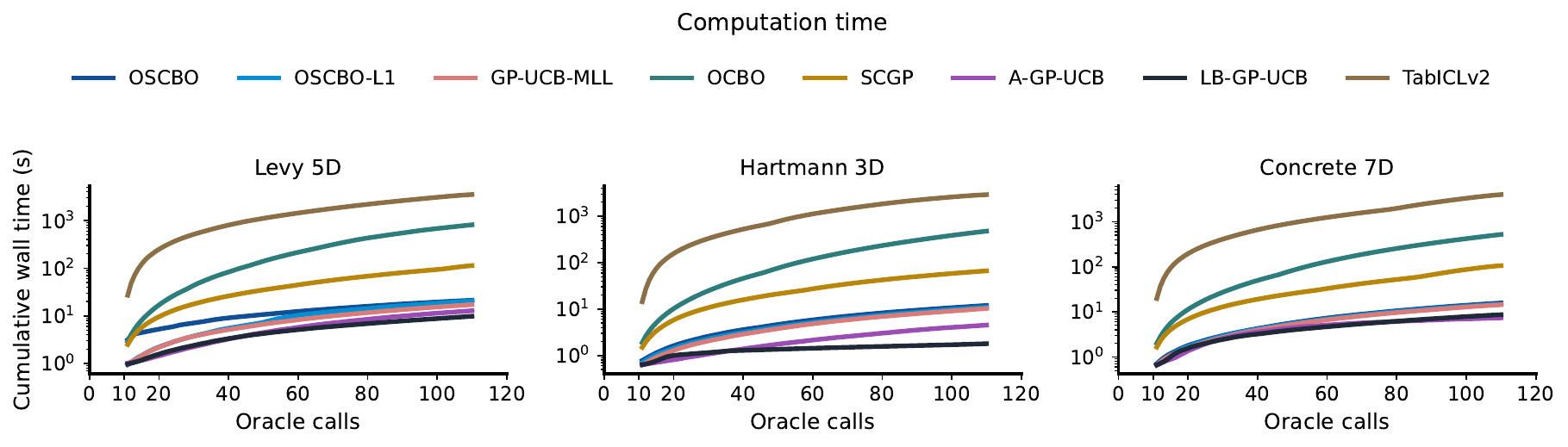}
    \caption{\textbf{Computation time.} Cumulative wall-clock time as a function of oracle calls on Levy 5D, Hartmann 3D, and Concrete 7D. Mean $\pm$ standard error over 10 seeds.}
    \label{fig:time}
\end{figure}

\section{Details of Algorithm}\label{sec:detailsalgo}

\subsection{Primal-Dual Update Algorithm}

In this section, we provide the \texttt{RECALIBRATE} subroutine, which utilizes the primal and dual RMs.

\begin{algorithm}[H]
   \caption{\texttt{RECALIBRATE} \citep{castiglioni2022unifying}}
   \label{alg:recalibrate}
\begin{algorithmic}[1]
   \STATE {\bfseries Input:} $\mathcal{R}^P$, $\mathcal{R}^D$ 
    \STATE Play $\hat{\btheta}_{t - 1}$ and get $L^{\sharpness}_t$ and $L^\calibration_t$

    \algcomment{Primal RM update}
    \STATE $u_t^P(\hat{\btheta}_{t - 1}) \leftarrow \, L^\sharpness_t(\hat{\btheta}_{t - 1}) + \lambda_{t - 1} L^\calibration_t(\hat{\btheta}_{t - 1})$
    
    \STATE $\mathcal{R}^P.\texttt{OBSERVEUTILITY}(u_t^P(\hat{\btheta}_{t - 1}))$
    \algcomment{Dual RM update}
    \STATE $u_t^D(\lambda) \mapsto - L_t^s(\hat{\btheta}_{t - 1})  - \lambda L_t^\calibration(\hat{\btheta}_{t - 1})$
    \STATE $\mathcal{R}^D.\texttt{OBSERVEUTILITY}(u_t^D)$
\end{algorithmic}
\end{algorithm}

\subsection{Regret-minimizer interface}\label{app:rm_interface}
A \emph{regret minimizer} (RM) is an online decision-making routine over a domain $\calW$ interacting with a black-box environment. 
At each round $t\in[T]$, the RM (i) outputs a decision $\w_t\in\calW$ via \texttt{NEXTELEMENT}, then (ii) receives a bounded utility function $u_t:\calW\to[c,d]$ and updates its internal state via \texttt{OBSERVEUTILITY}. 
The utility $u_t$ may be history-dependent (e.g., depend on $\w_1,\dots,\w_{t-1}$), matching adaptive data collection. In this paper, we define the primal utility at round $t$ as
\begin{equation}
    u_t^P(\hat{\btheta}_{t - 1}) = L_t^s(\hat{\btheta}_{t - 1}) + \lambda_{t - 1} L_t^c(\hat{\btheta}_{t - 1}),
\end{equation}
while the dual utility as
\begin{equation}
u_t^D(\lambda) = - L_t^s(\hat{\btheta}_{t - 1}) - \lambda L_t^c(\hat{\btheta}_{t - 1}).
\end{equation}
We write $\texttt{INIT}(\calW,[c,d],\eta)$ for an RM initialization routine that returns an instance tailored to $\calW$ space and utility range $[c,d]$, with a failure probability at most $\eta$. OSCBO considers the follow-the-perturbed-leader algorithm to act as the primal RM and the online mirror descent algorithm as the dual RM. Given a current round $t$ and the OSCBO parameter $\eta \in (0, 1)$, the involved RMs are controlled by the quantity $M_{\tilde{\rho}}$:
\begin{equation}
    M_{\tilde{\rho}} = \frac{2}{\tilde{\rho}} \sqrt{T} + \left( 2 + \frac{3}{\tilde{\rho}} \right) \calE_{t, \eta} + \left(1 + \frac{2}{\tilde{\rho}}\right) R_T^P + \frac{1}{\tilde{\rho}} R_T^D,  
\end{equation}
where $R_T^P$ and $R_T^D$ are the primal and dual RM cumulative regret bounds, respectively, and $\calE_{t, \eta} = \sqrt{8t \log(18 t^2 / \eta)}$ is the value bounding differences between expectations and empirical means of constraint functions \cite{castiglioni2022unifying}.

\subsection{Primal RM: Follow the perturbed leader (FTPL)}\label{app:ftpl}

At round $t$, \texttt{OSCBO} invokes the \texttt{RECALIBRATE} subroutine, which yields a primal
utility feedback $u_t^P$. Depending on the phase, either $\calR_I^P$ or $\calR_{II}^P$ is called; both invoke
\texttt{OBSERVEUTILITY} and update
\begin{equation}\label{eq:ftpl-update}
\hat{\btheta}_{t + 1}
=
\arg\min_{\hat{\btheta} \in \bTheta} \calL_t^P(\hat{\btheta}) =
\sum_{j = 1}^{t} u^P_j(\hat{\btheta}) - \inner{\hat{\btheta}}{\bseta},
\end{equation}
where the perturbations $\bseta$ are drawn (componentwise) from a probability distribution. For the theoretical analysis, we assume an exponential distribution with parameter $\tau$. However, in our experiments we instead use an isotropic multivariate normal distribution. As noted by \cite{suggala2020online}, the regret bounds of the FTPL algorithm extend beyond the exponential distribution and hold for a broader class of perturbation distributions. We provide the practical formulation of the objective above through the following proposition:
\begin{restatable}{proposition}{propositionutilityprimal}\label{prop:utility-primal}
Let $\K_\btheta^{t}$ be the covariance matrix of inputs $\X_{t} = [\x_1, \dots, \x_{t}]^\top \in \real^{t \times d}$ and the observation evaluations $\y_{t} = [y_1, \dots, y_{t}]^\top \in \real^{t}$. For any $\btheta \in \bTheta$, define the primal utility function at round $1 \le j \le t$ as $u_j^P(\btheta) = L_j^s(\btheta) + \lambda_{j - 1} L_j^c(\btheta)$, where $L_j^s$ and $L_j^c$ denote the sharpness loss and the calibration constraint, respectively. Then, we define the FTPL objective  after $t$ rounds as
\begin{equation}
\calL_t^P(\btheta) = \frac{\left( \log \det(\K_\btheta^t + \sigma^2 \I) - t \log(\sigma^2) \right) }{\log(1 + \sigma^{-2})} + \boldsymbol{\lambda}_t^\top \vert \mathrm{diag}(\vv_t) \, (\LL_\btheta^t)^{-1} \y_t  \vert^p - \sum_{j=1}^t \lambda_{j - 1} - \inner{\bseta}{\btheta}
\label{eq:ftpl_objective}
\end{equation}
where $\LL_\btheta^t$ is a Cholesky decomposition, satisfying $\K_\btheta^t + \sigma^2 \I = \LL_\btheta^t (\LL_\btheta^t)^\top$, $\bseta$ is a noise drawn per action, $\vert . \vert^p$ denotes the element-wise absolute value raised to the power of $p$, $\vv_t = [1 / \beta_1^{1/2}(\delta), \dots, 1 / \beta_t^{1/2}(\delta)]^\top$, and $\boldsymbol{\lambda}_t = [\lambda_1, \dots, \lambda_t]$.
\end{restatable}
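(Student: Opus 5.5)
The plan is to verify the identity term by term. For a \emph{fixed} $\btheta$, the FTPL objective $\calL_t^P(\btheta)=\sum_{j=1}^t u_j^P(\btheta)-\inner{\bseta}{\btheta}$ splits into three pieces: the sharpness sum $\sum_j L_j^s(\btheta)$, the weighted residual sum $\sum_j \lambda_{j-1}\,|\cdot|^p$, and the constant $-\sum_j\lambda_{j-1}$. The last piece comes from the $-1$ in \cref{eq:calibration-constraint} and appears verbatim in \cref{eq:ftpl_objective}. Both nontrivial pieces follow from one structural fact: the sequential GP posterior at the queried points is exactly the Cholesky factorization of $\K_\btheta^t+\sigma^2\I$.

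\textbf{Step 1 (block Cholesky and innovations).} Write $\LL_\btheta^t$ for the lower-triangular Cholesky factor. Partition $\K_\btheta^{j}+\sigma^2\I$ with the leading $(j-1)\times(j-1)$ block, the column $\kk_\btheta(\x_j)$, and the corner $k_\btheta(\x_j,\x_j)+\sigma^2$. I will prove by induction on $j$ two things. First, the leading block of $\LL^t_\btheta$ is $\LL^{j-1}_\btheta$. Second, the $j$-th row is $(\ell_j^\top, L_{jj})$ with
\[
\ell_j=(\LL^{j-1}_\btheta)^{-1}\kk_\btheta(\x_j),\qquad
L_{jj}^2=k_\btheta(\x_j,\x_j)+\sigma^2-\ell_j^\top\ell_j=\sigma^2_{j-1}(\x_j;\btheta)+\sigma^2 .
\]
Here $\sigma^2_{j-1}(\x_j;\btheta)$ is \cref{eq:predictive-dist} evaluated with the data $\dataset_{j-1}$, and the equality uses $\ell_j^\top\ell_j=\kk^\top(\K^{j-1}+\sigma^2\I)^{-1}\kk$.

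Next apply forward substitution to $\z=(\LL^t_\btheta)^{-1}\y_t$. The first $j-1$ entries are $(\LL^{j-1}_\btheta)^{-1}\y_{j-1}$, and the $j$-th entry is
\[
z_j=\frac{y_j-\ell_j^\top(\LL^{j-1}_\btheta)^{-1}\y_{j-1}}{L_{jj}}
=\frac{y_j-\mu_{j-1}(\x_j;\btheta)}{\sqrt{\sigma^2_{j-1}(\x_j;\btheta)+\sigma^2}}.
\]
This step is the main obstacle; the rest is bookkeeping. It needs care that the posterior at round $j$ uses exactly the prefix data $\dataset_{j-1}$ and the same $\btheta$ throughout, which is the case because FTPL evaluates all past utilities at a common candidate $\btheta$.

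\textbf{Step 2 (sharpness term).} Since $\det(\K^t_\btheta+\sigma^2\I)=\prod_j L_{jj}^2=\prod_j\bigl(\sigma^2+\sigma^2_{j-1}(\x_j;\btheta)\bigr)$, we get
\[
\sum_{j=1}^t\log\bigl(1+\sigma^{-2}\sigma^2_{j-1}(\x_j;\btheta)\bigr)=\log\det(\K^t_\btheta+\sigma^2\I)-t\log\sigma^2 .
\]
Dividing by $\log(1+\sigma^{-2})$ gives the first term of \cref{eq:ftpl_objective}, by \cref{eq:sharpness-reward}.

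\textbf{Step 3 (calibration term).} By Step 1, $|y_j-\mu_{j-1}|^p/(\beta_j^{1/2}L_{jj})^p=\bigl|(\mathrm{diag}(\vv_t)\,(\LL^t_\btheta)^{-1}\y_t)_j\bigr|^p$. Summing with weights $\lambda_{j-1}$ gives the inner product with the multiplier vector; I read $\boldsymbol{\lambda}_t$ as $(\lambda_0,\dots,\lambda_{t-1})$, the multipliers played before each observation, to match the indexing. Adding the constant $-\sum_j\lambda_{j-1}$ and the perturbation $-\inner{\bseta}{\btheta}$ completes the identity.
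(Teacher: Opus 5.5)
Your proof is correct and follows the paper's three-piece decomposition. The only difference is that you derive both the log-determinant identity and the innovation form $z_j=(y_j-\mu_{j-1}(\x_j;\btheta))/\sqrt{\sigma^2_{j-1}(\x_j;\btheta)+\sigma^2}$ from one block-Cholesky induction, whereas the paper cites \cite[Lemma~5.3]{srinivas2009gaussian} for the former and only asserts the latter, so your version is more self-contained; your reading of the multiplier vector as $(\lambda_0,\dots,\lambda_{t-1})$ is also right, since the paper's $[\lambda_1,\dots,\lambda_t]$ is off by one relative to the weights $\lambda_{j-1}$.
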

The full proof can be found in \cref{app:prop-utility-primal}. The FTPL algorithm requires two assumptions to establish a sublinear regret guarantee $\mathcal{O}(\sqrt{T})$ \citep{suggala2020online}. First, it assumes that $\bTheta$ is bounded with $\ell_\infty$-diameter $
D=\sup_{\hat{\btheta},\,\tilde{\btheta}\in\bTheta}
\|\hat{\btheta}-\tilde{\btheta}\|_\infty$. Next, FTPL algorithm assumes that each $u_j^P, 1\le j \le t$ is $L$-Lipschitz w.r.t.\ the $\ell_1$ norm:
$
\forall \hat{\btheta},\tilde{\btheta}\in\bTheta, |u_j^P(\hat{\btheta})-u_j^P(\tilde{\btheta})|
\le
L\|\hat{\btheta}-\tilde{\btheta}\|_1$.

We show that $u_j^P$ satisfies this condition via \cref{lemma:utility-lispchitz-property}. Following \cite{suggala2020online}, we assume access to an approximate optimization oracle $\mathbf{O}_{g,h}$
for solving~\eqref{eq:ftpl-update}.
The oracle takes as input a function $u:\bTheta\to\real$ and a vector $\bseta$, and returns an approximate minimizer of $ \sum_{j = 1}^t u_j^P(\hat{\btheta})-\inner{\bseta}{\hat{\btheta}}$. An oracle is called a $(g,h)$-approximate optimization oracle if it returns $\btheta^\ast\in\bTheta$ such that
\begin{equation}
\sum_{j = 1}^t u_j^P(\btheta^\ast)-\inner{\bseta}{\btheta^\ast}
\le
\inf_{\btheta\in\bTheta}\left( \sum_{j=1}^t u_j^P(\btheta)-\inner{\bseta}{\btheta} \right)
+
\big(g+h\|\bseta\|_1\big).
\end{equation}
In practice, we use Adam \citep{Kingma2015}
as our optimization oracle.

\subsection{Dual RM: Online mirror descent (OMD)}\label{app:omd}

At round $t$, \texttt{OSCBO} invokes \texttt{RECALIBRATE}, which yields the dual utility feedback
$u_t^D(\lambda_t)$.
Depending on the phase, either $\calR_I^D$ or $\calR_{II}^D$ is called; both invoke
\texttt{OBSERVEUTILITY} and update the scalar multiplier $\lambda_t$ via online mirror descent:
\begin{equation}\label{eq:omd-update}
\lambda_{t}
=
\arg\min_{\lambda\in\calW}
\Big(\nabla u_t^D(\lambda_{t - 1})\Big)\,\lambda
+\frac{1}{\omega_t}\,D_\psi(\lambda,\lambda_{t - 1}),
\end{equation}
where $D_\psi:\real_{\geq 0} \times\real_{> 0} \to\real$ is the \emph{Bregman} divergence induced by a
strictly convex $\psi:\real_{\geq 0}\to\real$, and $\omega_t>0$ is the step size. By the definition of $u_t^D$, we find that $\nabla \,  u_t^D(\lambda) = - L_t^c(\hat{\btheta}_{t - 1})$ During the play phase we set $\calW=\calS_{\tilde\rho} := \{\lambda\in\mathbb{R}_+:|\lambda|\le 1/\tilde\rho\}$, while during the recovery phase we set
$\calW=\Delta:=[0,1]$.

For any $\lambda, \lambda'$, the Bregman divergence is $D_\psi(\lambda,\lambda')
=
\psi(\lambda)-\psi(\lambda')
-\nabla\psi(\lambda')\,(\lambda-\lambda')$. It satisfies $D_\psi(\lambda,\lambda')\ge 0$ and is generally not symmetric.
We use the negative entropy regularizer $\psi(\lambda)=\lambda\log\lambda - \lambda$, which yields
\begin{equation}\label{eq:negative-entropy}
D_\psi(\lambda,\lambda')
=
\lambda\log\frac{\lambda}{\lambda'}-\lambda+\lambda' \leq \bigo\left( \frac{1}{\tilde{\rho}} \log\left(\frac{1}{\tilde{\rho}}\right) \right).
\end{equation}
Under this choice of $\psi$, the OMD update in~\eqref{eq:omd-update} admits closed-form solutions
on $\Delta$ and on $\calS_{\tilde\rho}$.
On $\Delta$, the update is
\begin{equation}\label{eq:omd-simplex}
\lambda_{t}
=
\min(\lambda_{t - 1} \exp( - \omega_t \nabla u_t^D(\lambda_{t - 1} )), \, 1),
\end{equation}
and on $\calS_{\tilde{\rho}}$, the update becomes
\begin{equation}\label{eq:omd-capped}
\lambda_{t}
=
\min \! \Big(\lambda_{t - 1} \exp\big(- \omega_t \nabla u_t^D(\lambda_{t - 1})\big),\, 1/\tilde{\rho}\Big).
\end{equation}

Since $D_\psi$ is defined for $\lambda,\lambda'>0$, we initialize $\lambda_1>0$ and the multiplicative
form of the update preserves positivity; see \cite{orabona2019modern} for general conditions. OMD can also be instantiated with alternative regularizers (e.g., Euclidean regularizer $\psi(\lambda)=\tfrac12\lambda^2$).

\section{Proofs}\label{app:proofs}

\subsection{\cref{prop:utility-primal}} \label{app:prop-utility-primal}

\propositionutilityprimal*

\textit{Proof:}

Given a hyperparameter $\btheta \in \bTheta$, the cumulative primal utility is defined as
\begin{equation}
   \sum_{j = 1}^t u_j^P(\btheta)  =\sum_{j = 1}^t L_j^s(\btheta) + \lambda_{j - 1} L_j^c(\btheta),
\end{equation}
where $L_j^s$ and $L_j^c$ denote the sharpness loss and constraint calibration at round $j$, defined in \cref{eq:sharpness-reward} and \cref{eq:calibration-constraint}, respectively. Our goal is to derive the closed form of each term. Based on \cite[Lemma 5.3]{srinivas2009gaussian} and the definition of information gain under multivariate normal distribution, the term $\sum_{j = 1}^t L_j^s(\btheta)$ can be expressed in terms of the log determinant of the covariance matrix $\K_\btheta^{t}$:
\begin{align}
    \sum_{j = 1}^t L_j^s(\btheta)  &= \frac{1}{\log(1 + \sigma^{-2})} \log \det(\I + \sigma^{-2} \K_\btheta^{t}) \nonumber \\
    &= \frac{1}{\log(1 + \sigma^{-2})} \left( \log \det(\K_\btheta^t + \sigma^2 \I) - t \log(\sigma^2) \right) 
\end{align}
Let $\LL_\btheta^{-1}$ be the Cholesky decomposition of $\K_\btheta^{t} + \sigma^2 \I$, i.e., $\K_\btheta^{t} + \sigma^2 \I = \LL_\btheta^{t} (\LL_\btheta^{t})^\top$. We note that each element of a unique solution $\z_{t}$ of the linear system $\LL_\btheta^{t} \z_{t} = \y_t$ follows
\begin{equation}
z_j = \frac{y_j - \mu_{j - 1}(\x_j; \btheta)}{\sqrt{\sigma^2 + \sigma^2_{j - 1}(\x_j; \btheta)}} \quad 1 \le j \le t. 
\end{equation}
Then, the term $\sum_{j = 1}^t \lambda_{j - 1} L_j^c(\btheta)$ can be vectorized as an inner product. Let $\vert . \vert^p$ denotes the element-wise absolute value raised to the power of $p$. We can express the sum as
\begin{align}
    \sum_{j = 1}^t \lambda_{j - 1} \, L_j^c(\btheta)  = \boldsymbol{\lambda}_t^\top \vert \mathrm{diag}(\vv_t) \, \z_t  \vert^p - \sum_{j=1}^t \lambda_{j - 1} \nonumber = \boldsymbol{\lambda}_t^\top \vert \mathrm{diag}(\vv_t) \, (\LL_\btheta^t)^{-1} \y_t  \vert^p - \sum_{j=1}^t \lambda_{j - 1},
\end{align}
where $\mathrm{diag}(\vv_t)$ is a diagonal matrix associated with a vector $\vv_t = [1 /\beta_1^{1/2}(\delta), \dots, 1 / \beta_t^{1/2}(\delta)]^\top, \, \delta \in (0, 1)$, and $\boldsymbol{\lambda}_t = [\lambda_1, \dots, \lambda_t]$. Therefore, we can define the objective $\sum_{j = 1}^t u_j^P(\btheta) - \inner{\bseta}{\btheta}$ as
\begin{equation}
\calL_t^P(\btheta) = \frac{\left( \log \det(\K_\btheta^t + \sigma^2 \I) - t \log(\sigma^2) \right) }{\log(1 + \sigma^{-2})} + \boldsymbol{\lambda}_t^\top \vert \mathrm{diag}(\vv_t) \, (\LL_\btheta^t)^{-1} \y_t  \vert^p - \sum_{j=1}^t \lambda_{j - 1} - \inner{\bseta}{\btheta}
\end{equation}
to complete our proof.

\subsection{Proof of \cref{cor:sharpness}}\label{proof:sharpness}

\corollarysharpness*

\textit{Proof.}
On the high-probability confidence event of~\cite[Lemma~5.2]{srinivas2009gaussian} (which holds under Assumption~\ref{assumption:rkhs} and the choice of $\beta_t(\delta)$ from~\cite[Theorem~6]{srinivas2009gaussian}), each instantaneous regret satisfies
\[
r_t \le 2\sqrt{\beta_t(\delta)}\,\sigma_{t-1}(\x_t;\btheta).
\]
Furthermore, from ~\cite[Lemma~5.4]{srinivas2009gaussian}, it holds that
\[
r_t^2 \le 4 \beta_t(\delta) \, \sigma^2 \, C \, \log(1 + \sigma^{-2} \, \sigma_{t-1}(\x_t;\btheta)),
\]
where $C = \sigma^{-2} / \log(1 + \sigma^{-2})$. Therefore, by the Cauchy--Schwarz inequality, it follows that
\[
R_T=\sum_{t=1}^T r_t
\le
\sqrt{T \sum_{t=1}^T r_t^2}
= \sqrt{\,T\sum_{t=1}^T \beta_T(\delta)\, \sigma^2 \, C \log (1 + \sigma^{-2} \, \sigma_{t-1}^2(\x_t;\btheta))\,},
\]
completing the proof.

% Note that the inequality $R_T \leq T \sum_{t = 1}^T r_t^2$ holds by the Cauchy-Schwarz inequality. Applying [Lemma 5.2] of \cite{srinivas2009gaussian}, we have that $R_T^2 \leq T \sum_{t = 1}^T 4 \beta_t \sigma^2_{t}(\x_{t + 1})$. Note that [Lemma 5.2] is valid on \cref{assumption:rkhs} through [Theorem 6] of \cite{srinivas2009gaussian}. Taking the square root of both sides completes the proof.

\subsection{Proof of \cref{corr:calibration}}\label{proof:calibration}

\corollarycalibration*

\textit{Proof.}
Fix $\delta\in(0,1)$ and let $\beta_t(\delta)$ be as in Corollary~\ref{cor:sharpness}.
For each round $t$, define the latent-function score and prediction set
\[
s_t^f(\x,z)
:=
\frac{|z-\mu_{t-1}(\x;\btheta)|}
{\sqrt{\beta_t(\delta)}\,\sigma_{t-1}(\x;\btheta)},
\qquad
C_t^f(\x)
:=
\{z\in\real:\ s_t^f(\x,z)\le 1\}.
\]
Let
\[
A_t^f=\indicator\!\big[f(\x_t)\in C_t^f(\x_t)\big],
\qquad
\hat p_T^f=\frac1T\sum_{t=1}^T A_t^f.
\]
We rely on the uniform confidence bound of GP-UCB \citep[Theorem~6]{srinivas2009gaussian}, which states that
\begin{equation}\label{eq:uniform-ci}
\probability\!\left(
\bigcap_{t\ge 1}\ \bigcap_{\x\in\calX}
\{f(\x)\in C_t^f(\x)\}
\right)
\ge 1-\delta.
\end{equation}
On the event in~\eqref{eq:uniform-ci}, in particular for each $t$ we have
$f(\x_t)\in C_t^f(\x_t)$, so $A_t^f=1$ for all $t$, hence
$\hat p_T^f=1\ge 1-\delta$.
Therefore,
\[
\probability\!\left(\hat p_T^f\ge 1-\delta\right)
\ge
\probability\!\left(
\bigcap_{t\ge 1}\ \bigcap_{\x\in\calX}
\{f(\x)\in C_t^f(\x)\}
\right)
\ge 1-\delta,
\]
which proves the claim.

\subsection{Proof of \cref{corr:extended-strong-duality}}

\begin{corollary}\label{corr:extended-strong-duality}
Let $L^\sharpness \in [0, 1]$ and $L^\calibration \in [-1, 1]$. Assume there exists a strictly feasible safe kernel hyperparameter $\btheta^\circ$ such that $L^c(\btheta^\circ) \leq - \rho$, for some $\rho > 0$. Then, it holds that
\begin{equation}
    \underset{\xi \in \Xi}{\min} \, \underset{\lambda \geq 0}{\max} \, \calL(\xi, \lambda) = \underset{\lambda \in \calS_\rho}{\max} \, \underset{\xi \in \Xi}{\min} \, \calL(\xi, \lambda) = \mathrm{OPT}
\end{equation}
where $\calS_{\rho} = [0, 1/\rho]$ is a restricted dual domain, and $\lambda) = \mathcal{L}(\xi, \lambda) = \expect_{\hat{\btheta} \sim \xi}[L^s(\hat{\btheta})] + \lambda \expect_{\hat{\btheta} \sim \xi}[L^c(\hat{\btheta})]$.
\end{corollary}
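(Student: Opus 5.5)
Since $\calL$ is bilinear (affine in $\xi$ through the expectation and linear in $\lambda$), I would prove the statement as a strong-duality result for a bilinear game with a Slater point. The plan has four steps: establish weak duality on the full dual domain, identify the primal value with $\mathrm{OPT}$, prove the reverse inequality on the truncated domain $\calS_\rho$, and finally close the chain of inequalities.

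\textbf{Step 1 (weak duality and the primal value).} For any $\xi$, $\max_{\lambda\ge0}\calL(\xi,\lambda)$ equals $\expect_\xi[L^s]$ when $\expect_\xi[L^c]\le 0$ and $+\infty$ otherwise. Hence $\min_\xi\max_{\lambda\ge0}\calL=\mathrm{OPT}$. Because $\calS_\rho\subset\real_+$, we also get
\[
\max_{\lambda\in\calS_\rho}\min_\xi\calL\le\max_{\lambda\ge0}\min_\xi\calL\le\min_\xi\max_{\lambda\ge0}\calL=\mathrm{OPT}.
\]
So it remains to show $\max_{\lambda\in\calS_\rho}\min_\xi\calL\ge\mathrm{OPT}$.

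\textbf{Step 2 (minimax on the compact dual domain).} $\Xi$ is convex (it is a set of probability measures) and $\calS_\rho=[0,1/\rho]$ is compact and convex. Since $\calL$ is affine in each argument, Sion's minimax theorem gives
\[
\max_{\lambda\in\calS_\rho}\min_\xi\calL=\min_\xi\max_{\lambda\in\calS_\rho}\calL=\min_\xi\Big(\expect_\xi[L^s]+\tfrac1\rho\big(\expect_\xi[L^c]\big)^+\Big).
\]
Lower semicontinuity in $\xi$ can be taken in the weak topology using compactness of $\bTheta$ (Assumption~\ref{assumption:rkhs}). If that is problematic, it suffices to work with $\inf$ and $\sup$, because only the value matters.

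\textbf{Step 3 (the penalty is exact, the main step).} I will show that for every $\xi$ with violation $v:=\expect_\xi[L^c]>0$,
\[
\expect_\xi[L^s]+v/\rho\ge\mathrm{OPT}.
\]
To do this, mix with the Slater point: set
\[
\xi'=(1-\alpha)\xi+\alpha\delta_{\btheta^\circ},\qquad \alpha=\frac{v}{v+\rho}\in(0,1).
\]
Then $\expect_{\xi'}[L^c]\le(1-\alpha)v-\alpha\rho=0$, so $\xi'$ is feasible. It follows that
\[
\mathrm{OPT}\le(1-\alpha)\expect_\xi[L^s]+\alpha L^s(\btheta^\circ)\le\expect_\xi[L^s]+\alpha\big(L^s(\btheta^\circ)-\expect_\xi[L^s]\big).
\]
Since $L^s\in[0,1]$, the bracketed difference is at most $1$, and $\alpha\le v/\rho$. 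Hence $\mathrm{OPT}\le\expect_\xi[L^s]+v/\rho$. For feasible $\xi$ the inequality is immediate from the definition of $\mathrm{OPT}$. This step needs $\Xi$ to be closed under mixtures and to contain $\delta_{\btheta^\circ}$. I expect this to be the main point to check, since the paper later restricts to Dirac measures. I will state it as an assumption on $\Xi$, namely that $\Xi$ consists of all probability measures on $\bTheta$, which matches~\cref{eq:opt}.

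\textbf{Step 4 (conclusion).} Combining Steps 2 and 3 gives $\max_{\lambda\in\calS_\rho}\min_\xi\calL\ge\mathrm{OPT}$. Together with Step 1 this yields all three equalities.
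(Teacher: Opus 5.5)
Your proof is correct, but it closes the argument by a different mechanism than the paper.

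\textbf{The paper's route.} It uses the Slater point on the \emph{dual} side. For $\lambda>1/\rho$ it plays $\btheta^\circ$ to get $\min_\xi\calL(\xi,\lambda)\le 1-\lambda\rho<0\le\min_\xi\calL(\xi,0)$. From this it concludes $\max_{\lambda\ge0}\min_\xi\calL=\max_{\lambda\in\calS_\rho}\min_\xi\calL$. It then invokes a minimax theorem on $\Xi\times\calS_\rho$, treating $\Xi$ as compact, and writes the chain $\mathrm{OPT}\ge\max_{\lambda\ge0}\min_\xi\calL=\max_{\lambda\in\calS_\rho}\min_\xi\calL=\min_\xi\max_{\lambda\in\calS_\rho}\calL=\mathrm{OPT}$.

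That final equality is asserted rather than derived. The inequality $\min_\xi\max_{\lambda\in\calS_\rho}\calL\le\mathrm{OPT}$ is trivial, but the reverse inequality is exactly your Step 3. It could also be obtained by applying the minimax theorem on the unrestricted domain $\Xi\times\real_+$ using compactness of $\Xi$, but the written chain does not do this.

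\textbf{Your route.} You use the Slater point on the \emph{primal} side. Mixing it into any infeasible $\xi$ shows that the exact penalty $\tfrac1\rho(\expect_\xi[L^c])^+$ already pays for any infeasibility. This gives $\min_\xi\max_{\lambda\in\calS_\rho}\calL\ge\mathrm{OPT}$ directly, and together with weak duality yields strong duality on $\real_+$ as a by-product.

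\textbf{Comparison.} Your minimax step needs compactness only of the interval $\calS_\rho$, so no topology or compactness on $\Xi$ is required; your remark about weak lower semicontinuity is unnecessary. You can even reduce to the convex image $\{(\expect_\xi[L^s],\expect_\xi[L^c])\}\subset\real^2$. Your argument also makes explicit where $L^s\in[0,1]$ enters. The paper's route gives the more intuitive picture that large multipliers are never useful, but it needs compactness of $\Xi$ to be completed. Both routes require $\Xi$ to be convex and to contain $\delta_{\btheta^\circ}$, so your explicit assumption on $\Xi$ is the same one the paper uses implicitly.
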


\textit{Proof:}

The proof is analogous to that of \cite[Theorem 3.3]{castiglioni2022unifying}. Because $\mathcal{L}(\xi, \lambda)$ is linear in both $\xi$ and $\lambda$, it is convex-concave. Furthermore, $\Xi$ and $\mathcal{S}_\rho$ are convex domains. We first show that restricting the dual domain does not change the max-min value, i.e., $\underset{\lambda \geq 0}{\max} \, \, \underset{\xi \in \Xi}{\min} \, \calL(\hat{\btheta}, \lambda) = \underset{\lambda \in \calS_\rho}{\max} \, \, \underset{\xi \in \Xi}{\min} \, \calL(\xi, \lambda)$. We note that for any $\lambda > 1 / \rho$, the primal RM can play the strictly safe hyperparameter $\xi^\circ$ such that
\begin{equation}
    \underset{\xi \in \Xi}{\min} \calL(\xi, \lambda) \leq \calL(\xi^\circ, \lambda) = L^s(\xi^\circ) + \lambda L^c(\xi^\circ)
\end{equation}
By the assumption, it holds that $L^s(\xi^\circ) \leq 1$ and $L^c(\xi^\circ) \leq - \rho$. Therefore, we find that
\begin{equation}
    \calL(\xi^\circ, \lambda) \leq 1 - \lambda \rho.
\end{equation}
Since we assume that $\lambda > 1 / \rho$, it follows that $1 - \lambda \rho < 0$. Therefore, it holds that
\begin{equation}
    \underset{\lambda \notin \calS_\rho}{\sup} \; \, \underset{\xi \in \Xi}{\min} \calL(\xi, \lambda) < 0.
\end{equation}
Furthermore, if the dual RM plays $\lambda = 0 \in \calS_\rho$, we guarantee that
\begin{equation}
\underset{\lambda \in \calS_\rho}{\max} \; \, \underset{\xi \in \Xi}{\min} \, \calL(\xi, \lambda) \geq \underset{\xi \in \Xi}{\min} \, \calL(\xi, 0) = \underset{\xi \in \Xi}{\min} \,\expect_{\hat{\btheta} \in \xi} [L^s(\hat{\btheta})] \geq 0.
\end{equation}
Since playing any $\lambda > 1 / \rho$ yields a value worse than $\lambda = 0$, we conclude that the supremum is achieved within the restricted domain, i.e.,
\begin{equation}\label{eq:feasible-dual-domain}
    \underset{\lambda \geq 0}{\max} \; \,  \underset{\xi \in \Xi}{\min} \, \calL(\xi, \lambda) =  \underset{\lambda \in \calS_\rho }{\max} \; \,  \underset{\xi \in \Xi}{\min} \, \calL(\xi, \lambda).
\end{equation}
By applying minimax theorem over the compact and convex sets $\Xi$ and $\mathcal{S}_\rho$, strong duality holds, and we obtain:
\begin{align}
    \mathrm{OPT} &= \underset{\xi \in \Xi}{\min} \, \underset{\lambda \geq 0}{\max} \, \calL(\xi, \lambda), \\
    & \geq \underset{\lambda \geq 0}{\max} \, \underset{\xi \in \Xi}{\min} \, \calL(\xi, \lambda), \\
    & =  \underset{\lambda \in \calS_\rho}{\max} \, \underset{\xi \in \Xi}{\min} \calL(\xi, \lambda), \\
    & =  \underset{\xi \in \Xi}{\min} \,\underset{\lambda \in \calS_\rho}{\max} \,  \calL(\xi, \lambda) = \mathrm{OPT}.
\end{align}

\subsection{Proof of \cref{lemma:lipschitz-property}}
\begin{lemma} \label{lemma:lipschitz-property}
    Given the hyperparameter $\btheta \in \bTheta$, we denote $\mu_{t - 1}(\x_t; \btheta)$ and $\sigma_{t - 1}(\x_t; \btheta)$ as the GP predictive mean and std on point $\x_t$ conditioned on $\dataset_{t - 1}$. For any $\hat{\btheta}, \btheta \in \bTheta$, define a metric $d(\hat{\btheta}, \btheta) = \sup_{\x, \hat{\x} \in \calX} \vert k_{\hat{\btheta}}(\x, \hat{\x}) - k_\btheta(\x, \hat{\x}) \vert$. For any $t \in [T]$, $\hat{\btheta}, \btheta \in \bTheta$, it holds that 
    \begin{itemize}
        \item $\vert \mu_{t - 1}(\x_t; \hat{\btheta}) - \mu_{t - 1}(\x_t; \btheta) \vert \leq L_{\mu_{t - 1}} d(\hat{\btheta}, \btheta)$, for some constant $L_{\mu_{t - 1}} > 0$,
        \item $\vert \beta_t^{1/2}(\delta) \, \sigma_{t - 1}(\x_t; \hat{\btheta}) - \beta_t^{1/2}(\delta) \, \sigma_{t - 1}(\x_t; \btheta) \vert \leq L_{\sigma_{t - 1}} d(\hat{\btheta}, \btheta)$, for some constant $L_{\sigma_{t - 1}} > 0$.
    \end{itemize}
\end{lemma}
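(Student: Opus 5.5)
The plan is to write both posterior quantities as smooth functions of the kernel entries, which are all bounded by $1$ in absolute value, and to bound their differences with matrix perturbation identities. Fix $t$ and write $\mathbf{A}_{\hat\btheta}=\K^{t-1}_{\hat\btheta}+\sigma^2\I$ and $\mathbf{A}_{\btheta}=\K^{t-1}_{\btheta}+\sigma^2\I$. Both matrices are positive definite with smallest eigenvalue at least $\sigma^2$, so $\|\mathbf{A}^{-1}_{\cdot}\|_2\le\sigma^{-2}$. Every entry of $\K^{t-1}_{\hat\btheta}-\K^{t-1}_{\btheta}$ and of $\kk_{\hat\btheta}(\x_t)-\kk_{\btheta}(\x_t)$ is at most $d(\hat\btheta,\btheta)$ in absolute value. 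This gives $\|\K^{t-1}_{\hat\btheta}-\K^{t-1}_{\btheta}\|_2\le (t-1)\,d$ and $\|\kk_{\hat\btheta}-\kk_{\btheta}\|_2\le\sqrt{t-1}\,d$. We also have $\|\kk_{\cdot}\|_2\le\sqrt{t-1}$, because $k\le 1$.

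\textbf{Mean.} Use the decomposition
\[
\mu_{t-1}(\x_t;\hat\btheta)-\mu_{t-1}(\x_t;\btheta)=(\kk_{\hat\btheta}-\kk_{\btheta})^\top\mathbf{A}_{\hat\btheta}^{-1}\y_{t-1}+\kk_{\btheta}^\top\bigl(\mathbf{A}_{\hat\btheta}^{-1}-\mathbf{A}_{\btheta}^{-1}\bigr)\y_{t-1},
\]
together with the resolvent identity $\mathbf{A}_{\hat\btheta}^{-1}-\mathbf{A}_{\btheta}^{-1}=\mathbf{A}_{\hat\btheta}^{-1}(\mathbf{A}_{\btheta}-\mathbf{A}_{\hat\btheta})\mathbf{A}_{\btheta}^{-1}$. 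This yields
\[
L_{\mu_{t-1}}=\sigma^{-2}\sqrt{t-1}\,\|\y_{t-1}\|_2+\sigma^{-4}(t-1)^{3/2}\|\y_{t-1}\|_2 .
\]
The constant depends on $t$ and on the realized data $\y_{t-1}$. That is acceptable, because the lemma only asks for some constant $L_{\mu_{t-1}}>0$ and the statement is deterministic given the data. If a data-free constant is wanted later, one can bound $\|\y_{t-1}\|_2$ on the sub-Gaussian event; that step is not needed here.

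\textbf{Variance.} The same expansion applied to
\[
\sigma^2_{t-1}(\x_t;\cdot)=k_{\cdot}(\x_t,\x_t)-\kk_{\cdot}^\top\mathbf{A}^{-1}_{\cdot}\kk_{\cdot}
\]
gives $|\sigma^2_{t-1}(\x_t;\hat\btheta)-\sigma^2_{t-1}(\x_t;\btheta)|\le L_{\sigma^2}\,d$ with $L_{\sigma^2}=1+2\sigma^{-2}(t-1)+\sigma^{-4}(t-1)^2$. The three terms come from the diagonal kernel difference, from the two cross terms in $\kk$, and from the resolvent term.

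\textbf{Main obstacle.} The lemma asks for a bound on the standard deviation, which is linear in $d$, whereas the estimate above is for the variance. The square root is not Lipschitz at $0$; this is why Theorem~\ref{lemma:slater-condition} carries the term $\sqrt{L_{\sigma^2}\varepsilon_T}$. To get a linear bound, use the noise floor. Compute the difference of $s_\cdot=\sqrt{\sigma^2_{t-1}(\x_t;\cdot)+\sigma^2}$, which is at least $\sigma$ and therefore Lipschitz with constant $1/(2\sigma)$ as a function of the variance. Then transfer back through
\[
|\sigma_{t-1}(\hat\btheta)-\sigma_{t-1}(\btheta)|=\frac{|\sigma^2_{t-1}(\hat\btheta)-\sigma^2_{t-1}(\btheta)|}{\sigma_{t-1}(\hat\btheta)+\sigma_{t-1}(\btheta)} .
\]
Here a positive lower bound on the posterior standard deviations is needed. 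I would try to get it first from the fact that $\bTheta$ is compact and bounded away from the origin, so the Matérn kernels are strictly positive definite, and that $\x_t$ is evaluated given only $\dataset_{t-1}$ with noise $\sigma^2>0$. Together these give $\sigma^2_{t-1}(\x_t;\cdot)\ge c_t>0$, via the Schur complement bound $\sigma^2_{t-1}\ge \sigma^2\lambda_{\min}(\K_{t}+\sigma^2\I)/\dots$. This produces $L_{\sigma_{t-1}}=\beta_t^{1/2}(\delta)L_{\sigma^2}/(2\sqrt{c_t})$.

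If this lower bound proves too delicate, the fallback is to state the second item with $\sqrt{d}$. That is exactly the form consumed in Theorem~\ref{lemma:slater-condition}, where only the ``$\varepsilon$-net plus Lipschitz'' extension matters.
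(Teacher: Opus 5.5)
Your treatment of the mean and of the variance is the same as the paper's. You use the same splitting, the same resolvent identity and norm bounds, and you get the same constants $L_{\mu_{t-1}}$ and $L_{\sigma^2_{t-1}}=(1+(t-1)/\sigma^2)^2$.

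The two routes part at the passage from variance to standard deviation. The paper uses $|\sqrt{C}-\sqrt{D}|\le\sqrt{|C-D|}$, which gives $|\sigma_{t-1}(\x_t;\hat\btheta)-\sigma_{t-1}(\x_t;\btheta)|\le\sqrt{L_{\sigma^2_{t-1}}\,d(\hat\btheta,\btheta)}$. It then simply sets $L_{\sigma_{t-1}}=\sqrt{L_{\sigma^2_{t-1}}}$. So what the paper actually establishes is your fallback: a $\sqrt{d}$ bound, without the $\beta_t^{1/2}$ factor. That is exactly the form consumed in Theorem~\ref{lemma:slater-condition}. Your detour through a lower bound on the posterior standard deviation is what genuinely delivers the linear-in-$d$ statement. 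It costs a $t$-dependent factor $1/(2\sqrt{c_t})$, which is harmless. With $\varepsilon_T=T^{-4}$ the resulting discretization error is $O(\beta_T^{1/2}T^{-3/2})$, better than the paper's $O(\beta_T^{1/2}T^{-1})$.

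\textbf{The step to firm up is $c_t$.} Strict positive definiteness and $\lambda_{\min}(\K_t)$ are the wrong handle. The BO trajectory may revisit or nearly revisit earlier inputs, so that eigenvalue can be zero or arbitrarily small. The Schur-complement bound on $\K_t+\sigma^2\I$ only controls $\sigma^2_{t-1}+\sigma^2$. Compactness of $\bTheta$ plus continuity would give some positive, data-dependent minimum, but no explicit value.

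The floor comes from the noise alone. Write the kernel matrix on $(\x_1,\dots,\x_{t-1},\x_t)$ as a Gram matrix: let $G$ have rows $g_1^\top,\dots,g_{t-1}^\top$, and let $g$ be the vector for $\x_t$. The push-through identity then gives
\[
\sigma^2_{t-1}(\x_t;\btheta)=\sigma^2\, g^\top\bigl(G^\top G+\sigma^2\I\bigr)^{-1}g\;\ge\;\frac{\sigma^2\,k_{\btheta}(\x_t,\x_t)}{\sigma^2+\lambda_{\max}(\K^{t-1}_{\btheta})}\;\ge\;\frac{\sigma^2\,k_{\btheta}(\x_t,\x_t)}{\sigma^2+t-1}.
\]
For the unit-output-scale Mat\'ern family, $k_{\btheta}(\x_t,\x_t)=1$ for every $\btheta$. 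Hence $c_t=\sigma^2/(\sigma^2+t-1)$ uniformly over $\bTheta$, and
\[
L_{\sigma_{t-1}}=\beta_t^{1/2}(\delta)\,\bigl(1+(t-1)/\sigma^2\bigr)^2\sqrt{\sigma^2+t-1}/(2\sigma).
\]
This uses $\inf_{\btheta}k_{\btheta}(\x,\x)>0$, which Assumption~\ref{assumption:rkhs} does not state; it assumes only $k\le 1$. Without that condition the linear bound can fail, so your $\sqrt{d}$ fallback is the assumption-free version.
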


\textit{Proof:}
Define $\tilde{\K}_{\hat{\btheta}}^{t - 1} = \K_{\hat{\btheta}}^{t - 1} + \sigma^{2} \I$, for any $\hat{\btheta} \in \bTheta$. By the definition of the metric $d$, we can generalize the bound to vector and matrix $2$-norms. Specifically, it holds that
\begin{itemize}
    \item $\vert k_{\hat{\btheta}}(\x_t, \x_t) - k_{\btheta}(\x_t, \x_t) \vert \leq d(\hat{\btheta}, \btheta)$,
    \item $\Vert \kk_{\btheta}(\x_t) \Vert_2 \leq \sqrt{t - 1}$, since for any $\x, \hat{\x} \in \calX$, and $\btheta \in \bTheta$, it holds that $k_\btheta(\x, \hat{\x}) \leq 1$,
    \item $\Vert \kk_{\hat{\btheta}}(\x_t) - \kk_\btheta(\x_t) \Vert_2 \leq \sqrt{t - 1} d(\hat{\btheta}, \btheta)$,
    \item $\Vert \K_{\hat{\btheta}}^{t - 1} - \K_\btheta^{t - 1} \Vert_2 \leq (t - 1) d(\hat{\btheta}, \btheta)$,
\end{itemize}
for any $\hat{\btheta}, \btheta \in \bTheta$. For the rest of the proof, we will also utilize the bound of the spectral norm of the inverse covariance matrix, i.e., $\Vert (\tilde{\K}_{\btheta}^{t - 1})^{-1} \Vert_2 \leq 1 / \sigma^2$.

\paragraph{Lipschitz continuity of the predictive mean}
By the definition of the GP predictive mean, we have
\begin{align}
    &\vert \mu_{t - 1}(\x_t; \hat{\btheta}) - \mu_{t - 1}(\x_t; \btheta) \vert = \vert \kk_{\hat{\btheta}}(\x_t)^\top (\tilde{\K}_{\hat{\btheta}}^{t - 1})^{-1} \y_{t - 1} - \kk_\btheta(\x_t)^\top (\tilde{\K}_{\btheta}^{t - 1})^{-1} \y_{t - 1}  \vert, \\
    &= \vert (\kk_{\hat{\btheta}}(\x_t) - \kk_{\btheta}(\x_t))^\top (\tilde{\K}_{\hat{\btheta}}^{t - 1})^{-1} \y_{t - 1} + \kk^\top_\btheta(\x_t) ((\tilde{\K}_{\hat{\btheta}}^{t - 1})^{-1} - (\tilde{\K}_{\btheta}^{t - 1})^{-1}) \y_{t - 1} \vert.
\end{align}
By applying the triangle inequality and the Cauchy-Schwarz inequality, we obtain
\begin{align}
    &\vert \mu_{t - 1}(\x_t; \hat{\btheta}) - \mu_{t - 1}(\x_t; \btheta) \vert \nonumber \\
    &\leq \Vert \kk_{\hat{\btheta}}(\x_t) - \kk_{\btheta}(\x_t) \Vert_2 \Vert \tilde{\K}_{\hat{\btheta}}^{t - 1} \Vert_2 \Vert \y_{t - 1} \Vert_2 + \Vert \kk_{\btheta} \Vert_2 \Vert (\tilde{\K}_{\hat{\btheta}}^{t - 1})^{-1} - (\tilde{\K}_{\btheta}^{t - 1})^{-1} \Vert_2 \Vert \y_{t - 1} \Vert_2.
\end{align}
Next, we apply the matrix inverse difference identity $A^{-1} - B^{-1} = A^{-1} (B - A) B^{-1}$, and the Cauchy-Schwarz inequality on $(\tilde{\K}_{\hat{\btheta}}^{t - 1})^{-1} - (\tilde{\K}_{\btheta}^{t - 1})^{-1}$ to obtain
\begin{equation}
\Vert (\tilde{\K}_{\hat{\btheta}}^{t - 1})^{-1} - (\tilde{\K}_{\btheta}^{t - 1})^{-1} \Vert_2 \leq \Vert (\tilde{\K}_{\hat{\btheta}}^{t - 1})^{-1} \Vert_2 \Vert \tilde{\K}_\btheta^{t - 1} - \tilde{\K}_{\hat{\btheta}}^{t - 1} \Vert_2 \Vert (\tilde{\K}_{\btheta}^{t - 1})^{-1} \Vert_2. 
\end{equation}
By applying the obtained scalar, vector, and matrix norms bounds, it holds that
\begin{align}
    \vert \mu_{t - 1}(\x_t; \hat{\btheta}) - \mu_{t - 1}(\x_t; \btheta) \vert \leq \, \Vert \y_{t - 1} \Vert_2 \left(\frac{\sqrt{t - 1} }{\sigma^2} \, + \frac{ \sqrt{t - 1} \, (t - 1) }{\sigma^4} \right) d(\hat{\btheta}, \btheta).
\end{align}
Choosing $L_{\mu_{t - 1}} = \Vert \y_{t - 1} \Vert_2 \left(\frac{\sqrt{t - 1} }{\sigma^2} \, + \frac{ \sqrt{t - 1} \, (t - 1) }{\sigma^4} \right)$ completes the first part of the proof.

\paragraph{Lipschitz continuity of the predictive standard-deviation}
We start by showing that the predictive GP variance is Lipschitz continuous w.r.t. the metric $d$. Similar to the predictive mean, we apply the triangle inequality to obtain
\begin{align}
&\vert \sigma^2_{t - 1}(\x_t; \hat{\btheta}) - \sigma^2_{t - 1}(\x_t; \btheta) \vert \nonumber \\ 
&\leq \vert k_{\hat{\btheta}}(\x_t, \x_t) -  k_\btheta(\x_t, \x_t) \vert + \vert (\kk_{\hat{\btheta}}(\x_t) - \kk_\btheta(\x_t))^\top (\tilde{\K}_{\hat{\btheta}})^{-1} \kk_{\hat{\btheta}}(\x_t) \vert  \nonumber \\
& \quad  +  \vert \kk_{\hat{\btheta}}^\top(\x_t) ( (\tilde{\K}_{\hat{\btheta}}^{t - 1})^{-1} - (\tilde{\K}_\btheta^{t - 1})^{-1} ) \kk_\btheta(\x_t) \vert + \vert \kk_{\btheta}^\top(\x_t) (\tilde{\K}_{\btheta}^{t - 1})^{-1} (\kk_{\hat{\btheta}}(\x_t) - \kk_\btheta(\x_t) ) \vert .
\end{align}
Then, by applying the Cauchy-Schwarz inequality, the matrix inverse difference identity, and the norm bounds, it holds that
\begin{align}
\vert \sigma_{t - 1}^2(\x_t; \hat{\btheta}) - \sigma_{t - 1}^2(\x_t; \btheta) \vert &\leq d(\hat{\btheta}, \btheta) + \frac{2 (t - 1) \, d(\hat{\btheta}, \btheta)}{\sigma^2} + \frac{(t - 1)^2  d(\hat{\btheta}, \btheta)}{\sigma^4}, \\
&= L_{\sigma^2_{t - 1}} d(\hat{\btheta}, \btheta),
\end{align}
where $L_{\sigma_{t - 1}^2} = \left( 1  + \frac{t - 1}{\sigma^2} \right)^2$. Since it holds that $\vert \sqrt{C} - \sqrt{D} \vert \leq \sqrt{\vert C - D \vert}$ for $C, D \geq 0$, we find that.
\begin{align}
    \vert \sigma_{t - 1}(\x_t; \hat{\btheta}) - \sigma_{t - 1}(\x_t; \btheta) \vert &\le \sqrt{\vert \sigma^2_{t - 1}(\x_t; \hat{\btheta}) - \sigma^2_{t - 1}(\x_t; \btheta) \vert}, \\
    &\le \sqrt{\left( 1  + \frac{t - 1}{\sigma^2} \right)^2 d(\hat{\btheta}, \btheta)}
\end{align}

Thus, we set $L_{\sigma_{t - 1}} = \sqrt{L_{\sigma^2_{t - 1}}}$ to complete our proof.

\subsection{Proof of \cref{lemma:utility-lispchitz-property}}
\begin{lemma}\label{lemma:utility-lispchitz-property}
Let $L_s^t(\btheta)$ and $L_t^c(\btheta)$ be the sharpness loss and calibration constraint functions defined as in \cref{eq:sharpness-reward} and \cref{eq:calibration-constraint}, respectively. Define $u_t^P(\btheta) = L_t^s(\btheta) + \lambda_{t - 1} L_t^c(\btheta)$ as the primal utility function. Furthermore, assume that the kernel $k_\btheta$ is Lipschitz continuous, e.g., Matérn kernel with $\nu = \frac{5}{2}$. On the high-probability observation bound event $E_\mathrm{Y}$ (established in \cref{lemma:slater-condition}), $u_t^P$ is a Lipschitz continuous function w.r.t $\ell_1$ norm, i.e., $\vert u_t^P(\hat{\btheta}) - u_t^P(\btheta)\vert \leq K \Vert \hat{\btheta} - \btheta \Vert_1, \; K > 0$, for any $\hat{\btheta}, \btheta \in \bTheta$.
\end{lemma}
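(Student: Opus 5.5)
The plan is to reduce Lipschitz continuity in $\btheta$ (w.r.t.\ $\ell_1$) to Lipschitz continuity in the kernel metric $d$, which \cref{lemma:lipschitz-property} already provides for $\mu_{t-1}$ and $\sigma^2_{t-1}$. It then suffices to show two things. First, $d(\hat\btheta,\btheta)\le L_k\|\hat\btheta-\btheta\|_1$. Second, $L_t^s$ and $L_t^c$ are Lipschitz functions of $(\mu_{t-1}(\x_t;\cdot),\sigma^2_{t-1}(\x_t;\cdot))$ on the relevant range. Since $\lambda_{t-1}\in[0,1/\tilde\rho]$ is bounded, the triangle inequality then gives $|u_t^P(\hat\btheta)-u_t^P(\btheta)|\le |L_t^s(\hat\btheta)-L_t^s(\btheta)|+\tilde\rho^{-1}|L_t^c(\hat\btheta)-L_t^c(\btheta)|$.

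\textbf{Kernel step.} For Matérn-$5/2$ with lengthscale $\btheta$, $k_\btheta(\x,\x')=\kappa(r_\btheta)$ with $r_\btheta=\|(\x-\x')/\btheta\|$. The profile $\kappa$ has bounded derivative on $[0,\infty)$. Because $\bTheta$ is compact and bounded away from the origin (\cref{assumption:rkhs}) and $\calX$ is compact, $\partial r_\btheta/\partial\theta_i=-(x_i-x_i')^2/(\theta_i^3 r_\btheta)$ is bounded uniformly. This gives $\|\nabla_\btheta k_\btheta(\x,\x')\|_\infty\le L_k$, and the mean value theorem yields $d(\hat\btheta,\btheta)\le L_k\|\hat\btheta-\btheta\|_1$.

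\textbf{Sharpness term.} The map $v\mapsto\log(1+\sigma^{-2}v)/\log(1+\sigma^{-2})$ has derivative at most $\sigma^{-2}/\log(1+\sigma^{-2})$ for $v\ge0$. So $|L_t^s(\hat\btheta)-L_t^s(\btheta)|\le C\,L_{\sigma^2_{t-1}}d(\hat\btheta,\btheta)$, using the variance bound established in the proof of \cref{lemma:lipschitz-property}.

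\textbf{Calibration term.} Write $L_t^c+1=\beta_t^{-p/2}|y_t-\mu|^p/s^p$ with $s=\sqrt{\sigma^2_{t-1}+\sigma^2}\ge\sigma$. The denominator never degenerates, and $s\mapsto s^{-p}$ is $p\sigma^{-p-1}$-Lipschitz on $[\sigma,\infty)$. Also $s$ is Lipschitz in $\sigma^2_{t-1}$ with constant $1/(2\sigma)$. For the numerator, $z\mapsto|z|^p$ is Lipschitz on bounded sets: with constant $1$ for $p=1$ and constant $2Z$ on $[-Z,Z]$ for $p=2$. This is where the event $E_{\mathrm Y}$ enters. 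On the safe event of \cref{lemma:slater-condition}, $|f(\x)|\le B$ and the noise is bounded by a logarithmic factor. Hence $|y_t|$ and $\|\y_{t-1}\|_2$ are bounded, which makes $L_{\mu_{t-1}}$ finite and gives $|y_t-\mu_{t-1}(\x_t;\cdot)|\le B+\|\y_{t-1}\|_2\sqrt{t-1}/\sigma^2=:Z_t$. A product rule for bounded Lipschitz functions then gives $|L_t^c(\hat\btheta)-L_t^c(\btheta)|\le K_c\,(L_{\mu_{t-1}}+L_{\sigma^2_{t-1}})\,d(\hat\btheta,\btheta)$, with $K_c$ depending on $\beta_t,\sigma,p,Z_t$.

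\textbf{Main obstacle.} The delicate step is the numerator bound and the choice of working with $\sigma^2_{t-1}$ rather than $\sigma_{t-1}$. \cref{lemma:lipschitz-property} only gives a square-root (Hölder) bound for $\sigma_{t-1}$, which would not be Lipschitz. I avoid it by routing everything through $\sigma^2_{t-1}$, which is genuinely Lipschitz in $d$, together with the floor $s\ge\sigma$. Combining the steps gives $K=L_k\bigl(C L_{\sigma^2_{t-1}}+\tilde\rho^{-1}K_c(L_{\mu_{t-1}}+L_{\sigma^2_{t-1}})\bigr)$. This constant is finite for each $t\le T$ (polynomial in $T$), which is what the FTPL analysis of \cite{suggala2020online} requires.
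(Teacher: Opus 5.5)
Your proposal is correct and follows essentially the same route as the paper's proof. You obtain Lipschitzness of $\mu_{t-1}$ and $\sigma^2_{t-1}$ in $d$ from \cref{lemma:lipschitz-property}, compose with the logarithm for $L_t^s$, and use a bounded-product mean-value argument for $L_t^c$ routed through the variance, with $E_{\mathrm Y}$ bounding the residual and $L_{\mu_{t-1}}$. The differences are minor. You explicitly derive $d(\hat{\btheta},\btheta)\le L_k\|\hat{\btheta}-\btheta\|_1$ for Mat\'ern-$5/2$, which the paper simply assumes, and you bound $\lambda_{t-1}\le 1/\tilde\rho$ instead of keeping $\lambda_{t-1}$ in $K$. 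There is one trivial slip: $Z_t$ should start from the $E_{\mathrm Y}$ bound $|y_t|\le B+R\sqrt{2\log(12T/\delta)}$, not from $B$ alone.
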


\textit{Proof:}
The strategy is to show that both $L_t^s$ and $L_t^c$ are Lipschitz continuous functions w.r.t. $\ell_1$ norm. Then, since Lipschitz continuity is preserved under linear combination, $u_t^P$ will remain Lipschitz continuous. Assuming that the kernel is Lipschitz continuous and applying the results from \cref{lemma:lipschitz-property}, we have that
\begin{align}
    &\vert \mu_{t - 1}(\x_t; \hat{\btheta}) - \mu_{t - 1}(\x_t; \btheta) \vert \leq L_{\mu_{t - 1}} d(\hat{\btheta}, \btheta) \leq L_\mu \Vert \hat{\btheta} - \btheta \Vert_1, \\
    & \vert \sigma^2_{t -  1}(\x_t; \hat{\btheta}) - \sigma^2_{t - 1}(\x_t; \btheta) \vert \leq L_{\sigma^2_{t - 1}} d(\hat{\btheta}, \btheta) \leq L_{\sigma^2} \Vert \hat{\btheta} - \btheta \Vert_1,
\end{align}
for any $\hat{\btheta}, \btheta \in \bTheta$.

\paragraph{Lipschitz continuity of $L_s^t(\btheta)$} We first show that the scalar function $h(z) = \frac{1}{\log(1 +  \sigma^{-2})} \log(1 + \sigma^{-2} z), z \geq 0$ is Lipschitz continuous. By taking the first derivative of $h$, we find that
\begin{equation}
    h^\prime(z) = \frac{1}{\log(1 + \sigma^{-2})} \frac{\sigma^{-2}}{1 + \sigma^{-2} z}.
\end{equation}
Since $\sigma_{t - 1}(\x_t; \hat{\btheta}) \geq 0$ for any $\hat{\btheta} \in \bTheta$ and $t \in [T]$, it follows that $\max_{z \geq 0} \vert h'(z) \vert = \frac{\sigma^{-2}}{\log(1 + \sigma^{-2})}$. By the mean value theorem, we obtain
\begin{align}
\vert L_t^s(\hat{\btheta}) - L_t^s(\btheta) \vert
&\le
\frac{\sigma^{-2}}{\log(1 + \sigma^{-2})}
\vert \sigma^2_{t - 1}(\x_t; \hat{\btheta}) - \sigma^2_{t - 1}(\x_t; \btheta) \vert, \\
&\le
\frac{L_{\sigma^2} \sigma^{-2}}{\log(1 + \sigma^{-2})}
\Vert \hat{\btheta} - \btheta \Vert_1.
\end{align}
\textbf{Lipschitz continuity of $L_t^c(\btheta)$} \\

\textbf{Case 1: $p=1$.} Define $A(\btheta) = \vert y_t - \mu_{t - 1}(\x_t; \hat{\btheta}) \vert$, $B(\hat{\btheta}) = \frac{1}{\sqrt{\beta_t(\delta)}\sqrt{\sigma^{2} + \sigma^2_{t - 1}(\x_t; \hat{\btheta})}}$, for any $\hat{\btheta} \in \bTheta$. Since the prediction $y_t$ is fixed at round $t$, $A(\hat{\btheta})$ is bounded by $E_t = \sup_{\bar{\btheta} \in \bTheta} \vert y_t - \mu_{t - 1}(\x_t; \bar{\btheta}) \vert$. By the definition of $L_t^c$ with $p=1$, and applying the triangle inequality, it follows that
\begin{equation}
\vert A(\hat{\btheta}) B(\hat{\btheta}) - A(\btheta) B(\btheta) \vert 
\leq B(\hat{\btheta}) \vert A(\hat{\btheta}) - A(\btheta) \vert + A(\btheta) \vert B(\hat{\btheta}) - B(\btheta) \vert.
\end{equation}
The goal is to bound each term in the inequality above. We notice that $B(\btheta) \leq \frac{1}{ \sqrt{\beta_t(\delta)} \sigma}$ for all $\btheta \in \bTheta$. Let $g(z) = \frac{1}{\sqrt{\beta_t(\delta)} \sqrt{\sigma^2 + z}}$, with the corresponding first derivative $g^\prime(z) = -\frac{1}{2 \sqrt{\beta_t(\delta)}} (\sigma^2 + z)^{-3/2}$. Furthermore, we define the maximum absolute first derivative $\max_{z \geq 0} \vert g^\prime(z) \vert = \frac{1}{2 \sqrt{\beta_t(\delta)} \sigma^3}$. By the mean value theorem, we obtain
\begin{align}
    \vert B(\hat{\btheta}) - B(\btheta) \vert
    &\le \frac{1}{2 \sqrt{\beta_t(\delta)} \sigma^3}
    \vert \sigma^2_{t - 1}(\x_t; \hat{\btheta}) - \sigma^2_{t - 1}(\x_t; \btheta) \vert, \\
    &\le \frac{L_{\sigma^2}}{2 \sigma^3 \sqrt{\beta_t(\delta)}}
    \Vert \hat{\btheta} - \btheta \Vert_1,
\end{align}
for any $\hat{\btheta}, \btheta \in \btheta$. Next, we apply the reverse triangle inequality to obtain
\begin{align}
\vert A(\hat{\btheta}) - A(\btheta) \vert &= \vert \vert y_t - \mu_{t - 1}(\x_t; \hat{\btheta}) \vert - \vert y_t - \mu_{t - 1}(\x_t; \btheta) \vert \vert, \\
& \leq \vert (y_t - \mu_{t - 1}(\x_t; \hat{\btheta})) - (y_t - \mu_{t - 1}(\x_t; \btheta)) \vert, \\ 
&= \vert \mu_{t - 1}(\x_t; \hat{\btheta}) - \mu_{t - 1}(\x_t; \btheta) \vert, \\
& \leq L_\mu \Vert \hat{\btheta} - \btheta \Vert_1.
\end{align}
Combining all inequalities, we find that
\begin{align}
    \vert A(\hat{\btheta}) B(\hat{\btheta}) - A(\btheta) B(\btheta) \vert   
     &\leq  B(\hat{\btheta}) \vert A(\hat{\btheta}) - A(\btheta) \vert + A(\btheta) \vert B(\hat{\btheta}) - B(\btheta) \vert,  \\
    &\leq \left( \frac{L_\mu}{\sqrt{\beta_t(\delta)} \sigma}  + \frac{E_t L_{\sigma^2}}{ 2 \sigma^3 \sqrt{\beta_t(\delta)}} \right) \Vert \hat{\btheta} - \btheta \Vert_1. \label{eq:Lipschitz-Lct-1}
\end{align}

\textbf{Case 2: $p=2$.} For $L_t^c$ with $p=2$, we need to show that the squared product $(A(\hat{\btheta}) \, B(\hat{\btheta}))^2$ is Lipschitz continuous:
\begin{align}
    \vert (A(\hat{\btheta}) \, B(\hat{\btheta}))^2 - (A(\btheta) \, B(\btheta))^2 \vert &= \vert A(\hat{\btheta}) B(\hat{\btheta}) +  A(\btheta) B(\btheta)  \vert \, \vert  A(\hat{\btheta}) B(\hat{\btheta}) -  A(\btheta) B(\btheta) \vert, \\
    &\leq  2 \max_{\bar{\btheta} \in \bTheta} \vert  A(\bar{\btheta}) B(\bar{\btheta}) \vert \, C_{AB} \Vert \hat{\btheta} - \btheta \Vert_1, \\
    & \leq \frac{2 E_t \, C_{AB}}{\sqrt{\beta_t(\delta)} \sigma} \Vert \hat{\btheta} - \btheta \Vert_1, 
\end{align}
where $C_{AB} = \left( \frac{L_\mu}{\sqrt{\beta_t(\delta)} \sigma}  + \frac{E_t L_{\sigma^2}}{ 2 \sigma^3 \sqrt{\beta_t(\delta)}} \right)$. The first inequality follows from the triangle inequality and \cref{eq:Lipschitz-Lct-1}, while the last one stems from the fact that $A$ and $B$ functions are bounded.

\paragraph{Lipschitz continuity of $u_t^P$} Finally, we apply the triangle inequality and apply the Lipschitz continuity results of $L_t^s$ and $L_t^c$:
\begin{align}
    \vert u_t^P(\hat{\btheta}) - u_t^P(\btheta) \vert &\leq \vert L_t^s(\hat{\btheta}) - L_t^s(\btheta) \vert + \lambda_{t - 1} \vert L_t^c(\hat{\btheta}) - L_t^c(\btheta) \vert \\
    &\leq K \Vert \hat{\btheta} - \btheta \Vert_1,
\end{align}
where $K =  \frac{L_{\sigma^2} \sigma^{-2}}{\log(1 + \sigma^{-2})} + \lambda_{t - 1} C_{AB}$ for $L_t^c$ with $p=1$, and $K= \frac{L_{\sigma^2} \sigma^{-2}}{\log(1 + \sigma^{-2})} + \lambda_{t - 1} \frac{2 E_t \, C_{AB}}{ \sqrt{\beta_t(\delta)} \sigma }$ for $p=2$, completing our proof.

\subsection{Proof of \cref{lemma:slater-condition}}\label{app:slater-condition}

\lemmaslater*

\textit{Proof:}

\textbf{Uniform Confidence Interval Bound}

We first formally state the confidence interval bound result from \cite{chowdhury2017kernelized}:

\begin{theorem}\label{theorem:chowdury-CI} \cite{chowdhury2017kernelized}
Let $\btheta$ be the hyperparameter of the true kernel $k$. Define $\beta^{1/2}_t(\delta) = B + R \sqrt{2 (\gamma_{t - 1}(\btheta) + 1 + \log(6 / \delta))}$, for some $R \geq 0$, and $\gamma_{t - 1}(\btheta)$ denote the maximum information gain of a kernel function $k_\btheta$ after $t - 1$ rounds. Furthermore, we set $\sigma^{2} = 1 + 2/T$, where $T > 1$ is the total number of rounds. Under \cref{assumption:rkhs}, with a probability of at least $1 -  \delta / 6, \, \delta \in (0, 1)$, the event $\vert \mu_{t - 1}(\x_t; \btheta) - f(\x) \vert \leq \beta_t^{1/2}(\delta) \sigma_{t - 1}(\x_t; \btheta)$ holds for all $\x \in \calX$ and $t \geq 1$.
\end{theorem}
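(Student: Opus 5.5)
This result is \cite[Theorem~2]{chowdhury2017kernelized}, taken with the true kernel $k_\btheta$ held fixed, so the plan is to reproduce their argument at confidence level $\delta/6$. Since the kernel is fixed, no hyperparameter adaptation enters. Write $\lambda:=\sigma^2=1+\eta$ with $\eta=2/T$, and work in the feature space of $k_\btheta$: $\phi(\x)=k_\btheta(\x,\cdot)$. Let $\Phi_{t-1}$ be the operator stacking $\phi(\x_1),\dots,\phi(\x_{t-1})$ and $V_{t-1}=\Phi_{t-1}^\top\Phi_{t-1}+\lambda\I$.

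In this notation $\mu_{t-1}(\x;\btheta)=\phi(\x)^\top V_{t-1}^{-1}\Phi_{t-1}^\top\y_{t-1}$ and $\sigma^2_{t-1}(\x;\btheta)=\lambda\,\phi(\x)^\top V_{t-1}^{-1}\phi(\x)$. Substituting $\y_{t-1}=\Phi_{t-1}f+\boldsymbol\epsilon_{t-1}$ splits the error into a bias term and a noise term:
\[
f(\x)-\mu_{t-1}(\x;\btheta)=\lambda\,\phi(\x)^\top V_{t-1}^{-1}f-\phi(\x)^\top V_{t-1}^{-1}\Phi_{t-1}^\top\boldsymbol\epsilon_{t-1}.
\]
I will bound each term by Cauchy--Schwarz in the $V_{t-1}^{-1}$-geometry.

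The bias term is at most $\lambda\|\phi(\x)\|_{V_{t-1}^{-1}}\|f\|_{V_{t-1}^{-1}}\le \sqrt{\lambda}\,\|f\|_{k_\btheta}\,\|\phi(\x)\|_{V_{t-1}^{-1}}\sqrt{\lambda}$. Using $\|f\|_{k_\btheta}\le B$ (\cref{assumption:rkhs}), this equals $B\,\sigma_{t-1}(\x;\btheta)$. The noise term is at most $\|\phi(\x)\|_{V_{t-1}^{-1}}\,\|\Phi_{t-1}^\top\boldsymbol\epsilon_{t-1}\|_{V_{t-1}^{-1}}=\lambda^{-1/2}\sigma_{t-1}(\x;\btheta)\,\|\Phi_{t-1}^\top\boldsymbol\epsilon_{t-1}\|_{V_{t-1}^{-1}}$. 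It therefore remains to show, uniformly in $t$,
\[
\|\boldsymbol\epsilon_{t-1}\|^2_{((\K^{t-1}_\btheta+\eta\I)^{-1}+\I)^{-1}}\le 2R^2\bigl(\gamma_{t-1}(\btheta)+1+\log(6/\delta)\bigr),
\]
which is the kernelized form of the self-normalized norm (the choice $\lambda=1+\eta$ is what produces this $\eta$-regularized form).

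This self-normalized bound is the main obstacle. I would follow Chowdhury--Gopalan's method of mixtures. Take a Gaussian process $g\sim\mathcal{GP}(0,k_\btheta/\eta)$ independent of the noise, and form the process $M_t=\expect_g\exp\bigl(\sum_{s\le t}(\epsilon_s g(\x_s)/R-\tfrac12 g(\x_s)^2)\bigr)$. By conditional $R$-sub-Gaussianity with respect to $\calF_{t-1}$, each integrand is a nonnegative supermartingale, and $\x_s$ is $\calF_{s-1}$-measurable, so the adaptive choice of query points causes no difficulty. Computing the Gaussian integral in closed form gives
\[
M_t=\det(\I+\eta^{-1}\K^t_\btheta)^{-1/2}\exp\Bigl(\tfrac{1}{2R^2}\|\boldsymbol\epsilon_t\|^2_{((\K_\btheta^t+\eta\I)^{-1}+\I)^{-1}}\Bigr).
\]
Ville's maximal inequality (via a stopping-time argument) then yields, with probability at least $1-\delta/6$ and simultaneously for all $t$, $\|\boldsymbol\epsilon_t\|^2\le 2R^2\log\bigl(\sqrt{\det(\I+\eta^{-1}\K_\btheta^t)}\,/(\delta/6)\bigr)$.

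Finally, I bound the log-determinant by the information gain. Since $\eta^{-1}=T/2$ and $1+\eta=\sigma^2$, I will use
\[
\tfrac12\log\det(\I+\eta^{-1}\K)\le\tfrac12\log\det(\I+(1+\eta)^{-1}\K)+\tfrac{t\eta}{2}\le\gamma_t(\btheta)+1 \quad\text{for } t\le T,
\]
where the middle step compares the eigenvalues of the two regularizations. Adding the bias and noise contributions gives $|f(\x)-\mu_{t-1}(\x;\btheta)|\le\bigl(B+R\sqrt{2(\gamma_{t-1}(\btheta)+1+\log(6/\delta))}\bigr)\sigma_{t-1}(\x;\btheta)$, which is the stated $\beta_t^{1/2}(\delta)$. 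The bound holds for all $\x\in\calX$ (in particular at $\x_t$) and all $t\ge1$ on the single event from Ville's inequality.
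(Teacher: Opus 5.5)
Your route is the right one: the bias/noise split in the feature space of $k_\btheta$, Cauchy--Schwarz in the $V_{t-1}^{-1}$ geometry, a method-of-mixtures self-normalized bound closed by Ville's inequality, and a final log-determinant-to-information-gain step. This is the Chowdhury--Gopalan argument, which the paper imports by citation without proof. However, the mixture step does not go through as written. With $g\sim\mathcal{GP}(0,k_\btheta/\eta)$ you have $(g(\x_1),\dots,g(\x_t))\sim\normal(0,\eta^{-1}\K^t_\btheta)$, and the Gaussian integral gives
\[
M_t=\det\bigl(\I+\eta^{-1}\K^t_\btheta\bigr)^{-1/2}\exp\Bigl(\tfrac{1}{2R^2}\,\boldsymbol{\epsilon}_t^\top\K^t_\btheta\bigl(\K^t_\btheta+\eta\I\bigr)^{-1}\boldsymbol{\epsilon}_t\Bigr).
\]
Your closed form pairs this determinant with the quadratic form $((\K+\eta\I)^{-1}+\I)^{-1}=(\K+\eta\I)(\K+(1+\eta)\I)^{-1}$, which comes from a different mixing covariance, namely $\K+\eta\I$. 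The decisive failure is your last step. The claimed eigenvalue comparison $\tfrac12\log\det(\I+\eta^{-1}\K)\le\tfrac12\log\det(\I+(1+\eta)^{-1}\K)+\tfrac{t\eta}{2}$ is false, because the left side is an information gain at noise variance $\eta=2/T$ rather than $\sigma^2=1+\eta$. Already for $t=1$ with $k_\btheta(\x_1,\x_1)=1$ the left side equals $\tfrac12\log(1+T/2)$, which grows with $T$, while $\gamma_1(\btheta)+1<\tfrac12\log 2+1$. In general each eigenvalue of $\K$ can contribute an excess of up to $\tfrac12\log\bigl(T(1+\eta)/2\bigr)$, so you do not obtain the stated $\beta_t$. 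The factorization you seem to have in mind, $\det((1+\eta)\I+\K)=(1+\eta)^t\det(\I+(1+\eta)^{-1}\K)$, concerns a different determinant.

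There are two clean repairs.

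\emph{Mix with covariance $\K^t_\btheta+\eta\I$.} For example, take $g_s=h(\x_s)+\xi_s$ with $h\sim\mathcal{GP}(0,k_\btheta)$ and i.i.d.\ $\xi_s\sim\normal(0,\eta)$, all independent of the noise. This yields $M_t=\det((1+\eta)\I+\K^t_\btheta)^{-1/2}\exp\bigl(\tfrac{1}{2R^2}\|\boldsymbol{\epsilon}_t\|^2_{((\K^t_\btheta+\eta\I)^{-1}+\I)^{-1}}\bigr)$. By the factorization above and $\log(1+\eta)\le\eta$, you then get the bound with $\gamma_t(\btheta)+1$ for $t\le T$. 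On this route you must also justify the reduction you only assert, $\lambda^{-1}\|\Phi_{t-1}^\top\boldsymbol{\epsilon}_{t-1}\|^2_{V_{t-1}^{-1}}\le\|\boldsymbol{\epsilon}_{t-1}\|^2_{((\K+\eta\I)^{-1}+\I)^{-1}}$. It holds eigenvalue-wise because $\kappa/\bigl((1+\eta)(\kappa+1+\eta)\bigr)\le(\kappa+\eta)/(\kappa+1+\eta)$.

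\emph{Mix over $\mathcal{GP}(0,k_\btheta/\lambda)$ with $\lambda=\sigma^2$.} This is simpler. It bounds $\|\Phi_{t-1}^\top\boldsymbol{\epsilon}_{t-1}\|^2_{V_{t-1}^{-1}}=\boldsymbol{\epsilon}_{t-1}^\top\K^{t-1}_\btheta(\K^{t-1}_\btheta+\lambda\I)^{-1}\boldsymbol{\epsilon}_{t-1}$ directly by $2R^2\bigl(\gamma_{t-1}(\btheta)+\log(6/\delta)\bigr)$. Since $\lambda\ge1$, the noise term is then at most $R\sqrt{2(\gamma_{t-1}(\btheta)+\log(6/\delta))}\,\sigma_{t-1}(\x;\btheta)$. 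This gives the statement without needing the $+1$ and without the horizon restriction $t-1\le T$.

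Separately, the trailing $\sqrt{\lambda}$ in your bias estimate is spurious. The correct chain is $\lambda\|\phi(\x)\|_{V_{t-1}^{-1}}\cdot\lambda^{-1/2}\|f\|_{k_\btheta}=\|f\|_{k_\btheta}\,\sigma_{t-1}(\x;\btheta)\le B\,\sigma_{t-1}(\x;\btheta)$.
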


For any $\varepsilon > 0$, we construct a covering number $\normal_{\varepsilon}(\bTheta, d)$ of the $\varepsilon-$covering set $(\btheta_i^\varepsilon)_{i=1}^{\normal_{\varepsilon}(\bTheta, d)}$. Here, we define $d(\btheta, \hat{\btheta}) = \sup_{\x, \x^\prime} \vert k_{\btheta}(\x, \x^\prime) - k_{\hat{\btheta}}(\x, \x^\prime) \vert, \forall \btheta, \hat{\btheta} \in \bTheta$. Pick $\hat{\delta} =  \delta\normal_{\varepsilon}(\bTheta, d) / 6$ and set $\beta^{1/2}_t(\delta) = B + R \sqrt{2  \Gamma_{t - 1} + 1 + \log(6 \normal_{\varepsilon}(\bTheta, d) / \delta))}$. By applying the union bound, we have, with a probability of at least $1 - \delta / 6, \delta \in (0, 1)$, $\forall \x \in \calX$, $\forall \btheta_i^\varepsilon$
\begin{align}\label{eq:ci-covering-number}
\vert f(\x) - \mu_{t - 1}(\x; \btheta_i^\varepsilon) \vert \leq \beta_t^{1/2}(\delta)  \, \sigma_{t - 1}(\x; \btheta_i^\varepsilon).
\end{align}
By the definition of $\varepsilon-$covering, there exists $i \in [\normal_{\varepsilon}(\bTheta, d)]$ such that
$d(\hat{\btheta}_{t - 1}, \btheta_i^\varepsilon) \leq \varepsilon$. Therefore, for any $t \in [T]$, $\x \in \calX$, and $\hat{\btheta} \in \bTheta$, with a probability of at least $1 -  \delta / 6$, it holds that
\begin{align}
\vert f(\x) - \mu_{t - 1}(\x; \hat{\btheta}) \vert &\leq \vert f(\x) - \mu_{t - 1}(\x; \btheta_i^\varepsilon) \vert + \vert \mu_{t - 1}(\x; \btheta_i^\varepsilon) - \mu_{t - 1}(\x; \hat{\btheta}) \vert, \\
& \leq \beta_t^{1/2}(\delta) \sigma_{t - 1}(\x; \btheta_i^\varepsilon) + L_{\mu_{t - 1}} d(\hat{\btheta}, \btheta_i^\varepsilon), \\
& \leq \beta_t^{1/2}(\delta) \left( \sigma_{t - 1}(\x; \hat{\btheta}) + \vert \sigma_{t - 1}(\x; \btheta_i^\varepsilon) - \sigma_{t - 1}(\x; \hat{\btheta}) \vert \right) + L_{\mu_{t - 1}} \varepsilon, \\
& \leq \beta_t^{1/2}(\delta) \sigma_{t - 1}(\x; \hat{\btheta}) + \beta_t^{1/2}(\delta) \sqrt{L_{\sigma_{t - 1}^2} \varepsilon} + L_{\mu_{t - 1}} \varepsilon.
\end{align}

The inequalities follow from the triangle inequality and the union bound, together with the Lipschitz continuity of the predictive posterior mean $\mu_{t - 1}(\x; \hat{\btheta})$ and the confidence width $\sigma_{t - 1}(\x; \hat{\btheta})$ (\cref{lemma:lipschitz-property}).

\textbf{Observation Vector Norm Bound}

The term $L_{\mu_{t - 1}}$ scales with the vector norm $\|\mathbf{y}_{t-1}\|_2$. Since $y_i = f(x_i) + \epsilon_i$, and the noise $\epsilon_i$ is an unbounded $R$-sub-Gaussian random variable, $C_L = \beta_t^{1/2}(\delta) \sqrt{L_{\sigma_{t - 1}^2} \varepsilon_T} + L_{\mu_{t - 1}} \varepsilon_T$ is also an unbounded random variable. Define $E_\mathrm{Y}$ as the event where the observation noise is uniformly bounded. By the sub-Gaussian tail bound and a union bound over $T$ steps, it follows that
\begin{equation*}
    |\epsilon_t| \le R \sqrt{2 \log(12T/\delta)},
\end{equation*}
for all $t \in [T]$ with probability at least $1 - \delta/6$. Assuming $E_\mathrm{Y}$ and using the fact that $|f(x)| \le B$, the observation norm is deterministically bounded by 
\begin{equation*}
\|\mathbf{y}_{t-1}\|_2 \le \sqrt{T} (B + R \sqrt{2 \log(12T/\delta)}).
\end{equation*}
Since $L_{\mu_{t - 1}}$ depends on the vector norm $\|\mathbf{y}_{t-1}\|_2$, the random variable $L_{\mu_{t - 1}}$ is deterministically bounded by a constant $\bar{L}_{\mu} = \mathcal{O}(T^2 \sqrt{\log T})$, assuming the event $E_Y$ holds. Therefore, we conclude that $C_L$ can be bounded by  $\bar{C}_L = \beta_t^{1/2}(\delta) \sqrt{L_{\sigma_{t - 1}^2} \varepsilon_T} + \bar{L}_\mu \varepsilon_T$.

\textbf{Feasibility Condition}

Let $(\Omega, \calF, \probability)$ be the underlying probability space. Let the filtration $\{\calF_t\}_{t = 0}^\infty$ be a $\sigma-$algebra generated by the history. We update the filtration through $\calF_t = \sigma(\x_1, \epsilon_1, \dots, \x_t, \epsilon_t)$. Furthermore, define the covering error $\varepsilon_T = T^{-4}$. The goal is to show that for any sequentially selected $\x_t$, the conditional expectation of $L^c_t$ is strictly negative. Let $\expect_{t}[.] = \expect[. \vert \calF_{t - 1}]$ denote the conditional expectation. Recall that $y_t = f(\x_t) + \epsilon_t$, and let $\Delta_t = f(\x_t) - \mu_{t - 1}(\x_t; \hat{\btheta})$, for any $\hat{\btheta} \in \bTheta$.

\textbf{Case 1: $p=1$.}
When considering $L_t^c$ with $p=1$, we have that
\begin{equation}
    \expect_t[L_t^c(\hat{\btheta})] = \frac{\expect_t[\vert \Delta_t + \epsilon_t \vert]}{\beta_t^{1/2}(\delta) \sqrt{\sigma^2_{t - 1}(\x_t; \hat{\btheta}) + \sigma^2 }} - 1.
\end{equation}

By applying Cauchy-Schwarz inequality on $\vert \Delta_t + \epsilon_t \vert$, we find that:
\begin{align}
    \expect_t[\vert \Delta_t + \epsilon_t \vert] &\leq \sqrt{\expect_t[(\Delta_t + \epsilon_t)^2]} = \sqrt{\Delta_t^2 + 2 \Delta_t , \expect_t[\epsilon_t] + \expect_t[\epsilon^2_t]} = \sqrt{ \Delta_t^2 + R^2},
\end{align}
where the last equality holds due to $\expect_t[\epsilon_t] = 0$, and $\expect_t[\epsilon_t^2] = R^2$. Substituting the above results back into the expectation provides us
\begin{equation} \label{eq:expected_Ltc}
\expect_t[L_t^c(\hat{\btheta})] \leq \frac{\sqrt{\Delta_t^2 + R^2}}{\beta_t^{1/2}(\delta) \sqrt{\sigma^2_{t - 1}(\x_t; \hat{\btheta}) + \sigma^2}} - 1.
\end{equation}
Let $E_\mathrm{CI}$ be the event where the uniform confidence interval bound holds for all $t \in [T]$, and $\hat{\btheta} \in \bTheta$. Assume that the joint event $E_\mathrm{CI} \cap E_\mathrm{Y}$ holds. On this joint event, it deterministically holds that $\vert \Delta_t \vert \leq \beta_t^{1/2}(\delta) \, \sigma_{t - 1}(\x_t; \hat{\btheta}) + \bar{C}_L$, where $\bar{C}_L = \beta_t^{1/2}(\delta) \sqrt{L_{\sigma_{t - 1}^2} \varepsilon_T} + \bar{L}_\mu \varepsilon_T$. Let $\tau$ be the stopping time at which the confidence interval bound fails for the first time, i.e., $ \vert \Delta_\tau \vert > \beta_\tau^{1/2}(\delta) \, \sigma_{\tau - 1}(\x_\tau; \hat{\btheta})$. If the confidence interval bound never fails, we set $\tau = T + 1$. Therefore, for any $t < \tau$, it follows that $\Delta_t^2 \leq \beta_t(\delta) \, \sigma^2_{t - 1}(\x_t; \hat{\btheta}) + 2 \beta_t^{1/2}(\delta) \, \sigma_{t - 1}(\x_t; \hat{\btheta}) \bar{C}_L  + \bar{C}^2_L$. By choosing $\varepsilon_T = T^{-4}$, the error bound decays as $\mathcal{O}(T^{-1} \log T)$. For a sufficiently large $T$, $2 \beta_t^{1/2}(\delta) \, \sigma_{t - 1}(\x_t; \hat{\btheta}) \bar{C}_L \varepsilon_T + \bar{C}^2_L \varepsilon_T^2 \leq R^2 / \beta_t(\delta)$, we find that $\Delta_t^2 + R^2 \leq \beta_t(\delta) \, \sigma_{t - 1}^2(\x_t; \hat{\btheta}) + R^2(1 + 1/ \beta_t(\delta))$. Recall that $R^2 > 0$ be the fixed sub-Gaussian variance proxy of the true environment. In our algorithm, we specify the GP surrogate noise parameter to match this true noise upper bound, setting $\sigma^2 = R^2$. Substituting this into the bound yields: Assuming that $\beta_1 \geq 2$, it follows that
\begin{align}
\expect_t[L_t^c(\hat{\btheta})] &\leq \sqrt{\frac{\beta_t(\delta) 1 + R^2(1 + 1/\beta_t(\delta))}{\beta_t(\delta) (1 + R^2)}} - 1, \\
&\leq \sqrt{\frac{2 + (1 + 1/2)}{4}} - 1 \leq \sqrt{3.5/4} - 1 \approx -0.065.
\end{align}
Let $\btheta^\ast = \min_{\bar{\btheta} \in \bTheta} \expect_t[L_t^c(\bar{\btheta})]$. For the stop process, it holds that $\expect_t[L_t^c(\btheta^\ast)] \leq - \rho$, where $\rho = 0.065 > 0$.

Next, we aim to obtain the proxy variance of the sub-Gaussian martingale difference sequence $M_t = L_t^c(\hat{\btheta}) - \expect_t[L_t^c(\hat{\btheta}]$. We construct the stopped sequence $\tilde{M}_t = M_t \, \mathbf{1}\{ t < \tau \}$. Since $\tau$ is fully determined by $\calF_{t - 1}$ and $\x_t$, $\tilde{M}_t$ remains a valid martingale difference sequence. Next, we reformulate $L_t^c$ as a function of the noise $\epsilon_t$ conditioned on the filtration $\calF_{t - 1}$:
\begin{equation}
    h_t(\epsilon_t; \calF_{t - 1}) = \frac{\vert \Delta_t + \epsilon_t \vert}{\beta^{1/2}_t(\delta) \sqrt{\sigma_{t - 1}^2(\x_t; \hat{\btheta}) + R^2}} -1.
\end{equation}
Note that $\probability(\vert h_t(\epsilon_t) \vert > m) \leq \probability(\vert h_t(\epsilon_t) - h_t(0) \vert > m -  \vert h_t(0) \vert)$. By the Lipschitz continuity property, we find that $\vert h_t(\epsilon_t) - h_t(0) \vert \leq L_t \vert \epsilon_t \vert$, where $L_t  = \frac{1}{\beta^{1/2}_t(\delta) \sqrt{\sigma_{t - 1}^2(\x_t; \hat{\btheta}) + R^2}}$. Since $\tilde{M}_t$ clamps the sequence to zero if $\Delta_t$ explodes, the active sequence strictly inherits the sub-Gaussian probability tail bound \citep{vershynin2020high}:
\begin{equation}
    \probability(\vert h_t(\epsilon_t) \vert > m) \leq \probability(L_t \vert \epsilon_t \vert > m - \vert h_t(0) \vert ) \leq  2 \exp(-c m^2 / L_t^2 R^2),
\end{equation}
where $c > 0$ is a constant. Here, $\tilde{M}_t$ has a decay rate of $\Vert h_t(\epsilon_t) \Vert^2_{\psi_2} = L_t^2 R^2$. Since $\expect_t[\tilde{M}_t] = 0$, the moment generating function satisfies
\begin{align}
\expect[\exp( \eta \, \tilde{M}_t )] \leq \exp(C \eta^2 L_t^2 R^2),
\end{align}
for some constant $C > 0$. We note that $L_t \leq 1 / R$ for all $t \in [T]$. Therefore, we conclude $\tilde{M}_t$ has a proxy variance $v^2 = C$. Next, we apply the Azuma-Hoeffding concentration inequality to obtain
\begin{equation}
\probability\left(\sum_{t = 1}^T \tilde{M}_t \geq s \right) \leq \exp\left(- \frac{s^2}{2 T v^2} \right) = \exp\left( - \frac{s^2}{2TC} \right).
\end{equation}
Setting the failure probability $\exp\left( -\frac{s^2}{2TC} \right) = \frac{\delta}{6}$, we have that $s = \sqrt{2 T C \log(6/\delta)}$. Furthermore Let $\bar{M} = \hat{M} / T$, where $\hat{M} = \sum_{t = 1}^T \tilde{M}_t$. Then $\probability(\bar{M} \geq \bar{s}) = \probability(\hat{M} \geq T \bar{s})$. We then set $\bar{s} = \sqrt{2 C \log(6/\delta) / T}$. Therefore, with a probability of at least $1 -  \delta / 6$ the following bound holds:
\begin{equation}
    \frac{1}{T} \sum_{t = 1}^T \tilde{M}_t \leq \sqrt{2 C \log (6/\delta) / T}.
\end{equation}
Let $E_\mathrm{M}$ denote the event where the average stopped martingale concentrates. If the event $E_\mathrm{CI}$ holds, the stopping time never triggered, i.e., $\tau = T + 1$. Consequently, we have that $M_t = \tilde{M}_t$, and $\expect_t[L_t^c(\btheta^\ast)] \leq -0.065$ holds universally. When $E_\mathrm{CI}$ and $E_\mathrm{M}$ hold, it follows that
\begin{align}
\frac{1}{T} \sum_{t = 1}^T L_t^c(\btheta^\ast) = \frac{1}{T} \sum_{t = 1}^T \expect_t[L_t^c(\btheta^\ast)] + \frac{1}{T} \sum_{t = 1}^T M_t \
&\leq -0.065 + \sqrt{\frac{2 C \log (6/\delta)}{T}}.
\end{align}
We require $0.065 - \sqrt{2 C \log (6/\delta) / T} \geq T^{-1/4}$ to satisfy the Slater's condition. Since $\rho = 0.065 > 0$, $T^{-1/4}$ and $\sqrt{2 \log (6/\delta) / T}$ vanish as $T \rightarrow \infty$, the above inequality holds.

\textbf{Case 2: $p=2$.} Define the constraint $L_t^c$ as
\begin{equation}
    L_t^c(\hat{\btheta}) = \frac{(y_t - \mu_{t - 1}(\x_t; \hat{\btheta}))^2}{\beta_t(\delta) (\sigma^2_{t - 1}(\x_t; \hat{\btheta}) + \sigma^2)} - 1.
\end{equation}
We expand the numerator and obtain $(y_ t - \mu_{t - 1}(\x_t; \hat{\btheta}_t))^2 = (\Delta_t + \epsilon_t)^2 = \Delta_t^2 + 2 \Delta_t \epsilon_t + \epsilon_t^2$. Note that $\expect_t[\epsilon_t] = 0$, and $\expect_t[\epsilon_t^2] = R^2$. Assuming $\sigma^2 = R^2$, it follows that
\begin{equation}
\expect_t[L_t^c(\hat{\btheta}_{t - 1})] = \frac{\Delta_t^2 + \sigma^2}{\beta_t(\delta) (\sigma^2_{t - 1}(\x_t; \hat{\btheta}_{t - 1}) + \sigma^2)} - 1.
\end{equation}
Let $E_\mathrm{CI}$ be the event where the uniform confidence interval bound holds for all $t \in [T]$, and $\hat{\btheta} \in \bTheta$. We assume the joint event $E_\mathrm{CI} \cap E_\mathrm{Y}$ holds, giving the bound $\bar{C}_L = \beta_t^{1/2}(\delta) \sqrt{L_{\sigma_{t - 1}^2} \varepsilon_T} + \bar{L}_\mu \varepsilon_T$. On this joint event, it deterministically holds that $\vert \Delta_t \vert \leq \beta_t^{1/2}(\delta) \, \sigma_{t - 1}(\x_t; \hat{\btheta}) + \bar{C}_L$. Let $\tau$ be the stopping time at which the confidence interval bound fails for the first time. If the confidence interval bound never fails, we set $\tau = T + 1$. Therefore, for any $t < \tau$, it follows that $\Delta_t^2 \leq \beta_t(\delta) \, \sigma^2_{t - 1}(\x_t; \hat{\btheta}) + 2 \beta_t^{1/2}(\delta) \, \sigma_{t - 1}(\x_t; \hat{\btheta}) \bar{C}_L + \bar{C}_L^2$. Substituting this bound into the conditional expectation for $t \leq \tau$ provides
\begin{align}
\expect_t[L_t^c(\hat{\btheta})] &\le \frac{\beta_t(\delta) \sigma_{t - 1}^2(\x_t; \hat{\btheta}) + \sigma^2 + 2 \beta_t^{1/2}(\delta) \sigma_{t - 1}(\x_t, \hat{\btheta}) \bar{C}_L + \bar{C}_L^2}{\beta_t(\delta) \left( \sigma^2{t - 1}(\x_t; \hat{\btheta}) + \sigma^2 \right)} - 1, \\
&= \frac{\sigma^2(1 - \beta_t(\delta)) + 2 \beta_t^{1/2}(\delta) \sigma{t - 1}(\x_t; \hat{\btheta}) \bar{C}_L + \bar{C}_L^2}{\beta_t(\delta) \left( \sigma^2{t - 1}(\x_t; \hat{\btheta}) + \sigma^2 \right)}.
\end{align}
Assuming that $\beta_t(\delta) \geq 2$, it holds that $\sigma^2(1 - \beta_t(\delta)) \leq -\sigma^2$. By choosing $\varepsilon_T = T^{-4}$, the error bound decays as $\mathcal{O}(T^{-1} \log T)$. For a sufficiently large $T$, it holds that $2 \beta_t^{1/2}(\delta) \sigma_{t - 1}(\x_t; \hat{\btheta}) \bar{C}_L + \bar{C}_L^2 \leq \sigma^2 / 2$ holds deterministically. Therefore, the worst-case numerator is bounded by $-\sigma^2/2$. Let $\btheta^\ast = \arg\min_{\bar{\btheta} \in \bTheta} \expect_t[L_t^c(\bar{\btheta})]$. For the stopped process, it holds that 
\begin{equation}
\expect_t[L_t^c(\btheta^\ast)] \le \frac{-\sigma^2 / 2}{\beta_t(\delta) (1 + \sigma^2)} = - \rho,
\end{equation}
Since $\beta_t(\delta) \ge 2$ and $\sigma^2 > 0$, $\rho > 0$ hold.

Next, we aim to obtain the proxy variance of the sub-Gaussian martingale difference sequence $M_t = L_t^c(\hat{\btheta}) - \expect_t[L_t^c(\hat{\btheta}]$. We construct the stopped sequence $\tilde{M}_t = M_t \, \mathbf{1}\{ t < \tau \}$. Since $\tau$ is fully determined by $\calF_{t - 1}$ and $\x_t$, $\tilde{M}_t$ remains a valid martingale difference sequence. Let $V_t = \beta_t(\delta)(\sigma^2_{t - 1}(\x_t; \hat{\btheta}) + \sigma^2)$. Then, we express $\tilde{M}_t$ as:
\begin{equation}
\tilde{M}_t = \frac{\epsilon_t^2 - \sigma^2 + 2\Delta_t\epsilon_t}{V_t} \mathbf{1}\{t \leq \tau\}.
\end{equation}
Since $\epsilon_t$ is sub-Gaussian, the term $\epsilon_t^2 - \sigma^2$ is a centered sub-exponential random variable \citep{vershynin2020high}. Additionally, since the stopping time ensures that $\vert \Delta_t \vert$ is bounded, the term $2 \Delta_t \epsilon_t$ has a bounded sub-Gaussian norm, thus, is sub-exponential \citep{vershynin2020high}. Since the addition of sub-exponential results in a sub-exponential random variable, we conclude that $\tilde{M}_t$ exhibits sub-exponential tails. By the property of sub-exponential distribution \citep{vershynin2020high}, there exists $v, c >0$ such that the moment generating function satisfies
\begin{equation}
    \expect[\exp(\eta \tilde{M}_t)] \leq \exp(\eta^2 v^2 / 2), 
\end{equation}
for all $\vert \eta \vert < 1/c$. We then apply Bernstein's inequality for martingales to obtain
\begin{equation}
\probability\left(\sum_{t = 1}^T \tilde{M}_t \geq s \right) \leq \exp\left(- \frac{s^2}{2 (T v^2 + cs)} \right).
\end{equation}
Setting the failure probability $\exp\left( -\frac{s^2}{2(T^2 v + cs)} \right) = \frac{\delta}{6}$, we have that $s = c \log(6/\delta) + \sqrt{c^2 \log^2(6/\delta) + 2 \nu^2 T \log(6/\delta)}$. By applying the inequality $\sqrt{C + D} \leq \sqrt{C} + \sqrt{D}$, it follows that $s \le \sqrt{2 \nu^2 T \log(6/\delta)} + 2 c \log(6/\delta)$. Furthermore, let $\bar{M} = \hat{M} / T$, where $\hat{M} = \sum_{t = 1}^T \tilde{M}_t$. Then $\probability(\bar{M} \geq \bar{s}) = \probability(\hat{M} \geq T \bar{s})$. We then set $\bar{s} = \sqrt{\frac{2 \nu^2 \log(6/\delta)}{T}} + \frac{2 c \log(6/\delta)}{T}$. Therefore, with a probability of at least $1 - \delta / 6$ the following bound holds:
\begin{equation}
    \frac{1}{T} \sum_{t = 1}^T \tilde{M}_t \leq \sqrt{\frac{2 \nu^2 \log(6/\delta)}{T}} + \frac{2 c \log(6/\delta)}{T}.
\end{equation}
Let $E_\mathrm{M}$ denote the event where the average stopped martingale concentrates. If the event $E_\mathrm{CI}$ holds, the stopping time is never triggered, i.e., $\tau = T + 1$. Consequently, we have that $M_t = \tilde{M}_t$, and $\expect_t[L_t^c(\btheta^\ast)] \leq -0.065$ holds universally. When $E_\mathrm{CI}$ and $E_\mathrm{M}$ hold, it follows that
\begin{align}
    \frac{1}{T} \sum_{t = 1}^T L_t^c(\btheta^\ast) &= \frac{1}{T} \sum_{t = 1}^T \expect_t[L_t^c(\btheta^\ast)] + \frac{1}{T} \sum_{t = 1}^T M_t, \\
    &\leq - \rho + \sqrt{\frac{2 \nu^2 \log(6/\delta)}{T}} + \frac{2 c \log(6/\delta)}{T}.
\end{align}
Note that for both cases ($p=1$ and $p=2$), the event $E_\mathrm{M}$ fails with a probability of at most $\delta/6$. Previously, we defined the noise event $E_\mathrm{Y}$ and the confidence bound event $E_\mathrm{CI}$ to also fail with at most $\delta/6$ respectively. By applying the union bound, the events $E_\mathrm{Y}$, $E_\mathrm{CI}$, and $E_\mathrm{M}$ hold simultaneously with a probability of at least $1 - (\delta/6 + \delta/6 + \delta/6) = 1 - \delta/2$. We define this joint high-probability event as $E_\mathrm{safe}$. On the event $E_\mathrm{safe}$, the strict feasibility condition $\frac{1}{T} \sum_{t = 1}^T L_t^c(\btheta^\ast) \leq - \rho$ holds universally.

\subsection{Proof of \cref{lemma:convergence}}

\lemmaconvergence*

\textit{Proof:}

\textbf{Regret bound of FTPL algorithm.} Recall that our FTPL algorithm aims to select $\hat{\btheta}_{t - 1}$ that minimizes the primal utility function
\begin{equation}
    u_t^P(\hat{\btheta}) = L_t^s(\hat{\btheta}) + \lambda_{t - 1} L_t^c(\hat{\btheta}), 
\end{equation}
for each round $t$. Assuming the global safe event $E_\mathrm{Safe}$ holds, the utility is bounded by a constant $M_P = 1 + \bar{M}_P/\hat{\rho} = \bigo(1)$, for some constant $\bar{M}_P > 0$. As established in \cref{lemma:utility-lispchitz-property}, the GP predictive mean causes the Lipschitz constant of the utility function to grow as $K = \bigo(T^2 \sqrt{\log T})$. To maintain sublinear regret, we restrict the FTPL algorithm to select the hyperparameter from the same discrete $\varepsilon_T$-cover used in \cref{lemma:slater-condition}, denoted as $\bTheta_{\varepsilon_T}$. This set contains $N = (D / \varepsilon_T)^d = \bigo(T^{4d})$ points, where $D=\sup_{\btheta,\hat{\btheta}\in\bTheta}\|\btheta-\hat{\btheta}\|_\infty$ is the diameter of $\bTheta$. Let $[\btheta^\ast]_{\varepsilon_T} \in \bTheta_{\varepsilon_T}$ be the closest grid point to the true continuous optimum $\btheta^\ast = \arg \min_{\hat{\btheta} \in \bTheta} \sum_{t = 1}^T u_t^P(\hat{\btheta})$. We decompose the continuous expected regret into the discrete FTPL regret and the continuous discretization error:
\begin{align}
\expect[R_T^P] = \sum_{t = 1}^T \expect_t [u_t^P(\hat{\btheta}_{ t - 1})]  - \sum_{t = 1}^T u_t^P(\btheta^\ast) &\leq \left( \sum_{t = 1}^T \expect_t [u_t^P(\hat{\btheta}_{ t - 1})]  - \sum_{t = 1}^T u_t^P([\btheta^\ast]_{\varepsilon_T}) \right) \nonumber \\
& \quad + \sum_{t=1}^T \vert u_t^P([\btheta^\ast]_{\varepsilon_T}) - u_t^P(\btheta^\ast) \vert. \label{eq:ftpl-decomposition}
\end{align}

Following \cite{suggala2020online}, the first term is bounded by $\bigo(M_P \sqrt{T \log N}) = \tilde{\bigo}(\sqrt{T})$. The continuous discretization error is bounded by the deterministic Lipschitz property as $\sum_{t=1}^T K \Vert [\btheta^\ast]_{\varepsilon_T} - \btheta^\ast \Vert_1 \leq T K \varepsilon_T$. By utilizing the grid resolution $\varepsilon_T = T^{-4}$, this error evaluates to $\bigo(T^3 \sqrt{\log T} \times T^{-4}) = \tilde{\bigo}(T^{-1})$. Therefore, the conditional expected regret satisfies:
\begin{equation}
\expect[R_T^P] = \sum_{t = 1}^T \expect_t[u_t^P(\hat{\btheta}_{ t - 1})] - \sum_{t = 1}^T u_t^P(\btheta^\ast) \leq \tilde{\bigo}(\sqrt{T}) + \tilde{\bigo}(T^{-1}) = \tilde{\bigo}(\sqrt{T}). \label{eq:expected-primal-regret}
\end{equation}
Define $X_t = u_t^P(\hat{\btheta}_{t - 1}) - \expect_t[u_t^P(\hat{\btheta}_{t - 1})]$ at round $t$. Since $L_t^c$ relies on unbounded observation noise, $X_t$ is not almost-surely bounded. We then construct the stopped martingale sequence $\tilde{X}_t = X_t \mathbf{1}\{t < \tau\}$, where $\tau$ is the stopping time defined in \cref{lemma:slater-condition}. We note that the stochastic tail of the stopped sequence $\tilde{X}_t$ is identically governed by the sub-Gaussian ($p=1$) or sub-exponential ($p=2$) properties. 

\textbf{Case 1: $p=1$.} The moment generating function for the stopped increment is bounded by a variance proxy $v^2 = \bigo(1)$, yielding:
\begin{equation}
 \expect[\exp(\eta \tilde{X}_t) \vert \calF_{t - 1}] \leq \exp\left(\frac{\eta^2 v^2}{2} \right).    
\end{equation}
Applying the corresponding martingale concentration inequality for sub-Gaussian provides:
\begin{equation}
\probability \left(\sum_{t = 1}^T \tilde{X}_t \geq s \right) \leq \exp\left(- \frac{s^2}{2 v^2 T}\right). \label{eq:azuma-hoeffding-primal-regret}
\end{equation}
We set $\exp(- s^2 / (2 v^2 T)) =  \delta / 6$ to obtain $s = \sqrt{2 v^2 T \log(6 /  \delta)}$. Assuming the safe event $E_\mathrm{safe}$ holds, the stopping time is never triggered ($\tau = T+1$), meaning $X_t = \tilde{X}_t$ universally. Let $E_\mathrm{FTPL}$ denote this concentration event. Combining \cref{eq:expected-primal-regret} and \cref{eq:azuma-hoeffding-primal-regret}, with a probability of at least $1 -  \delta / 6$, it holds that
\begin{align}
R_T^P &= \sum_{t = 1}^T X_t + \left( \sum_{t = 1}^T \expect_t[u_t^P(\hat{\btheta}_{t - 1}) ] - \sum_{t = 1}^T u_t^P(\btheta^\ast) \right) \\
&\leq \sqrt{ 2 v^2 T \log(6 / \delta)} + \tilde{\bigo}(\sqrt{T}) \\
&\leq \bigo(\sqrt{T \log(1/ \delta)}) + \tilde{\bigo}(\sqrt{T}) \\
&= \tilde{\bigo}(\sqrt{T}).
\end{align}

\textbf{Case 2: $p=2$.} Since $u_t^P = L_t^s + \lambda_{t-1}L_t^c$, and $L_t^s \in [0, 1]$, the stochastic tail of the stopped sequence $\tilde{X}_t$ is identically governed by the sub-exponential properties of the constraint noise. Under the sub-exponential case, the stopped increment admits a variance proxy $v^2 = \mathcal{O}(1)$ and a scale parameter $c = \mathcal{O}(1)$. For any $|\eta| < 1/ c$, the conditional moment generating function is bounded by:$$ \mathbb{E}_t[\exp(\eta \tilde{X}_t)] \leq \exp\left(\frac{\eta^2 v^2}{2} \right). $$Applying the Bernstein concentration inequality for sub-exponential martingale difference sequences provides:$$ \mathbb{P} \left(\sum_{t = 1}^T \tilde{X}_t \geq s \right) \leq \exp\left(- \frac{1}{2} \min\left(\frac{s^2}{v^2 T}, \frac{s}{c}\right)\right). $$We set the right-hand side to $\delta / 6$ to obtain $s = \max\left( \sqrt{2 v^2 T \log(6 / \delta)}, 2 c \log(6 / \delta) \right)$. For sufficiently large $T$, the $\mathcal{O}(\sqrt{T})$ Gaussian regime strictly dominates the logarithmic exponential regime. Assuming the safe event $E_\mathrm{safe}$ holds, the stopping time is never triggered ($\tau = T+1$), meaning $X_t = \tilde{X}_t$ universally. Let $E_\mathrm{FTPL}$ denote this concentration event. Combining this high-probability martingale bound with the conditional expected regret, it holds with a probability of at least $1 - \delta / 6$ that:$$ R_T^P \leq \max\left( \sqrt{2 v^2 T \log(6 / \delta)}, 2 c \log(6 / \delta) \right) + \tilde{\mathcal{O}}(\sqrt{T}) = \tilde{\mathcal{O}}(\sqrt{T}). $$

\textbf{Regret bound of OMD algorithm} Next, the dual RM OMD aims to select $\hat{\lambda}_{t - 1}$ that minimizes the dual utility function
\begin{equation}
    u_t^D(\hat{\lambda}) = - L_t^s(\hat{\btheta}_{t - 1}) - \hat{\lambda} L_t^c(\hat{\btheta}_{t - 1}),
\end{equation}
for each round $t$. By the Slater's condition (\cref{corr:extended-strong-duality}), the domain is restricted to $\Lambda = [0, 1/\hat{\rho}]$. Furthermore, one can confirm that the utility function $u_t^D$ is linear w.r.t. $\hat{\lambda}$, thus implying convexity. On the safe event $E_\mathrm{safe}$, the conditional expected utility $\vert \expect_t[u_t^D] \vert$ is strictly bounded by $M_D = 1 + \bar{M}_D /\hat{\rho} = \bigo(1)$, for some constant $\bar{M}_D > 0$. 

When considering the negative entropy regularization, the function $\psi$ of is $\hat{\rho}$-strongly convex w.r.t. the absolute value norm $\vert . \vert$, i.e., $(\nabla \psi(\lambda) - \nabla \psi(\hat{\lambda})) (\lambda - \hat{\lambda}) \geq  \hat{\rho} \vert \lambda - \hat{\lambda} \vert^2, \forall \lambda, \hat{\lambda} \in \Lambda$. Let the primal norm in the one-dimensional space be the absolute norm $\Vert x \Vert = \vert x \vert$, and let the corresponding dual norm defined by $\Vert g \Vert_\ast = \sup_{\vert x \vert \leq 1} \, x g$. Then, for the scalar gradient $\nabla u_t^D(\lambda_{t - 1})$, it follows that
\begin{equation}\label{eq:primal-dual-norm}
    \Vert \nabla u_t^D(\lambda_{t - 1})  \Vert_\ast =  \vert \nabla u_t^D(\lambda_{t - 1}) \vert \rightarrow \Vert \nabla u_t^D(\lambda_{t - 1})  \Vert^2_\ast =  \vert \nabla u_t^D(\lambda_{t - 1}) \vert^2.
\end{equation}
Assuming that $\omega_1 = \dots = \omega_T = \omega = \sqrt{\frac{2 \hat{\rho} }{T}}$, and applying [Theorem 6.11] of \cite{orabona2019modern}, we have that
\begin{align}
    \sum_{t = 1}^T \expect_t[u_t^D(\lambda_{ t - 1})]  - \min_{\hat{\lambda} \in \Lambda} \sum_{t = 1}^T \expect_t[u_t^D(\hat{\lambda})] &\leq  \frac{D_\psi(\lambda, \hat{\lambda}_1)}{\omega}  + \frac{\omega}{2 \hat{\rho}} \sum_{t = 1}^T \Vert \expect_t[\nabla u_t^D(\lambda_{t - 1})] \Vert^2_\ast, \\
    & \leq \frac{1}{\omega} \, \bigo\left(\frac{1}{\hat{\rho}} \log \left(\frac{1}{\hat{\rho}}\right) \right) + \frac{\omega}{2 \hat{\rho}} \sum_{t = 1}^T \bar{M}_D^2, \\
    & \leq  \frac{1}{\omega} \, \bigo\left(\frac{1}{\hat{\rho}} \log \left(\frac{1}{\hat{\rho}}\right) \right) + \frac{\omega} {2 \hat{\rho}} \, T \bar{M}_D^2, \\
    &=  \tilde{\bigo}(\sqrt{T}). \label{eq:expected-dual-regret}
\end{align}

The second inequality is due to the definition of Bergman divergence and \cref{eq:primal-dual-norm}, the third inequality stems from the fact that $L_t^c$ is bounded within $[-1, 1]$, and the last inequality holds since $\hat{\rho}$ is a constant. Define $Y_t = u_t^D(\lambda_{t - 1}) - \expect[u_t^D(\lambda_{t - 1})]$. Since $\expect[Y_t \vert \calF_{t - 1}] = 0$, then $Y_t$ is a martingale difference sequence. Since $u_t^D$ depends on the unbounded random variable $L_t^c$, we require the stopped martingale sequence $\tilde{Y}_t = Y_t \mathbf{1}\{t < \tau\}$. 

\textbf{Case 1: $p=1$.} The stopped sequence admits a variance proxy $v^2 = \bigo(1)$. For any $\eta \in \real$, we find that  By applying Hoeffding's lemma, for any $\eta \in \real$, we find that
\begin{equation}
 \expect_t[\exp(\eta \tilde{Y}_t)] \leq \exp\left(\frac{\eta^2 v^2}{2} \right).    
\end{equation}
Therefore, the Azuma-Hoeffding concentration inequality for sub-Gaussian increments provides
\begin{equation}
\probability\left( \sum_{t = 1}^T \tilde{Y}_t \geq s \right) \leq \exp\left( - \frac{s^2}{2 v^2 T} \right).
\end{equation}
Setting $\exp(- s^2 / (2 v^2 T)) = \delta / 6$, we obtain $s = \sqrt{2 v^2 \, T \log(6/ \delta )}$. On the event $E_\mathrm{safe}$, the stopping time is never triggered ($\tau = T+1$), so $Y_t = \tilde{Y}_t$ universally. Let $E_\mathrm{OMD}$ denote this concentration event. Combining \cref{eq:expected-dual-regret} with the martingale concentration, we find that with a probability of at least $1 - \delta / 6$:
\begin{align}
    R_T^D &= \sum_{t = 1}^T Y_t + \left( \sum_{t = 1}^T \expect_t[u_t^D(\lambda_{ t - 1})]  - \min_{\hat{\lambda} \in \Lambda} \sum_{t = 1}^T \expect_t[u_t^D(\hat{\lambda})] \right) \\
    &\leq \sqrt{2 v^2 \, T \, \log(6 / \delta)} + \tilde{\bigo}(\sqrt{T}) \\
    &= \tilde{\bigo}(\sqrt{T}).
\end{align}

\textbf{Case 2: $p=2$.} The stopped sequence admits a variance proxy $v^2 = \mathcal{O}(1)$ and a scale parameter $c = \mathcal{O}(1)$. For any $|\eta| < 1/ c$, we find that:$$ \mathbb{E}_t[\exp(\eta \tilde{Y}_t)] \leq \exp\left(\frac{\eta^2 v^2}{2} \right). $$Therefore, the Bernstein concentration inequality for sub-exponential martingales yields:$$ \mathbb{P}\left( \sum_{t = 1}^T \tilde{Y}_t \geq s \right) \leq \exp\left( - \frac{1}{2} \min\left(\frac{s^2}{v^2 T}, \frac{s}{c}\right) \right). $$Setting the right-hand side equal to $\delta / 6$, we obtain $s = \max\left( \sqrt{2 v^2 T \log(6 / \delta)}, 2 c \log(6 / \delta) \right)$. Assuming $E_\mathrm{safe}$ holds, the stopping time is never triggered ($\tau = T+1$), so $Y_t = \tilde{Y}_t$ universally. Let $E_\mathrm{OMD}$ denote this concentration event. Combining the conditional expected dual regret with the martingale concentration, we find that for sufficiently large $T$, the Gaussian tail strictly dominates. Thus, with a probability of at least $1 - \delta / 6$:$$ R_T^D \leq \max\left( \sqrt{2 v^2 T \log(6 / \delta)}, 2 c \log(6 / \delta) \right) + \tilde{\mathcal{O}}(\sqrt{T}) = \tilde{\mathcal{O}}(\sqrt{T}). $$

\textbf{Bounding the cumulative loss and constraint calibration} For any hyperparameter $\hat{\btheta} \in \bTheta$ and any dual variable $\hat{\lambda} \in \Lambda$, the following master inequality holds:
\begin{equation}\label{eq:master-inequality}
\sum_{t = 1}^T L_t^s(\hat{\btheta}_{t - 1}) + \hat{\lambda} L_t^c(\hat{\btheta}_{t - 1}) \leq \sum_{t = 1}^T L_t^s(\hat{\btheta}) + \lambda_{t - 1} L_t^c(\hat{\btheta}) + R_T^P + R_T^D
\end{equation}
We first set $\hat{\btheta} = \btheta$ and $\hat{\lambda} = 0$ to evaluate the cumulative regret against the true hyperparameter $\btheta$:
\begin{equation}\label{eq:cumulative-reward-bound}
    \sum_{t = 1}^T L_t^s(\hat{\btheta}_{t - 1}) + (0) L_t^c(\hat{\btheta}_{t - 1}) \leq \sum_{t = 1}^T L_t^s(\btheta) + \sum_{t = 1}^T \lambda_{t - 1} L_t^c(\btheta) + R_T^P + R_T^D.
\end{equation}
We aim to bound the term $\sum_{t = 1}^T \lambda_{t - 1} L_t^c(\btheta)$. Since $\lambda_{t - 1}$ is deterministic conditioned on $\calF_{t - 1}$, we can utilize the bound of $\expect_t[L_t^c(\hat{\btheta}_{t - 1})]$ from \cref{lemma:slater-condition} to obtain
\begin{equation}\label{eq:bounded-lambda-lct}
\expect_t[\lambda_{t - 1} L_t^c(\hat{\btheta}_{t - 1})] \leq - \lambda_{t - 1} \hat{\rho}.
\end{equation}
Furthermore, we rewrite $\sum_{t = 1}^T \lambda_{t - 1} L_t^c(\btheta)$ as follows:
\begin{align}\label{eq:sum-lambda-lct}
\sum_{t = 1}^T \lambda_{t - 1} L_t^c(\btheta) \leq \sum_{t = 1}^T \lambda_{t - 1} \expect_t[L_t^c(\btheta)] + \sum_{t = 1}^T \lambda_{t - 1} (L_t^c(\btheta) - \expect_t[L_t^c(\btheta)]).
\end{align}
Recall that $M_t = L_t^c(\btheta) - \expect_t[L_t^c(\btheta)]$ and let $Z_t = \lambda_{t - 1} M_t$. From \cref{corr:extended-strong-duality}, the dual variable is bounded such that $0 \leq \lambda_{t - 1} \leq \lambda_{\max} = 1/\hat{\rho}$. We evaluate the concentration of $Z_t$ based on the norm $p$ used in the calibration constraint. 

\textbf{Case 1: $p=1$.} As established in the feasibility proof, for $p=1$, the stopped sequence $\tilde{M}_t$ follows a sub-Gaussian tails with a proxy variance $v^2 = C$. Because $Z_t$ is scaled by $\lambda_{t-1} \in [0, 1/ \hat{\rho}]$, the scaled sequence $\tilde{Z}_t = \lambda_{t - 1} \tilde{M}_t$ remains sub-Gaussian with a maximum proxy variance $v^2 = C / \hat{\rho}^2$. By applying the Azuma-Hoeffding concentration inequality, we obtain:
\begin{equation}
\probability\left( \sum_{t = 1}^T \tilde{Z}_t \leq s \right) \leq \exp\left( - s^2 \hat{\rho}^2 / (2 T C ) \right) 
\end{equation}
By resetting $\exp(- s^2 \hat{\rho}^2 / (2TC)) =  \delta / 6$, we have $s = 1 / \hat{\rho} \sqrt{2 T C \log(6 /  \delta)}$. 

\textbf{Case 2: $p=2$.} As derived previously, there exists $v, c > 0$ such that the moment generating function satisfies $\expect[\exp(\eta \tilde{M}_t)] \leq \exp(\eta^2 v^2 / 2)$ for all $\vert \eta \vert < 1/c$. Since $\tilde{Z}_t = \lambda_{t - 1} \tilde{M}_t$ and $\lambda_{t-1} \le 1/ \hat{\rho}$, the scaled sequence $\tilde{Z}_t$ is also sub-exponential. Its moment generating function scales as:
\begin{equation}
    \expect[\exp(\eta \tilde{Z}_t)] = \expect[\exp(\eta \lambda_{t-1} \tilde{M}_t)] \leq \exp\left(\frac{\eta^2 (v/ \hat{\rho})^2}{2}\right),
\end{equation}
which holds for all $\vert \eta \lambda_{t-1} \vert < 1/c \rightarrow \vert \eta \vert < \hat{\rho} / c$. Thus, $\tilde{Z}_t$ has sub-exponential parameters $v = v/ \hat{\rho}$ and $c = c/ \hat{\rho}$. We apply Bernstein's inequality for martingales to obtain:
\begin{equation}
\probability\left(\sum_{t = 1}^T \tilde{Z}_t \geq s \right) \leq \exp\left(- \frac{s^2}{2 (T v^2 + c s)} \right).
\end{equation}
Setting the failure probability to $\delta/6$ and solving for $s$ yields $s = c \log(6/\delta) + \sqrt{c^2 \log^2(6/\delta) + 2 v^2 T \log(6/\delta)}$. By applying the sub-additive property of the square root ($\sqrt{C + D} \leq \sqrt{C} + \sqrt{D}$), we obtain:
\begin{equation}
    \sum_{t = 1}^T \tilde{Z}_t \leq \frac{1}{\hat{\rho}} \sqrt{2 v^2 T \log(6/\delta)} + \frac{2 c}{\hat{\rho}} \log(6/\delta),
\end{equation}
with probability at least $1 - \delta / 6$. For both $p=1$ and $p=2$, the martingale concentration yields a bound of the form $\tilde{\bigo}(\sqrt{T})$. Let $E_\mathrm{Z}$ denote the event where this concentration of the stopped sequence $\tilde{Z}_t$ holds. Recall that $\tilde{Z}_t$ is the sequence based on the stopping time $\tau$. Let $E_\mathrm{CI}$ be the global event where the uniform confidence interval bound holds for all $t \in [T]$ (established in \cref{lemma:slater-condition}). On the event $E_\mathrm{CI}$, the stopping time is never triggered ($\tau = T + 1$), which deterministically ensures $M_t = \tilde{M}_t$ and $Z_t = \tilde{Z}_t$ for all $t \in [T]$. Assuming $E_\mathrm{CI}$ and $E_\mathrm{Z}$ hold,  substituting the result back to \cref{eq:sum-lambda-lct} and applying \cref{eq:bounded-lambda-lct}, we obtain
\begin{equation}\label{eq:final-sum-lambda-ltc}
\sum_{t = 1}^T \lambda_{t - 1} L_t^c(\btheta) \leq - \hat{\rho} \sum_{t = 1}^T \lambda_{t - 1} + \tilde{\bigo}{\sqrt{T}}. 
\end{equation}
Furthermore, substituting the above result back to \cref{eq:cumulative-reward-bound} provides
\begin{align}
    \sum_{t = 1}^T L_t^s(\hat{\btheta}_{t - 1}) & \leq \sum_{t = 1}^T L_t^s(\btheta) - \hat{\rho} \sum_{t = 1}^T \lambda_{t - 1} + \tilde{\bigo}(\sqrt{T}) + R_T^P + R_T^D, \\
    & \leq \sum_{t = 1}^T L_t^s(\btheta) + \tilde{\bigo}(\sqrt{T}).
\end{align}
The last inequality stems from the fact that $\lambda_{t - 1} \in [0, 1/ \hat{\rho}], \hat{\rho} > 0$, for all $t \in [T]$. Rearranging terms provides the desired results. To obtain the cumulative constraint violation, we take \cref{eq:master-inequality}, and set $\hat{\btheta} = \btheta$ and $\lambda = 1 / \hat{\rho}$:
\begin{equation}
\sum_{t = 1}^T L_t^s(\hat{\btheta}_{t - 1}) + 1/ \hat{\rho}  \, L_t^c(\hat{\btheta}_{t - 1}) \leq  \sum_{t = 1}^T L_t^s(\btheta)  + \sum_{t = 1}^T \lambda_{t - 1} L_t^c(\btheta)  + R_T^P + R_T^D 
\end{equation}
By rearranging the inequality and applying \cref{eq:final-sum-lambda-ltc}, it holds that
\begin{align}
\hat{V}_T &\leq \hat{\rho} \left( \sum_{t = 1}^T L_t^s(\btheta)  - \sum_{t = 1}^T L_t^s(\hat{\btheta}_{t - 1}) - \hat{\rho} \sum_{t = 1}^T \lambda_{t - 1} + \tilde{\bigo}(\sqrt{T})  + R_T^P + R_T^D \right) \\ & \leq \hat{\rho} \, \left(  \sum_{t = 1}^T L_t^s(\btheta)  + \tilde{\bigo}(\sqrt{T})  + R_T^P + R_T^D \right)   
\end{align}
Let $E_\mathrm{FTPL}$ and $E_\mathrm{OMD}$ be the events where the FTPL and OMD regrets $R_T^P$ and $R_T^D$ concentrate, respectively. Each of the events $E_\mathrm{FTPL}$, $E_\mathrm{OMD}$, and $E_\mathrm{Z}$ fails with a probability of at most $\delta/6$. By applying the union bound exclusively over these three native martingale events, they hold simultaneously with a probability of at least $1 - \delta / 2$. Therefore, on the safe environment event $E_\mathrm{safe}$, the bounds are strictly established, completing the proof.

\subsection{Proof of \cref{theorem:regret-bound}}

\theoremregretbound*

\textit{Proof:}
We aim to upper bound
\[
R_T=\sum_{t=1}^T\big(f(\x^\ast)-f(\x_{t})\big).
\]
We first consider the following events:
\begin{itemize}
    \item $E_\mathrm{safe}$: The global safe environment event in \cref{lemma:slater-condition} hold.
    \item $E_\mathrm{regret}$: The algorithm simultaneously achieves bounded cumulative loss $G_T$ and long-term calibration violation $\hat{V}_T$ (\cref{lemma:convergence}). 
\end{itemize}

From our previous derivations, $\probability(E_\mathrm{safe}) \geq 1 - \delta / 2$ and $\probability(E_\mathrm{regret}) \geq 1 - \delta / 2$. By applying a union bound, the successful event $E_\mathrm{safe} \cap E_\mathrm{regret}$ occurs with probability at least $1 - \delta$.
By \cref{lemma:slater-condition} and on the event $E_\mathrm{safe}$, the uniform confidence bound over the continuous domain holds:
\begin{equation}
|f(\x)-\mu_{t-1}(\x;\hat{\btheta}_{t-1})| \le \beta_t^{1/2}(\delta)\,\sigma_{t-1}(\x;\hat{\btheta}_{t-1}) + \bar{C}_L, \qquad \forall \x\in\calX,
\end{equation}
where $\bar{C}_L = \beta_t^{1/2}(\delta) \sqrt{L_{\sigma_{t - 1}^2} \varepsilon_T} + \bar{L}_\mu \varepsilon_T$. Applying this at $\x^\ast$ gives
\begin{equation}
f(\x^\ast) \le \mu_{t-1}(\x^\ast;\hat{\btheta}_{t-1}) + \beta_t^{1/2}(\delta)\,\sigma_{t-1}(\x^\ast;\hat{\btheta}_{t-1}) + \bar{C}_L.
\end{equation}
By the optimism principle of the UCB acquisition function evaluated over the grid, we have:
\begin{equation}
\mu_{t-1}(\x^\ast;\hat{\btheta}_{t-1}) + \beta_t^{1/2}(\delta)\,\sigma_{t-1}(\x^\ast;\hat{\btheta}_{t-1}) \le \mu_{t-1}(\x_t;\hat{\btheta}_{t-1}) + \beta_t^{1/2}(\delta)\,\sigma_{t-1}(\x_t;\hat{\btheta}_{t-1}).
\end{equation}
Applying the confidence bound again at $\x_t$ yields
\begin{equation}
\mu_{t-1}(\x_t;\hat{\btheta}_{t-1}) + \beta_t^{1/2}(\delta)\,\sigma_{t-1}(\x_t;\hat{\btheta}_{t-1}) \le f(\x_t) + 2\beta_t^{1/2}(\delta)\,\sigma_{t-1}(\x_t;\hat{\btheta}_{t-1}) + \bar{C}_L.
\end{equation}
Therefore, the instantaneous regret $r_t = f(\x^\ast)-f(\x_t)$ is bounded by:
\begin{equation}\label{eq:rt_bound_fixed_eps}
r_t \le 2\beta_t^{1/2}(\delta)\,\sigma_{t-1}(\x_t;\hat{\btheta}_{t-1}) + 2\bar{C}_L.
\end{equation}
By applying the inequality $(a+b)^2 \le 2(a^2+b^2)$ for any $a,b>0$, we obtain
\begin{equation}
r_t^2 \le 8\beta_t(\delta)\,\sigma_{t-1}^2(\x_t;\hat{\btheta}_{t-1}) + 8\bar{C}_L^2.
\end{equation}

Let $C_2 := \sigma^{-2} / \log(1+\sigma^{-2})$ and $u_t := \sigma^{-2}\sigma_{t-1}^2(\x_t;\hat{\btheta}_{t-1})$. Since $0 \le u_t \le \sigma^{-2}k_{\hat{\btheta}_{t-1}}(\x_t,\x_t) \le \sigma^{-2}$, and the function $u/\log(1+u)$ is monotonically increasing for $u>0$, we have $u_t \le C_2\log(1+u_t)$. It directly follows that:
\begin{equation}
r_t^2 \le 8\beta_T \sigma^2 C_2 \log\!\bigl(1+\sigma^{-2}\sigma_{t-1}^2(\x_t;\hat{\btheta}_{t-1})\bigr) + 8\bar{C}_L^2.
\end{equation}
Summing over $t=1,\dots,T$ gives
\begin{align}
\sum_{t=1}^T r_t^2 &\le 8\beta_T \sigma^2 C_2 \sum_{t=1}^T \log\!\bigl(1+\sigma^{-2}\sigma_{t-1}^2(\x_t;\hat{\btheta}_{t-1})\bigr) + 8T\bar{C}_L^2, \\
&= 8\beta_T \sigma^2 C_2 \log(1+\sigma^{-2}) \sum_{t=1}^T L_t^s(\hat{\btheta}_{t-1}) + 8T\bar{C}_L^2.
\end{align}
Since $\bar{C}_L = \mathcal{O}(T^{-1} \sqrt{\log T})$,  we have that $8T\bar{C}_L^2 = \mathcal{O}(T^{-1} \log T) = \tilde{\bigo}(T^{-1})$, to be vanishing asymptotically. By the sharpness-transfer inequality derived in \cref{lemma:convergence} and on the event $E_\mathrm{regret}$, it holds that
\begin{equation}
\sum_{t=1}^T L_t^s(\hat{\btheta}_{t-1}) \le \sum_{t=1}^T L_t^s(\btheta) + \tilde{\bigo}(\sqrt T).
\end{equation}
Substituting this into the previous display yields
\begin{equation}
\sum_{t=1}^T r_t^2 \le 8\beta_T \sigma^2 C_2 \log(1+\sigma^{-2}) \left( \sum_{t=1}^T L_t^s(\btheta) + \tilde{\bigo}(\sqrt{T}) \right) + \tilde{\bigo}(T^{-1}).
\end{equation}
Crucially, \cite[Lemma~5.4]{srinivas2009gaussian} is applied only after \cref{lemma:convergence}
has transferred the cumulative sharpness term from the online sequence
$\hat{\btheta}_{t-1}$ to the fixed true kernel $k_{\btheta}$.
By applying \cite[Lemma~5.4]{srinivas2009gaussian}, we obtain
\begin{equation}
\sum_{t=1}^T r_t^2 \le C_1 \beta_T \left( C_3 \gamma_{T}(\btheta) + \tilde{\bigo}(\sqrt{T}) \right).
\end{equation}
where $C_1 = 8\sigma^2 C_2 \log\!\left(1+\frac{1}{\sigma^2}\right), C_3 = 2/\log(1+\sigma^{-2})$. Finally, by applying the Cauchy--Schwarz inequality, $R_T^2 \le T \sum_{t=1}^T r_t^2$. Therefore, with a probability of at least $1-\delta$, the cumulative regret satisfies:
\begin{align}
R_T &\le \sqrt{T \left( C_1 \beta_T \bigl( C_3 \gamma_{T}(\btheta) + \tilde{\bigo}(\sqrt{T}) \bigr) \right)}, \\
&= \tilde{\bigo}\left( \sqrt{T \beta_T \gamma_T(\btheta)} + T^{3/4} \beta_T^{1/2} \right).
\end{align}

Substituting the exploration parameter $\beta_T^{1/2}(\delta) = \tilde{\bigo} \left(B + \sqrt{\Gamma_{T-1} + \log\!\bigl(\mathcal{N}_{\varepsilon_T}(\bTheta, d) / \delta\bigr)}\right)$, and absorbing standard constants into the poly-log notation, we conclude that:
\begin{equation}
R_T \le \tilde{\bigo} \left( \left(B + \sqrt{\Gamma_{T - 1} + \log\!\bigl(\mathcal{N}_{\varepsilon_T}(\bTheta, d) / \delta\bigr)} \right) \left( \sqrt{T \gamma_{T}(\btheta)} + T^{3/4} \right) \right),
\end{equation}
completing the proof.

\section{Experimental Details}\label{sec:expdetails}

\subsection{Baselines}
\label{sec:baseline-details}

We compare OSCBO against several BO baselines that adapt uncertainty in different ways. Unless otherwise stated, all GP-based methods use the same observation model as OSCBO: fixed output variance, fixed observation noise, fixed constant mean, normalized inputs, and lengthscale-only adaptation. All methods use the same initial design, oracle-call budget, and acquisition-optimization settings when applicable.

\textbf{GP-UCB-MLL~\cite{srinivas2009gaussian}.}
The standard GP-UCB baseline with marginal-likelihood hyperparameter refitting. At each BO round, the GP lengthscale is refit by maximizing the marginal log likelihood on all observations collected so far,
whereas OSCBO updates the lengthscale through the sharpness--calibration game.
The next query is selected by maximizing
\[
\mu_t(\x) + \sqrt{\beta}\sigma_t(\x),
\]
with fixed $\beta=2.0$.
%For consistency with our experimental setting, we use a scalar lengthscale version and keep amplitude
%and noise fixed.

\textbf{OCBO~\cite{deshpande2024calibratedregressionadversaryregret}.}
An online calibration baseline that performs post-hoc recalibration of GP predictive quantiles.
At each BO round, the GP lengthscale is first refit by MLL. Leave-one-out predictive distributions on the observations collected so far are formed, and their probability integral transform values are computed
\[
u_i = \Phi\!\left(
\frac{y_i-\mu_{-i}(\x_i)}
{\sqrt{\sigma^2_{-i}(\x_i)+\sigma_n^2}}
\right).
\]
The calibrated upper-quantile level is set to the empirical $p$-quantile of these values, with $p=1-\delta$.
The acquisition is then
\[
\mu_t(\x) + \Phi^{-1}(q_t)\sigma_t(\x),
\]
where $\sigma_t(\x)$ is the latent posterior standard deviation. Thus, OCBO recalibrates the scalar UCB width through an empirical quantile map, while the GP lengthscale remains MLL-driven.

% \textbf{OCBO~\cite{deshpande2024calibratedregressionadversaryregret}.}  An online calibration baseline that adapts the UCB confidence level through a quantile update. The GP lengthscale is first refit by MLL, then leave-one-out predictive distributions are used to update an internal quantile $q_t$ toward the target coverage level $p=1-\delta$. The UCB multiplier is then set to
% \[
% \beta_t = \max\{(\Phi^{-1}(q_t))^2,0.1\}.
% \]
% Thus, OCBO changes the acquisition width through an adaptive scalar confidence multiplier, while the GP lengthscale itself remains MLL-driven.

\textbf{A-GP-UCB~\cite{berkenkamp2019no}.}
An adaptive lengthscale schedule baseline. It initializes the lengthscale by MLL on the initial design and then progressively shrinks it according to a deterministic schedule,
\[
\theta_t \leftarrow \max\{\theta_{0}/g_t,\theta_{\min}\}.
\]
Unlike OSCBO, this method does not use calibration feedback; the lengthscale evolution is driven only by the prescribed schedule.

\textbf{LB-GP-UCB~\cite{ziomek2024bayesian}.}
 An evolution of A-GP-UCB, which maintains a candidate set of lengthscales and eliminates candidates whose lower-confidence performance is not competitive. The initial lengthscale is obtained by MLL on the initial design, after which the method constructs shorter candidate lengthscales according to its schedule and selects among them using confidence-corrected empirical rewards. We use the same fixed-noise, fixed-amplitude GP model as in the rest of the experiments.

\textbf{SCGP~\cite{capone2023sharp}.}
A sharp calibrated GP baseline. It splits the available observations into a regression set and a calibration set. The regression set is used to fit the posterior mean lengthscale by MLL, while the calibration set is used to learn a separate scalar uncertainty lengthscale and calibrated upper-quantile multiplier. Although the model was not tested on BO in the original publication, we extend it to BO by plugging this calibrated uncertainty into UCB as
\[
\mu_t(\x) + \beta_\delta \sigma_t^{\mathrm{cal}}(\x).
\]
%
%For consistency with our experimental setting, we use a scalar lengthscale version and keep amplitude and noise fixed.

\textbf{TabICLv2~\cite{qu2026tabiclv2}.}
A tabular foundation model baseline. We use TabICLv2 as a tabular regression surrogate inside the BO loop. At each round, the current normalized BO dataset is passed to the TabICLv2 regressor, which returns predictive quantiles for candidate points. We convert these quantiles into an empirical predictive mean and variance, and then optimize a UCB acquisition,
\[
\widehat{\mu}_t(\x)+\sqrt{\beta}\widehat{\sigma}_t(\x),
\]
directly in the normalized input domain using Sobol initial points followed by Adam updates.

\subsection{Hyperparameters}\label{sec:hyperparameter-details}

\begin{table}[H]
\centering
\tiny
\setlength{\tabcolsep}{3.8pt}
\renewcommand{\arraystretch}{0.87}
\begin{tabular}{@{}ll@{}}
\toprule
\multicolumn{2}{l}{\textbf{General}} \\ \midrule
Kernel & Matérn, $\nu=2.5$\\
Lengthscale & isotropic \\
Output variance & fixed to $1$ \\
Mean & fixed constant mean $0$ \\
$\noisevar$ & $0.01$ \\
$\vert \dataset_0 \vert$ & $10$ \\
$T$ & $100$ BO steps, $110$ total oracle calls \\
Acquisition & UCB \\
UCB $\beta$ & $2.0$ unless stated otherwise \\
GP refit optimizer & Adam, lr $=0.01$, $50$ steps \\
\texttt{BoTorch} acquisition optimizer & $q=1$, $5$ restarts, $20$ raw samples \\

\midrule
\multicolumn{2}{l}{\textbf{OSCBO}} \\ \midrule
$\hat{\rho}$ & $0.5$ \\
$\delta$ & $0.1$ \\
$\beta$ & $2.0$ \\
Primal RM lr & $0.01$ \\
Dual RM lr & $0.001$ \\
$\tilde{\rho}$ & $\max(150^{-1/4}, \hat{\rho}/2)$ \\
Bregman divergence & negative entropy \\
FTPL perturbation & $\xi_j \sim \mathcal{N}(0, 0.1^2)$ i.i.d. for each lengthscale dimension \\

\midrule
\multicolumn{2}{l}{\textbf{GP-UCB-MLL~\cite{srinivas2009gaussian}}} \\ \midrule
$\beta$ & $2.0$ \\
Hyperparameter update & MLL refit at every BO step \\

\midrule
\multicolumn{2}{l}{\textbf{SCGP~\cite{capone2023sharp}}} \\ \midrule
Confidence level & $1-\delta=0.9$ \\
Train/calibration split & calibration fraction $0.2$ \\
Minimum train/calibration sizes & $3$ / $2$ \\
Regression lengthscale & MLL fit, Adam lr $=0.01$, $50$ steps \\
Calibration lengthscale & Adam lr $=0.05$, $50$ steps, $5$ restarts \\
Calibration variance & includes observation noise \\

% \midrule
% \multicolumn{2}{l}{\textbf{OCBO~\cite{deshpande2024calibratedregressionadversaryregret}}} \\ \midrule
% $p$ & $1-\delta=0.9$ \\
% $\eta$ & $1.0$ \\
% Quantile update optimizer & Adam, lr $=0.01$, $50$ steps \\
% UCB $\beta_t$ & $\max\{(\Phi^{-1}(q_t))^2,0.1\}$ \\

\midrule
\multicolumn{2}{l}{\textbf{OCBO~\cite{deshpande2024calibratedregressionadversaryregret}}} \\ \midrule
$p$ & $1-\delta=0.9$ \\
GP hyperparameter update & MLL refit at every BO step \\
Recalibration set & leave-one-out probability integral transform values on current BO data \\
Calibrated level & $q_t=\operatorname{Quantile}_{p}(\{u_i\})$ \\
UCB form & $\mu_t(x)+\Phi^{-1}(q_t)\sigma_t(x)$ \\
Minimum calibration size & $3$ \\
%Probability clipping & $[10^{-4},\,1-10^{-4}]$ \\

\midrule
\multicolumn{2}{l}{\textbf{LB-GP-UCB~\cite{ziomek2024bayesian}}} \\ \midrule
$\delta$ & $0.1$ \\
$t_0$ & $5$ \\
$B$ & $1.0$ \\
$\theta_{\min}$ & $10^{-4}$ \\
% Initial lengthscale & MLL fit from $\dataset_0$ \\
% $g_t$ & $1$ if $t\leq t_0$, and $\max\{t_0,\sqrt{t}\}$ otherwise \\
% Update rule & lower-confidence lengthscale elimination over candidate set $\Theta_t$ \\
Initial lengthscale & MLL fit from $\dataset_0$, denoted $\theta_0$ \\
Candidate grid & $q(i)=\theta_0 e^{-i/d}$ \\
Candidate addition & add $q(\ell+1)$ when $q(\ell+1)\leq \theta_0/g_t$ \\
Selection rule & $\theta_t=\arg\min_{\theta\in\Theta_t}R^\theta(|S_{t-1}^\theta|+1)$ \\
Update rule & eliminate underperforming candidates from $\Theta_t$ using the lower-confidence test \\

\midrule
\multicolumn{2}{l}{\textbf{A-GP-UCB~\cite{berkenkamp2019no}}} \\ \midrule
$t_0$ & $5$ \\
$\theta_{\min}$ & $10^{-4}$ \\
Initial lengthscale & MLL fit from $\dataset_0$, denoted $\theta_0$ \\
$g_t$ & $1$ if $t\leq t_0$, and $\sqrt{t}$ otherwise \\
Update rule & $\theta_t \leftarrow \max\{\theta_0/g_t,\theta_{\min}\}$ \\

\midrule
\multicolumn{2}{l}{\textbf{TabICLv2~\cite{qu2026tabiclv2}}} \\ \midrule
Surrogate & TabICLv2 regression quantile model \\
Checkpoint & \texttt{tabicl-regressor-v2-20260212.ckpt} \\
Acquisition & UCB \\
UCB $\beta$ & $2.0$ \\
Acquisition optimizer & Adam, lr $=0.05$, $50$ steps \\
Restarts / raw samples & $5$ / $20$ \\
Batch size & $32$ \\

\bottomrule
\end{tabular}
\caption{Hyperparameter settings used in the experiments. GP amplitude, mean, and observation noise are fixed for all GP-based methods; only lengthscale parameters are adapted.}
\label{tab:hyperparameters}
\end{table}

\subsection{Kernels}\label{app:ibnn}

\paragraph{Matérn kernel.}
In the main experiments, we consider the Matérn kernel
\[
k_{\mathrm{Mat\acute{e}rn}}(\x,\x')
=
\sigma_f^2
\frac{2^{1-\nu}}{\Gamma(\nu)}
\left(\sqrt{2\nu\, r^2(\x,\x')}\right)^\nu
K_\nu\!\left(\sqrt{2\nu\, r^2(\x,\x')}\right),
\]
where $\nu>0$ controls smoothness, $K_\nu$ is the modified Bessel function of the second kind, and
\[
r^2(\x,\x')
=
(\x-\x')^\top \boldsymbol{\Lambda}^{-1}(\x-\x').
\]
$\sigma_f^2$ is fixed. In the isotropic setting, $\boldsymbol{\Lambda}=\theta^2 \mathbf{I}$; with ARD, $\boldsymbol{\Lambda}=\mathrm{diag}(\theta_1^2,\ldots,\theta_d^2)$.

\subsection{Hardware}\label{sec:hardware-details}

All experiments were conducted on a compute cluster equipped with four NVIDIA V100 GPUs (32 GB memory each), as well as on a MacBook Pro with an Apple M4 Pro chip.

\section{Test functions}\label{sec:testfunc}

\paragraph{Preprocessing (input/output normalization).}
Across all experiments, we use BoTorch's standard transforms to normalize inputs and standardize outputs~\cite{balandat2020botorch}. Inputs are mapped to the unit hypercube via \texttt{Normalize} (applied per dimension), and observed function values are standardized via \texttt{Standardize} before GP fitting, as is common practice for stable hyperparameter learning and comparable kernel scales across tasks.

\subsection{Synthetic test functions.}\label{sec:sb}
We use three standard synthetic benchmarks.

\textbf{Levy ($d{=}5$).}
For $x\in[-10,10]^d$, define $w_i = 1 + \frac{x_i-1}{4}$. We use the negated Levy function as a maximization objective,
\begin{align}
f_{\mathrm{Levy}}(x)
&= -\Bigg[
\sin^2(\pi w_1)
+ \sum_{i=1}^{d-1} (w_i-1)^2\left[1+10\sin^2\!\big(\pi w_i+1\big)\right]
+ (w_d-1)^2\left[1+\sin^2(2\pi w_d)\right]
\Bigg].
\end{align}

\textbf{Hartmann 3 ($d{=}3$).}
For $x\in[0,1]^3$, the Hartmann function is
\begin{align}
f_{\mathrm{H3}}(x)
&= - \sum_{i=1}^{4}\alpha_i \exp\!\left(-\sum_{j=1}^{3} A_{ij}(x_j-P_{ij})^2\right),
\end{align}
with
\[
\alpha = (1.0,\,1.2,\,3.0,\,3.2),
\qquad
A=
\begin{pmatrix}
3 & 10 & 30\\
0.1 & 10 & 35\\
3 & 10 & 30\\
0.1 & 10 & 35
\end{pmatrix},
\]
\[
P=
\begin{pmatrix}
0.3689 & 0.1170 & 0.2673\\
0.4699 & 0.4387 & 0.7470\\
0.1091 & 0.8732 & 0.5547\\
0.03815& 0.5743 & 0.8828
\end{pmatrix}.
\]

\textbf{Hartmann 6 ($d{=}6$).}
For $x\in[0,1]^6$,
\begin{align}
f_{\mathrm{H6}}(x)
&= - \sum_{i=1}^{4}\alpha_i \exp\!\left(-\sum_{j=1}^{6} A_{ij}(x_j-P_{ij})^2\right),
\end{align}
with
\[
\alpha = (1.0,\,1.2,\,3.0,\,3.2),
\qquad
A=
\begin{pmatrix}
10 & 3  & 17 & 3.5 & 1.7 & 8\\
0.05& 10 & 17 & 0.1 & 8   & 14\\
3  & 3  & 1.7& 10  & 17  & 8\\
17 & 8  & 0.05&10  & 0.1 & 14
\end{pmatrix},
\]
\[
P=
\begin{pmatrix}
0.1312 & 0.1696 & 0.5569 & 0.0124 & 0.8283 & 0.5886\\
0.2329 & 0.4135 & 0.8307 & 0.3736 & 0.1004 & 0.9991\\
0.2348 & 0.1451 & 0.3522 & 0.2883 & 0.3047 & 0.6650\\
0.4047 & 0.8828 & 0.8732 & 0.5743 & 0.1091 & 0.0381
\end{pmatrix}.
\]

% \textbf{Styblinski--Tang ($d{=}2$).}
% For $x\in[-5,5]^d$, the Styblinski--Tang function is
% \begin{align}
% f_{\mathrm{ST}}(x)
% &= \frac{1}{2}\sum_{i=1}^{d}\left(x_i^4 - 16x_i^2 + 5x_i\right).
% \end{align}

\subsection{Real-world benchmarks}\label{sec:rwb}

\textbf{Lunar Lander ($d{=}12$)}. The goal of this task is to find an optimal 12-dimensional control policy that allows an
autonomous lunar lander to consistently land without crashing. The final objective value we optimize is the
reward obtained by the policy averaged over a set of 50 random landing terrains. For this task, we use the same
controller setup used by~\cite{eriksson2019scalable}.

\subsection{Real-world tabular benchmarks}\label{sec:rwtb}

\textbf{Continuous oracle construction.}
For the tabular real-world benchmarks, we construct a deterministic continuous oracle from the observed experimental table, following~\cite[Appendix C]{yuan2026unleashing}. The search bounds are set to the coordinatewise minimum and maximum values observed in the table. Given a query $\x$, we clip it to these bounds and normalize each coordinate to $[0,1]$. If the observed designs form a complete rectangular grid, we use multilinear interpolation. Otherwise, we use $k$-nearest-neighbor inverse-distance weighting in normalized input space:
\[
\widehat f(\x)
=
\frac{\sum_{i \in \mathcal{N}_k(x)} w_i(\x)y_i}
{\sum_{i \in \mathcal{N}_k(x)} w_i(\x)},
\qquad
w_i(\x)
=
\left(\|\tilde \x-\tilde \x_i\|_2+\varepsilon\right)^{-p},
\]
with $k=12$, $p=2$, and $\varepsilon=10^{-12}$. If the query coincides with an observed design, the oracle returns the averaged observed value at that design. We use this protocol for Crossbarrel, Concrete, and Material.

\textbf{Material ($d{=}5$)}.
This task is adapted from the benchmark introduced by~\cite{mekki2021two} and later used by~\cite{ziomek2024bayesian}. It originates from a microfluidic nanomaterial-synthesis setting for discovering chemical “recipes” that yield silver nanoparticles with desired optical properties (absorbance spectrum). In our benchmark, this setting is distilled into a five-dimensional material design problem derived from experimental measurements and posed in a pool-based regime with only 164 available candidate compositions. The objective models a noisy performance metric linked to material composition. Following the original setup, we treat the task as continuous optimization and report both simple and cumulative regret over sequential evaluations

\begin{table}[h]
\centering
\caption{Material benchmark specification.}
\label{tab:material-spec}
\begin{tabular}{llll}
\toprule
Variable & Type & Range & Description \\
\midrule
$Q_{\mathrm{AgNO_3}}$ & continuous & $[4.53, 42.8098]$ & silver nitrate flow-rate ratio (\%) \\
$Q_{\mathrm{PVA}}$ & continuous & $[9.9995, 40.0010]$ & polyvinyl alcohol flow-rate ratio (\%) \\
$Q_{\mathrm{TSC}}$ & continuous & $[0.5, 30.5]$ & trisodium citrate flow-rate ratio (\%) \\
$Q_{\mathrm{seed}}$ & continuous & $[0.4989, 19.5]$ & silver-seed flow-rate ratio (\%) \\
$Q_{\mathrm{total}}$ & continuous & $[200.0, 983.0]$ & total oil/aqueous flow rate ($\mu$L/min) \\
Spectrum score & objective & maximize & similarity to target absorbance spectrum \\
\bottomrule
\end{tabular}
\end{table}

\textbf{Concrete ($d{=}7$).}
This task uses the Concrete compressive-strength benchmark from the UCI repository, following~\cite[Appendix C]{yuan2026unleashing}. Each input is a continuous mixture-design variable describing the amount of a concrete component, measured in kg/m$^3$. We exclude curing age, leaving seven mixture variables, and maximize the compressive strength. Since the dataset is tabular rather than directly queryable at arbitrary continuous inputs, we use the same continuous-oracle construction as for the other tabular real-world tasks.

\begin{table}[h]
\centering
\caption{Concrete benchmark specification.}
\label{tab:concrete-spec}
\begin{tabular}{llll}
\toprule
Variable & Type & Range & Description \\
\midrule
Cement & continuous & $[102.0, 540.0]$ & kg/m$^3$ \\
Blast furnace slag & continuous & $[0.0, 359.4]$ & kg/m$^3$ \\
Fly ash & continuous & $[0.0, 200.1]$ & kg/m$^3$ \\
Water & continuous & $[121.8, 247.0]$ & kg/m$^3$ \\
Superplasticizer & continuous & $[0.0, 32.2]$ & kg/m$^3$ \\
Coarse aggregate & continuous & $[801.0, 1145.0]$ & kg/m$^3$ \\
Fine aggregate & continuous & $[594.0, 992.6]$ & kg/m$^3$ \\
Compressive strength & objective & maximize & concrete strength \\
\bottomrule
\end{tabular}
\end{table}

\textbf{Crossbarrel ($d{=}4$).}
This task is based on the crossed-barrel structural-design benchmark of~\cite{gongora2020bayesian}, following~\cite[Appendix C]{yuan2026unleashing}. The dataset describes 3D-printed crossed-barrel structures whose geometry is controlled by four variables: the number of hollow columns $n$, the twist angle $\theta$, the outer radius $r$, and the wall thickness $t$. After printing, each structure is tested under uniaxial compression, and the objective is the mechanical toughness, measured as the area under the resulting force--displacement curve. We maximize toughness.

\begin{table}[h]
\centering
\caption{Crossbarrel benchmark specification.}
\label{tab:crossbarrel-spec}
\begin{tabular}{llll}
\toprule
Variable & Type & Range & Description \\
\midrule
$n$ & continuous & $[6.0, 12.0]$ & number of hollow columns \\
$\theta$ & continuous & $[0.0, 200.0]$ & twist angle of the columns (degrees) \\
$r$ & continuous & $[1.5, 2.5]$ & outer radius of the columns (mm) \\
$t$ & continuous & $[0.7, 1.4]$ & wall thickness of the hollow columns (mm) \\
toughness & objective & maximize & mechanical toughness \\
\bottomrule
\end{tabular}
\end{table}

%%%%%%%%%%%%%%%%%%%%%%%%%%%%%%%%%%%%%%%%%%%%%%%%%%%%%%%%%%%%
\clearpage
\newpage

\end{document}